\newtheorem{definition}{Definition}
\newtheorem{lemma}[definition]{Lemma}
\newtheorem{theorem}[definition]{Theorem}
\newtheorem{corollary}[definition]{Corollary}
\DeclareMathOperator{\Regret}{Regret}
\DeclareMathOperator{\Wealth}{Wealth}
\DeclareMathOperator{\Reward}{Reward}
\DeclareMathOperator*{\argmin}{arg\,min}
\DeclareMathOperator*{\argmax}{arg\,max}
\newcommand{\R}{\mathbb{R}}     % real numbers
\renewcommand{\H}{\mathcal{H}}  % Hilbert space
\newcommand{\KL}[2]{\operatorname{D}\left({#1}\middle\|{#2}\right)}  % KL divergence
\newcommand{\norm}[1]{\left\|{#1}\right\|}
\newcommand{\indicator}{\mathbf{1}}
\newcommand{\scO}{\mathcal{O}}  % big-Oh
\author{
  Francesco Orabona\\
  Stony Brook University, Stony Brook, NY \\
  \texttt{francesco@orabona.com}
  \and
  D\'avid P\'al\\
  Yahoo Research, New York, NY\\
  \texttt{dpal@yahoo-inc.com}
}
\title{Coin Betting and Parameter-Free Online Learning}
\begin{document}

\maketitle

\begin{abstract}
In the recent years, a number of parameter-free algorithms have been developed
for online linear optimization over Hilbert spaces and for learning with expert
advice.  These algorithms achieve optimal regret bounds that depend on the
unknown competitors, without having to tune the learning rates with oracle
choices.

We present a new intuitive framework to design parameter-free algorithms for \emph{both} online linear optimization over Hilbert spaces and for learning with expert
advice, based on reductions to betting on outcomes of adversarial coins. We instantiate it
using a betting algorithm based on the Krichevsky-Trofimov estimator.  The
resulting algorithms are simple, with no parameters to be tuned, and they
improve or match previous results in terms of regret guarantee and per-round
complexity.
\end{abstract}

\vspace{-0.2cm}

\section{Introduction}
\label{section:introduction}

\vspace{-0.2cm}

We consider the \ac{OLO}~\cite{Cesa-Bianchi-Lugosi-2006, Shalev-Shwartz-2011}
setting. In each round $t$, an algorithm chooses a point $w_t$ from a convex
\emph{decision set} $K$ and then receives a reward vector $g_t$. The
algorithm's goal is to keep its \emph{regret} small, defined as the difference
between its cumulative reward and the cumulative reward of a fixed strategy $u
\in K$, that is
\vspace{-.1cm}
\[
\Regret_T(u) = \sum_{t=1}^T \langle g_t, u \rangle - \sum_{t=1}^T \langle g_t, w_t \rangle \; .
\]
We focus on two particular decision sets, the $N$-dimensional probability
simplex $\Delta_N = \{ x \in \R^N ~:~ x \ge 0, \norm{x}_1 = 1\}$ and a
Hilbert space $\H$.  \ac{OLO} over $\Delta_N$ is referred to as the problem of
\ac{LEA}.  We assume bounds on the norms of the reward vectors: For \ac{OLO}
over $\H$, we assume that $\norm{g_t} \le 1$, and for \ac{LEA} we assume that
$g_t \in [0,1]^N$.

\vspace{-0.1cm}

\ac{OLO} is a basic building block of many machine learning problems. For
example, \ac{OCO}, the problem analogous to \ac{OLO} where $\langle g_t, u
\rangle$ is generalized to an arbitrary convex function $\ell_t(u)$, is solved
through a reduction to \ac{OLO}~\cite{Shalev-Shwartz-2011}.
\ac{LEA}~\cite{Littlestone-Warmuth-1994, Vovk-1998,
Cesa-Bianchi-Freund-Haussler-Helmbold-Schapire-Warmuth-1997} provides a way of
combining classifiers and it is at the heart of
boosting~\cite{Freund-Schapire-1997}. Batch and stochastic convex optimization
can also be solved through a reduction to \ac{OLO}~\cite{Shalev-Shwartz-2011}.

\vspace{-0.1cm}

To achieve optimal regret, most of the existing online algorithms require the
user to set the learning rate (step size) $\eta$ to an unknown/oracle
value.  For example, to obtain the optimal bound for \ac{OGD}, the learning
rate has to be set with the knowledge of the norm of the competitor $u$,
$\norm{u}$; second entry in Table~\ref{table:bounds}.  Likewise, the optimal
learning rate for Hedge depends on the KL divergence between the prior
weighting $\pi$ and the unknown competitor $u$, $\KL{u}{\pi}$; seventh entry in
Table~\ref{table:bounds}.  Recently, new parameter-free algorithms have been
proposed, both for \ac{LEA}~\cite{Chaudhuri-Freund-Hsu-2009, Chernov-Vovk-2010,
Luo-Schapire-2014, Luo-Schapire-2015, Koolen-van-Erven-2015,
Foster-Rakhlin-Sridharan-2015} and for \ac{OLO}/\ac{OCO} over Hilbert
spaces~\cite{Streeter-McMahan-2012, Orabona-2013, McMahan-Abernethy-2013,
McMahan-Orabona-2014, Orabona-2014}.  These algorithms adapt to the number of
experts and to the norm of the optimal predictor, respectively, without the
need to tune parameters. However, their \emph{design and underlying intuition}
is still a challenge.  \citet{Foster-Rakhlin-Sridharan-2015} proposed a unified
framework, but it is not constructive.  Furthermore, all existing algorithms
for LEA either have sub-optimal regret bound (e.g. extra $\scO(\log \log T)$
factor) or sub-optimal running time (e.g.  requiring solving a numerical
problem in every round, or with extra factors); see Table~\ref{table:bounds}.

\begin{table}
\centering
\resizebox{\linewidth}{!}{%
\begin{tabular}{l c c c c}
\toprule
Algorithm & Worst-case regret guarantee& \begin{tabular}{@{}c@{}}Per-round time\\complexity\end{tabular} & Adaptive & \begin{tabular}{@{}c@{}}Unified \\  analysis\end{tabular}\\
\midrule
OGD, $\eta=\tfrac{1}{\sqrt{T}}$ \cite{Shalev-Shwartz-2011} & $\scO((1 + \norm{u}^2)\sqrt{T})$, $\forall u \in \H$ & $\scO(1)$ &  \\
OGD, $\eta=\tfrac{U}{\sqrt{T}}$ \cite{Shalev-Shwartz-2011} & $U \sqrt{T}$ for any $u \in \H$ s.t. $\norm{u} \le U$ & $\scO(1)$ &  \\
\cite{Orabona-2013} & $\scO(\norm{u} \ln(1+\norm{u}T) \sqrt{T})$, $\forall u \in \H$ & $\scO(1)$ & \checkmark \\
\cite{McMahan-Orabona-2014,Orabona-2014} & $\scO(\norm{u}\sqrt{T \ln(1+\norm{u}T)})$, $\forall u \in \H$ & $\scO(1)$ & \checkmark \\
This paper, Sec.~\ref{section:kt-olo} & $\scO(\norm{u}\sqrt{T \ln(1+\norm{u}T)})$, $\forall u \in \H$ & $\scO(1)$ & \checkmark & \checkmark\\
\midrule
Hedge, $\eta=\sqrt{\tfrac{\ln N}{T}}$, $\pi_i=\tfrac{1}{N}$~\cite{Freund-Schapire-1997} & $\scO(\sqrt{T \ln N})$, $\forall u \in \Delta_N$ & $\scO(N)$ &  \\
Hedge, $\eta=\tfrac{U}{\sqrt{T}}$~\cite{Freund-Schapire-1997} & $\scO(U\sqrt{T})$ for any $u \in \Delta_N$ s.t. $\sqrt{\KL{u}{\pi}} \le U$ & $\scO(N)$ &  \\
\cite{Chaudhuri-Freund-Hsu-2009}  & $\scO(\sqrt{T (1+\KL{u}{\pi})}+\ln^2 N)$, $\forall u \in \Delta_N$ & $\scO(N\,K)$\footnotemark[1]& \checkmark \\
\cite{Chernov-Vovk-2010} & $\scO(\sqrt{T \left(1+\KL{u}{\pi}\right)})$, $\forall u \in \Delta_N$ & $\scO(N\,K)$\footnotemark[1] & \checkmark \\
\cite{Chernov-Vovk-2010, Luo-Schapire-2015,Koolen-van-Erven-2015}\footnotemark[2] & $\scO(\sqrt{T \left(\ln \ln T+\KL{u}{\pi}\right)})$, $\forall u \in \Delta_N$ & $\scO(N)$ & \checkmark \\
\cite{Foster-Rakhlin-Sridharan-2015} & $\scO(\sqrt{T \left(1+\KL{u}{\pi}\right)})$, $\forall u \in \Delta_N$ & $\scO(N \ln \max_{u \in \Delta_N} \KL{u}{\pi})$\footnotemark[3] & \checkmark & \checkmark \\
This paper, Sec.~\ref{section:kt-lea} & $\scO(\sqrt{T \left(1+\KL{u}{\pi}\right)})$, $\forall u \in \Delta_N$ & $\scO(N)$ & \checkmark & \checkmark\\
\bottomrule
\end{tabular}}
\caption{\footnotesize{Algorithms for \ac{OLO} over Hilbert space and \ac{LEA}.
%We apologize for the big, but necessary, number of symbols used.
%All of them are defined in Section~\ref{section:preliminaries}, while $\eta$ is the learning rate.
}}
\label{table:bounds}
\vspace{-0.7cm}
\end{table}
\footnotetext[1]{These algorithms require to solve a numerical problem at each step. The number $K$ is the number of steps needed to reach the required precision. Neither the precision nor $K$ are calculated in these papers.
%In general, $K$ is a function of the number rounds $T$ and the number of experts $N$.
}
\footnotetext[2]{The proof in \cite{Koolen-van-Erven-2015} can be modified to prove a KL bound, see \url{http://blog.wouterkoolen.info}.}
\footnotetext[3]{A variant of the algorithm in \cite{Foster-Rakhlin-Sridharan-2015} can be implemented with the stated time complexity~\cite{Foster:private}.}

\textbf{Contributions.} We show that a more fundamental notion subsumes
\emph{both} \ac{OLO} and \ac{LEA} parameter-free algorithms. We prove that the
ability to maximize the wealth in bets on the outcomes of coin flips
\emph{implies} \ac{OLO} and \ac{LEA} parameter-free algorithms. We develop a
novel potential-based framework for betting algorithms. It gives intuition to
previous constructions and, instantiated with the Krichevsky-Trofimov
estimator, provides new and elegant algorithms for \ac{OLO} and \ac{LEA}.  The
new algorithms also have optimal worst-case guarantees on regret and time
complexity; see Table~\ref{table:bounds}.

\section{Preliminaries}
\label{section:preliminaries}

We begin by providing some definitions.  The \ac{KL} divergence between two
discrete distributions $p$ and $q$ is $\KL{p}{q} = \sum_{i} p_i \ln\left(
p_i/q_i \right)$. If $p,q$ are real numbers in $[0,1]$, we denote by $\KL{p}{q} =
p \ln \left(p/q \right) + (1-p) \ln \left((1-p)/(1-q) \right)$ the \ac{KL}
divergence between two Bernoulli distributions with parameters $p$ and $q$.  We
denote by $\H$ a Hilbert space, by $\langle \cdot, \cdot\rangle$ its inner
product, and by $\norm{\cdot}$ the induced norm.  We denote by $\norm{\cdot}_1$
the $1$-norm in $\R^N$.  A function $F:I \to \R_+$ is called
\emph{logarithmically convex} iff $f(x) = \ln(F(x))$ is convex.  Let $f:V \to \R
\cup \{\pm\infty\}$, the Fenchel conjugate of $f$ is $f^*:V^* \to \R \cup \{\pm
\infty\}$ defined on the dual vector space $V^*$ by $f^*(\theta) = \sup_{x \in
V} \ \langle \theta, x \rangle - f(x)$.  A function $f:V \to \R \cup
\{+\infty\}$ is said to be \emph{proper} if there exists $x \in V$ such that
$f(x)$ is finite.  If $f$ is a proper lower semi-continuous convex function then
$f^*$ is also proper lower semi-continuous convex and $f^{**}=f$.

\textbf{Coin Betting.} We consider a gambler making repeated bets on the
outcomes of adversarial coin flips. The gambler starts with an initial
endowment $\epsilon > 0$. In each round $t$, he bets on the outcome of a coin
flip $g_t \in \{-1,1\}$, where $+1$ denotes heads and $-1$ denotes tails.  We
do not make any assumption on how $g_t$ is generated, that is, it can be chosen
by an adversary.

The gambler can bet any amount on either heads or tails. However, he is not
allowed to borrow any additional money. If he loses, he loses the betted
amount; if he wins, he gets the betted amount back and, in addition to that, he
gets the same amount as a reward.  We encode the gambler's bet in round $t$ by
a single number $w_t$. The sign of $w_t$ encodes whether he is betting on heads
or tails. The absolute value encodes the betted amount.  We define $\Wealth_t$
as the gambler's wealth at the end of round $t$ and $\Reward_t$ as the
gambler's net reward (the difference of wealth and initial endowment), that is
\begin{align}
\label{equation:def_wealth_reward}
\Wealth_t = \epsilon + \sum_{i=1}^t w_i g_i &
& \text{and} &
& \Reward_t = \Wealth_t - \ \epsilon \; .
\end{align}
In the following, we will also refer to a bet with $\beta_t$, where $\beta_t$
is such that
\begin{equation}
\label{equation:def_wt}
w_t = \beta_t \Wealth_{t-1} \; .
\end{equation}
The absolute value of $\beta_t$ is the \emph{fraction} of the current wealth to
bet, and sign of $\beta_t$ encodes whether he is betting on heads or tails. The
constraint that the gambler cannot borrow money implies that $\beta_t \in
[-1,1]$.
We also generalize the problem slightly by allowing the outcome of the coin flip
$g_t$ to be any real number in the interval $[-1,1]$; wealth and reward in~\eqref{equation:def_wealth_reward} remain exactly
the same.

\section{Warm-Up: From Betting to One-Dimensional Online Linear Optimization}
\label{section:one-dimensional-hilbert-space-olo}

In this section, we sketch how to reduce one-dimensional OLO to betting on a
coin. The reasoning for generic Hilbert spaces
(Section~\ref{section:reduction_hilbert}) and for LEA
(Section~\ref{section:reduction-experts}) will be similar. We will show that the
betting view provides a natural way for the analysis and design of online
learning algorithms, where the only design choice is the potential function of
the betting algorithm (Section~\ref{section:coin-betting-potentials}). A
specific example of coin betting potential and the resulting algorithms are in
Section~\ref{section:kt-estimator}.

As a warm-up, let us consider an algorithm for OLO over one-dimensional Hilbert
space $\R$.  Let $\{w_t\}_{t=1}^\infty$ be its sequence of predictions on a
sequence of rewards $\{g_t\}_{t=1}^\infty$, $g_t \in [-1,1]$. The total reward
of the algorithm after $t$ rounds is $\Reward_t = \sum_{i=1}^t g_i w_i$. Also,
even if in OLO there is no concept of ``wealth'', define the wealth of the OLO
algorithm as $\Wealth_t = \epsilon + \Reward_t$, as in
\eqref{equation:def_wealth_reward}.

We now restrict our attention to algorithms whose predictions $w_t$ are of the
form of a bet, that is $w_t = \beta_t \Wealth_{t-1}$, where $\beta_t \in
[-1,1]$.  We will see that the restriction on $\beta_t$ does not prevent us
from obtaining parameter-free algorithms with optimal bounds.

Given the above, it is immediate to see that any coin betting algorithm that,
on a sequence of coin flips $\{g_t\}_{t=1}^\infty$, $g_t \in [-1,1]$, bets the
amounts $w_t$ can be used as an OLO algorithm in a one-dimensional Hilbert
space $\R$. But, what would be the regret of such OLO algorithms?

Assume that the betting algorithm at hand guarantees that its wealth is at least
$F(\sum_{t=1}^T g_t)$ starting from an endowment $\epsilon$, for a given
potential function $F$, then
\vspace{-.1cm}
\begin{equation}
\label{equation:one-dimensional-olo-reward-lower-bound}
\Reward_T
= \sum_{t=1}^T g_t w_t
= \Wealth_T \ - \ \epsilon \ge F\left(\sum_{t=1}^T g_t \right) \ - \ \epsilon \; .
\end{equation}
Intuitively, if the reward is big  we can expect the regret to be small. Indeed,
the following lemma converts the lower bound on the reward to an upper bound on
the regret.
\begin{lemma}[Reward-Regret relationship~\cite{McMahan-Orabona-2014}]
\label{lemma:reward-regret}
Let $V,V^*$ be a pair of dual vector spaces. Let $F:V \to \R \cup \{+\infty\}$
be a proper convex lower semi-continuous function and let $F^*:V^* \to \R \cup
\{+\infty\}$ be its Fenchel conjugate. Let $w_1, w_2, \dots, w_T \in V$ and
$g_1, g_2, \dots, g_T \in V^*$. Let $\epsilon \in \R$. Then,
\[
\underbrace{\sum_{t=1}^T \langle g_t, w_t \rangle}_{\Reward_T} \ge F\left( \sum_{t=1}^T g_t \right) -\epsilon
\qquad \text{if and only if} \qquad
\forall u \in V^*, \quad
\underbrace{\sum_{t=1}^T \langle g_t, u - w_t\rangle}_{\Regret_T(u)} \le F^*(u) + \epsilon\; .
\]
\end{lemma}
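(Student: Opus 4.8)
The plan is to notice that, up to the additive constant $\epsilon$, both sides of the equivalence are statements about the single vector $\sum_{t=1}^T g_t$ paired against competitors, so the bridge between them is exactly the Fenchel conjugate. First I would rewrite the regret using bilinearity of the duality pairing: since $\sum_{t=1}^T \langle g_t, u\rangle = \langle \sum_{t=1}^T g_t, u\rangle$, writing $S := \sum_{t=1}^T g_t$ and recalling $\Reward_T = \sum_{t=1}^T \langle g_t, w_t\rangle$ gives
\[
\Regret_T(u) = \sum_{t=1}^T \langle g_t, u - w_t\rangle = \langle S, u\rangle - \Reward_T \; .
\]
In this notation the reward hypothesis reads $\Reward_T \ge F(S) - \epsilon$ and the regret conclusion reads $\langle S, u\rangle - \Reward_T \le F^*(u) + \epsilon$ for all $u$, so everything reduces to comparing $\Reward_T$ with $F(S)$ through conjugacy.

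For the forward direction I would assume $\Reward_T \ge F(S) - \epsilon$, i.e. $-\Reward_T \le \epsilon - F(S)$, and substitute into the expression for $\Regret_T(u)$ to obtain $\Regret_T(u) \le \langle S, u\rangle - F(S) + \epsilon$. The Fenchel-Young inequality, which is just the definition $F^*(u) = \sup_x \langle u, x\rangle - F(x)$ evaluated at the particular point $x = S$, gives $\langle S, u\rangle - F(S) \le F^*(u)$, and hence $\Regret_T(u) \le F^*(u) + \epsilon$ for every $u$. This direction uses only the definition of the conjugate and requires no regularity assumptions on $F$.

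For the converse I would assume $\langle S, u\rangle - \Reward_T \le F^*(u) + \epsilon$ for all $u$, rearrange to $\langle S, u\rangle - F^*(u) \le \Reward_T + \epsilon$, and then take the supremum over $u$. The left-hand side becomes $\sup_u \left[ \langle S, u\rangle - F^*(u) \right] = F^{**}(S)$, yielding $F^{**}(S) \le \Reward_T + \epsilon$. The main (and essentially only nontrivial) step lives here: to replace $F^{**}(S)$ by $F(S)$ I invoke biconjugation, $F^{**} = F$, which is precisely why the hypotheses that $F$ be proper, convex, and lower semi-continuous are imposed, and which is already recorded in the Preliminaries. This gives $F(S) \le \Reward_T + \epsilon$, i.e. $\Reward_T \ge F(S) - \epsilon$, closing the equivalence. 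The whole argument is thus a repackaging of Fenchel-Young in one direction and of the Fenchel-Moreau biconjugation theorem in the other; the one point to keep honest is the bookkeeping of which space ($V$ or $V^*$) each of $S$, $u$, and the arguments of $F$ and $F^*$ inhabit, so that every pairing and every supremum is taken over the correct dual space.
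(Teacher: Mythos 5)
Your proof is correct and complete; note that the paper itself states this lemma without proof, citing \citet{McMahan-Orabona-2014}, and your argument is exactly the canonical one from that reference: Fenchel--Young at the point $S=\sum_{t=1}^T g_t$ for the forward direction, and taking the supremum over $u$ plus Fenchel--Moreau biconjugation $F^{**}=F$ (which is precisely where proper, convex, and lower semi-continuous are used, as the paper's Preliminaries record) for the converse. Your closing caveat about which of $V$, $V^*$ hosts $S$, $u$, and the arguments of $F$, $F^*$ is well taken --- the lemma as stated only typechecks under a self-dual identification, which is harmless here since the paper applies it exclusively with $V = \H$ a Hilbert space (or $\R$), where $F(\norm{\cdot})$ and $F^*(\norm{\cdot})$ are paired as in the proof of Theorem~\ref{theorem:hilbert-space-olo-regret-bound}.
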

\vspace{-.1cm}
Applying the lemma, we get a regret upper bound:
$\Regret_T(u) \le F^*(u) + \epsilon$ for all $u \in \H$.

To summarize, if we have a betting algorithm that guarantees a minimum wealth
of $F(\sum_{t=1}^T g_t)$, it can be used to design and analyze a
one-dimensional \ac{OLO} algorithm. The faster the growth of the wealth, the
smaller the regret will be.  Moreover, the lemma also shows that trying to
design an algorithm that is adaptive to $u$ is \emph{equivalent} to designing
an algorithm that is adaptive to $\sum_{t=1}^T g_t$.  Also, most importantly,
\emph{methods that guarantee optimal wealth for the betting scenario are
already known}, see, e.g., \cite[Chapter 9]{Cesa-Bianchi-Lugosi-2006}. We can
just re-use them to get optimal online algorithms!

\section{Designing a Betting Algorithm: Coin Betting Potentials}
\label{section:coin-betting-potentials}

%In this section, we will provide a framework to analyze betting algorithms.

For sequential betting on i.i.d. coin flips, an optimal strategy has been
proposed by \citet{Kelly-1956}.  The strategy assumes that the coin flips
$\{g_t\}_{t=1}^\infty$, $g_t \in \{+1,-1\}$, are generated i.i.d. with known
probability of heads. If $p \in [0,1]$ is the probability of heads, the Kelly
bet is to bet $\beta_t = 2p - 1$ at each round. He showed that, in the long
run, this strategy will provide more wealth than betting any other fixed
fraction of the current wealth~\cite{Kelly-1956}.

For adversarial coins, Kelly betting does not make sense. With perfect
knowledge of the future, the gambler could always bet everything on the right
outcome.  Hence, after $T$ rounds from an initial endowment $\epsilon$, the
maximum wealth he would get is $\epsilon 2^T$.  Instead, assume he bets the
same fraction $\beta$ of its wealth at each round.  Let $\Wealth_t(\beta)$ the
wealth of such strategy after $t$ rounds.  As observed in
\cite{McMahan-Abernethy-2013}, the optimal fixed fraction to bet is
$\beta^*=(\sum_{t=1}^T g_t)/T$ and it gives the wealth
\begin{equation}
\label{eq:opt_wealth}
\Wealth_T(\beta^*)
= \epsilon \exp\left(T \cdot \KL{\tfrac{1}{2}+\tfrac{\sum_{t=1}^T g_t}{2T}}{\tfrac{1}{2}}\right)
\ge \epsilon \exp\left(\tfrac{(\sum_{t=1}^T g_t)^2}{2 T}\right) \; ,
\end{equation}
where the inequality follows from Pinsker's inequality~\citep[Lemma
11.6.1]{Cover-Thomas-2006}.

However, even without knowledge of the future, it is possible to go very close
to the wealth in \eqref{eq:opt_wealth}.  This problem was studied by
\citet{Krichevsky-Trofimov-1981}, who proposed that after seeing the coin flips
$g_1, g_2, \dots, g_{t-1}$ the empirical estimate $k_t = \frac{1/2 +
\sum_{i=1}^{t-1} \indicator[g_i = +1]}{t}$ should be used instead of $p$. Their
estimate is commonly called \emph{KT estimator}.\footnote{Compared to the
maximum likelihood estimate $\frac{\sum_{i=1}^{t-1} \indicator[g_i =
+1]}{t-1}$, KT estimator shrinks slightly towards $\nicefrac{1}{2}$.} The KT
estimator results in the betting
\begin{equation}
\label{equation:kt-estimator-betting-strategy}
\beta_t = 2k_t - 1 = \tfrac{\sum_{i=1}^{t-1} g_i}{t}
\end{equation}
which we call \emph{adaptive Kelly betting based on the KT estimator}. It looks
like an online and slightly biased version of the oracle choice of $\beta^*$.
This strategy guarantees\footnote{See Appendix~\ref{section:logloss-to-wealth}
for a proof. For lack of space, all the appendices are in the supplementary
material.}
\[
\Wealth_T
\ge \tfrac{\Wealth_T(\beta^*)}{2\sqrt{T}}
= \tfrac{\epsilon}{2\sqrt{T}} \exp\left(T \cdot \KL{\tfrac{1}{2}+\tfrac{\sum_{t=1}^T g_t}{2T}}{\tfrac{1}{2}}\right)\; .
\]
This guarantee is optimal up to constant
factors~\citep{Cesa-Bianchi-Lugosi-2006} and mirrors the guarantee of the Kelly
bet.

Here, we propose a new set of definitions that allows to generalize the
strategy of adaptive Kelly betting based on the KT estimator. For these strategies
it will be possible to prove that, for any $g_1, g_2, \dots, g_t \in [-1,1]$,
\vspace{-0.3cm}
\begin{equation}
\label{equation:wealth-lower-bound-generic}
\Wealth_t \ge F_t \left( \sum_{i=1}^t g_i \right) \; ,
\end{equation}
where $F_t(x)$ is a certain function. We call such functions \emph{potentials}.
The betting strategy will be determined uniquely by the potential (see (c) in
the Definition~\ref{definition:potential}), and we restrict our attention to
potentials for which \eqref{equation:wealth-lower-bound-generic} holds. These
constraints are specified in the definition below.
\begin{definition}[Coin Betting Potential]
\label{definition:potential}
Let $\epsilon > 0$. Let $\{F_t\}_{t=0}^\infty$ be a sequence of functions
$F_t:(-a_t, a_t)  \to \R_+$ where $a_t > t$.  The sequence
$\{F_t\}_{t=0}^\infty$ is called a \textbf{sequence of coin betting potentials
for initial endowment $\epsilon$}, if it satisfies the following three
conditions:
\begin{enumerate}[(a)]
\item $F_0(0) = \epsilon$.

\item For every $t \ge 0$, $F_t(x)$ is even, logarithmically convex, strictly
increasing on $[0,a_t)$, and
$\lim_{x \to a_t} F_t(x) = +\infty$.
\item For every $t \ge 1$, every $x \in [-(t-1), (t-1)]$ and every $g \in [-1,1]$, $\left(1 + g \beta_t \right) F_{t-1}(x) \ge F_t(x+g)$, where
\begin{equation}
\label{equation:potential-based-strategy}
\beta_t=\tfrac{F_t(x + 1) - F_t(x - 1)}{F_t(x + 1) + F_t(x - 1)} \;.
\end{equation}
\end{enumerate}
The sequence $\{F_t\}_{t=0}^\infty$ is called a
\textbf{sequence of excellent coin betting potentials for initial
endowment $\epsilon$} if it satisfies conditions (a)--(c) and the condition (d)
below.
\begin{enumerate}[(a)]
\setcounter{enumi}{3}
\item For every $t \ge 0$, $F_t$ is twice-differentiable and
satisfies $x \cdot F_t''(x) \ge F_t'(x)$ for every $x \in [0,a_t)$.
\end{enumerate}
\end{definition}
Let's give some intuition on this definition.  First, let's show by induction
on $t$ that (b) and (c) of the definition together with \eqref{equation:def_wt} give a betting strategy that satisfies
\eqref{equation:wealth-lower-bound-generic}. The base case $t=0$ is trivial. At
time $t \ge 1$, bet $w_t=\beta_t \Wealth_{t-1}$
where $\beta_t$ is defined in \eqref{equation:potential-based-strategy}, then
\begin{align*}
\Wealth_t
&= \Wealth_{t-1} + w_t g_t
= (1+g_t \beta_t) \Wealth_{t-1} \\
&\ge (1 + g_t \beta_t) F_{t-1} \left(\sum_{i=1}^{t-1} g_i \right)
\ge F_t \left(\sum_{i=1}^{t-1} g_i + g_t \right)
= F_t \left( \sum_{i=1}^t g_i \right) \; .
\end{align*}
The formula for the potential-based
strategy~\eqref{equation:potential-based-strategy} might seem strange. However,
it is derived---see Theorem~\ref{theorem:optimal-betting-fraction}
in Appendix~\ref{section:optimal-betting-fraction}---by minimizing the
worst-case value of the right-hand side of the inequality used w.r.t. to $g_t$
in the induction proof above: $F_{t-1}(x) \ge \tfrac{F_{t}(x +
g_t)}{1+g_t\beta_t}$.

The last point, (d), is a technical condition that allows us to seamlessly
reduce OLO over a Hilbert space to the one-dimensional problem, characterizing
the worst case direction for the reward vectors.

Regarding the design of coin betting potentials, we expect any potential that
approximates the best possible wealth in \eqref{eq:opt_wealth} to be a good
candidate.  In fact, $F_t(x)=\epsilon \exp \left(x^2/(2t)\right)/\sqrt{t}$,
essentially the potential used in the parameter-free algorithms in
\cite{McMahan-Orabona-2014, Orabona-2014} for \ac{OLO} and in
\cite{Chaudhuri-Freund-Hsu-2009, Luo-Schapire-2014, Luo-Schapire-2015} for
\ac{LEA}, approximates \eqref{eq:opt_wealth} and it is an excellent coin
betting potential---see Theorem~\ref{thm:exp_x2} in
Appendix~\ref{section:optimal-betting-fraction}. Hence, our framework provides
intuition to previous constructions and in Section~\ref{section:kt-estimator}
we show new examples of coin betting potentials.

In the next two sections, we presents the reductions to effortlessly solve \emph{both} the generic \ac{OLO} case and \ac{LEA} with a betting potential.

\section{From Coin Betting to OLO over Hilbert Space}
\label{section:reduction_hilbert}

In this section, generalizing the one-dimensional construction in
Section~\ref{section:one-dimensional-hilbert-space-olo}, we show how to use a
sequence of excellent coin betting potentials $\{F_t\}_{t=0}^\infty$ to
construct an algorithm for \ac{OLO} over a Hilbert space and how to prove a
regret bound for it.

We define reward and wealth analogously to the one-dimensional case:
$\Reward_t = \sum_{i=1}^t \langle g_i, w_i \rangle$ and $\Wealth_t = \epsilon +
\Reward_t$. Given a sequence of coin betting potentials $\{F_t\}_{t=0}^\infty$,
using \eqref{equation:potential-based-strategy} we define the fraction
\begin{equation}
\label{equation:potential-based-strategy-hilbert-space}
\beta_t
= \tfrac{F_t \left(\norm{\sum_{i=1}^{t-1} g_i} + 1\right) - F_t\left(\norm{\sum_{i=1}^{t-1} g_i} - 1 \right)}{F_t\left(\norm{\sum_{i=1}^{t-1} g_i} + 1 \right) + F_t\left(\norm{\sum_{i=1}^{t-1} g_i} - 1 \right)} \; .
\end{equation}
The prediction of the OLO algorithm is defined similarly to the one-dimensional case, but now we also need a direction in the Hilbert space:
\begin{equation}
\label{equation:hilbert-space-olo}
w_t
= \beta_t \Wealth_{t-1} \frac{\sum_{i=1}^{t-1} g_i}{\norm{\sum_{i=1}^{t-1} g_i}}
= \beta_t \frac{\sum_{i=1}^{t-1} g_i}{\norm{\sum_{i=1}^{t-1} g_i}} \left(\epsilon+ \sum_{i=1}^{t-1} \langle g_i, w_i\rangle \right) \; .
\end{equation}
\vspace{-0.01cm}
If $\sum_{i=1}^{t-1} g_i$ is the zero vector, we define $w_t$ to be the zero vector
as well.  For this prediction strategy we can prove the following regret
guarantee, proved in Appendix~\ref{section:hilbert-space-reduction}.  The proof
reduces the general Hilbert case to the 1-d case, thanks to (d) in
Definition~\ref{definition:potential}, then it follows the reasoning of
Section~\ref{section:one-dimensional-hilbert-space-olo}.
\begin{theorem}[Regret Bound for OLO in Hilbert Spaces]
\label{theorem:hilbert-space-olo-regret-bound}
Let $\{F_t\}_{t=0}^\infty$ be a sequence of excellent coin betting potentials.
Let $\{g_t\}_{t=1}^\infty$ be any sequence of reward vectors in a Hilbert space
$\H$ such that $\norm{g_t} \le 1$ for all $t$. Then, the algorithm that makes
prediction $w_t$ defined by \eqref{equation:hilbert-space-olo} and
\eqref{equation:potential-based-strategy-hilbert-space} satisfies
\[
\forall T \ge 0 \quad
\forall u \in \H \qquad \qquad
\Regret_T(u) \le F_T^*\left(\norm{u} \right) \ + \ \epsilon \; .
\]
\end{theorem}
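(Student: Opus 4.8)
The plan is to prove the statement in two stages, mirroring the one-dimensional warm-up of Section~\ref{section:one-dimensional-hilbert-space-olo}. First I would establish the wealth lower bound
\[
\Wealth_t \ge F_t\!\left(\Big\|\sum_{i=1}^t g_i\Big\|\right) \qquad \text{for all } t \ge 0,
\]
and then feed it into the Reward--Regret relationship of Lemma~\ref{lemma:reward-regret}, computing the relevant Fenchel conjugate to recover the stated regret bound.

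For the first stage I would argue by induction on $t$, the base case $t=0$ being condition (a). Write $\theta_{t-1} = \sum_{i=1}^{t-1} g_i$ and $r = \norm{\theta_{t-1}} \le t-1$, and let $a = \langle g_t, \theta_{t-1}/\norm{\theta_{t-1}}\rangle$ be the component of $g_t$ along the current direction, so that $a \in [-1,1]$ by Cauchy--Schwarz since $\norm{g_t}\le 1$ (when $\theta_{t-1}=0$ we have $\beta_t=0$ and $w_t=0$, and the step is trivial). The prediction rule makes the wealth update $\Wealth_t = (1 + a\beta_t)\Wealth_{t-1}$, and since $\beta_t \in (-1,1)$ we have $1 + a\beta_t > 0$; the inductive hypothesis then gives $\Wealth_t \ge (1 + a\beta_t) F_{t-1}(r)$. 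It therefore remains to show $(1 + a\beta_t)F_{t-1}(r) \ge F_t(\norm{\theta_t})$.

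This last inequality is the Hilbert-to-one-dimensional reduction, and it is where condition (d) enters; I expect it to be the crux of the argument. The perpendicular part of $g_t$ can only enlarge $\norm{\theta_t}$ and thus make the inequality harder, so I must show the worst case is the purely one-dimensional one. Decomposing $g_t$ into its components parallel and perpendicular to $\theta_{t-1}$ and using $\norm{g_t}\le 1$, a direct computation gives
\[
\norm{\theta_t}^2 = (r+a)^2 + \norm{g_t^\perp}^2 \le r^2 + 2ra + 1 = \tfrac{1-a}{2}(r-1)^2 + \tfrac{1+a}{2}(r+1)^2,
\]
exhibiting $\norm{\theta_t}^2$ as at most a convex combination of $(r-1)^2$ and $(r+1)^2$. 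Condition (d) is exactly what makes $s \mapsto F_t(\sqrt{s})$ convex on $[0,a_t^2)$: from $x F_t''(x) \ge F_t'(x)$ one checks its second derivative is nonnegative, and this function is also increasing. Applying it to the displayed bound, followed by Jensen's inequality and the evenness of $F_t$, yields
\[
F_t(\norm{\theta_t}) \le \tfrac{1-a}{2}F_t(r-1) + \tfrac{1+a}{2}F_t(r+1) = \tfrac{F_t(r+1)+F_t(r-1)}{2}\,(1 + a\beta_t),
\]
where the last equality is merely the definition of $\beta_t$ in \eqref{equation:potential-based-strategy-hilbert-space}. Finally, adding the two instances $g = \pm 1$ of condition (c) gives $\tfrac{1}{2}(F_t(r+1)+F_t(r-1)) \le F_{t-1}(r)$, and combining the last two displays closes the induction.

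For the second stage, set $\tilde F(x) = F_T(\norm{x})$ on $\H$. The first stage together with $\Reward_T = \Wealth_T - \epsilon$ gives $\Reward_T \ge \tilde F(\sum_{t=1}^T g_t) - \epsilon$, so Lemma~\ref{lemma:reward-regret} yields $\Regret_T(u) \le \tilde F^*(u) + \epsilon$. It then remains to verify $\tilde F^*(u) = F_T^*(\norm{u})$: since $\langle u, x\rangle \le \norm{u}\,\norm{x}$ with equality for $x$ parallel to $u$, the supremum defining $\tilde F^*(u)$ collapses to $\sup_{s \ge 0} \norm{u}\,s - F_T(s)$, which equals $F_T^*(\norm{u})$ because $F_T$ is even. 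This delivers the claimed bound, and the only genuinely delicate point is the convexity-based reduction in the inductive step above.
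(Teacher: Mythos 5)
Your proposal is correct, and while its skeleton matches the paper's (induction to establish $\Wealth_t \ge F_t\left(\norm{\sum_{i=1}^t g_i}\right)$, then Lemma~\ref{lemma:reward-regret} together with the fact that the conjugate of $F_T(\norm{\cdot})$ is $F_T^*(\norm{\cdot})$), your handling of the crux --- the inductive step $(1+a\beta_t)F_{t-1}(r) \ge F_t(\norm{\theta_{t-1}+g_t})$ --- takes a genuinely different route. The paper proves this via Lemma~\ref{lemma:recursion_hilbert}, which rests on a separate ``extremes'' lemma (Lemma~\ref{lemma:extremes}, imported from McMahan--Orabona): fixing $\norm{g_t}$ and varying the cosine $\alpha$ of the angle between $g_t$ and $\theta_{t-1}$, condition (d) makes the relevant expression concave in $\alpha$ via a second-derivative computation, so the worst case is $\alpha=\pm1$, after which property (c) is invoked at the intermediate points $g = \pm\norm{g_t}$. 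You instead use condition (d) in the equivalent form that $s \mapsto F_t(\sqrt{s})$ is convex and increasing, absorb both the perpendicular component of $g_t$ and the bound $\norm{g_t}\le 1$ in the single algebraic estimate $\norm{\theta_t}^2 \le \tfrac{1-a}{2}(r-1)^2 + \tfrac{1+a}{2}(r+1)^2$, and finish with Jensen (plus evenness of $F_t$) and only the averaged $g=\pm1$ instances of condition (c), namely $\tfrac{1}{2}\left(F_t(r+1)+F_t(r-1)\right) \le F_{t-1}(r)$; note that for the specific $\beta_t$ of \eqref{equation:potential-based-strategy} each endpoint instance of (c) is in fact equivalent to this averaged inequality. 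Your argument is more elementary and self-contained --- it dispenses with the extremes lemma entirely and uses (c) only at the endpoints --- while the paper's approach isolates a reusable geometric fact (concavity in the angle) and needs the full range $g\in[-1,1]$ of condition (c); both treat the degenerate case $\theta_{t-1}=0$ the same way ($\beta_t=0$ by evenness, reducing to (c) at $x=0$, which your ``trivial'' deserves a half-line to spell out), and your second stage simply verifies by hand, via Cauchy--Schwarz and evenness, the radial-conjugate identity that the paper cites from Bauschke--Combettes.
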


\section{From Coin Betting to Learning with Expert Advice}
\label{section:reduction-experts}

In this section, we show how to use the algorithm for OLO over one-dimensional
Hilbert space $\R$ from
Section~\ref{section:one-dimensional-hilbert-space-olo}---which is itself based
on a coin betting strategy---to construct an algorithm for \ac{LEA}.

\vspace{-0.05cm}

Let $N \ge 2$ be the number of experts and $\Delta_N$ be the $N$-dimensional
probability simplex. Let $\pi = (\pi_1, \pi_2, \dots, \pi_N) \in \Delta_N$ be
any \emph{prior} distribution. Let $A$ be an algorithm for OLO over
the one-dimensional Hilbert space $\R$, based on a sequence of the coin betting
potentials $\{F_t\}_{t=0}^\infty$ with initial endowment\footnote{Any initial
endowment $\epsilon > 0$ can be rescaled to $1$. Instead of $F_t(x)$ we would
use $F_t(x)/\epsilon$. The $w_t$ would become $w_t/\epsilon$, but $p_t$ is
invariant to scaling of $w_t$. Hence, the LEA algorithm is the same regardless
of $\epsilon$.} $1$. We instantiate $N$ copies of $A$.

\vspace{-0.05cm}

Consider any round $t$. Let $w_{t,i} \in \R$ be the prediction of the $i$-th copy of
$A$. The LEA algorithm computes $\widehat p_t = (\widehat p_{t,1}, \widehat
p_{t,2}, \dots, \widehat p_{t,N}) \in \R_{0,+}^N$ as
\begin{equation}
\label{eq:phat}
\widehat p_{t,i} = \pi_i \cdot [w_{t,i}]_+,
\end{equation}
where $[x]_+ = \max\{0,x\}$ is the positive part of $x$. Then, the LEA
algorithm predicts $p_t = (p_{t,1}, p_{t,2}, \dots, p_{t,N}) \in \Delta^N$ as
\begin{equation}
\label{eq:preds_experts}
p_t = \tfrac{\widehat p_t}{\norm{\widehat p_t}_1} \; .
\end{equation}
If $\norm{\widehat p_t}_1 = 0$, the algorithm predicts the prior $\pi$.
Then, the algorithm receives the reward vector
$g_t = (g_{t,1}, g_{t,2}, \dots, g_{t,N}) \in [0,1]^N$. Finally, it
feeds the reward to each copy of $A$. The reward for the $i$-th copy of $A$ is $\widetilde g_{t,i} \in
[-1,1]$ defined as
\begin{align}
\label{eq:gradients_experts_reduction}
\widetilde g_{t,i} =
\begin{cases}
g_{t,i} - \langle g_t, p_t \rangle & \text{if } w_{t,i} > 0 \; , \\
\left[g_{t,i} - \langle g_t, p_t \rangle \right]_+ & \text{if } w_{t,i} \le 0 \; .
\end{cases}
\end{align}

The construction above defines a \ac{LEA} algorithm defined by the predictions
$p_t$, based on the algorithm $A$.  We can prove the following regret bound for
it.
\begin{theorem}[Regret Bound for Experts]
\label{theorem:regret-bound-experts}
Let $A$ be an algorithm for \ac{OLO} over the one-dimensional Hilbert space
$\R$, based on the coin betting potentials $\{F_t\}_{t=0}^\infty$ for an
initial endowment of $1$. Let $f_t^{-1}$ be the inverse of $f_t(x) =
\ln(F_t(x))$ restricted to $[0,\infty)$.  Then, the regret of the \ac{LEA}
algorithm with prior $\pi \in \Delta_N$ that predicts at each round with $p_t$
in \eqref{eq:preds_experts} satisfies
\[
\forall T \ge 0 \quad \forall u \in \Delta_N \qquad \qquad
\Regret_T(u) \le f_T^{-1}\left( \KL{u}{\pi} \right) \; .
\]
\end{theorem}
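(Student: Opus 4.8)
The plan is to push the regret of the \ac{LEA} algorithm back through the $N$ one-dimensional betting copies, converting each copy's wealth guarantee into the claimed KL bound. Write $\ell_{t,i} = g_{t,i} - \langle g_t, p_t\rangle$, so that $\ell_{t,i}\in[-1,1]$, $\sum_i p_{t,i}\ell_{t,i}=0$, and $\Regret_T(u) = \sum_t \langle g_t, u-p_t\rangle = \sum_i u_i L_{T,i}$ with $L_{T,i} := \sum_{t=1}^T \ell_{t,i}$. Two elementary facts drive the argument. First, by \eqref{eq:gradients_experts_reduction} we always have $\widetilde g_{t,i}\ge \ell_{t,i}$ (equality when $w_{t,i}>0$, and $[x]_+\ge x$ otherwise), hence $G_{T,i}:=\sum_t \widetilde g_{t,i}\ge L_{T,i}$; moreover $\widetilde g_{t,i}\in[-1,1]$, so the wealth guarantee \eqref{equation:wealth-lower-bound-generic} applies to each copy and gives $\Wealth_{T,i}\ge F_T(G_{T,i})$.

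The crux is a per-round aggregate inequality $\sum_i \pi_i\, \widetilde g_{t,i} w_{t,i}\le 0$. I would split the sum by the sign of $w_{t,i}$. When $w_{t,i}\le 0$ the feedback is $\widetilde g_{t,i} = [\ell_{t,i}]_+\ge 0$, so those terms are $\le 0$. When $w_{t,i}>0$ we have $\widetilde g_{t,i} = \ell_{t,i}$ and, by \eqref{eq:phat}--\eqref{eq:preds_experts}, $\pi_i w_{t,i} = \widehat p_{t,i} = \norm{\widehat p_t}_1\, p_{t,i}$, so those terms contribute $\norm{\widehat p_t}_1\sum_i p_{t,i}\ell_{t,i} = 0$ (the indices with $w_{t,i}\le 0$ carry $p_{t,i}=0$). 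The case $\norm{\widehat p_t}_1=0$ is immediate, since then every $w_{t,i}\le 0$. Summing over $t$ and using the endowment $1$ yields $\sum_i \pi_i\Wealth_{T,i} = 1 + \sum_i\pi_i\Reward_{T,i}\le 1$. This is the heart of the proof and the reason for the asymmetric clipping in \eqref{eq:gradients_experts_reduction}: clipping prevents any copy's reward from pushing the $\pi$-weighted total wealth above its initial value, while only ever increasing the cumulative feedback $G_{T,i}$.

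Next I would convert each copy's wealth bound into a statement about $L_{T,i}$. Since $F_T$ is even and increasing on $[0,a_T)$ and $G_{T,i}\ge L_{T,i}$, we get $\Wealth_{T,i}\ge F_T(G_{T,i})\ge F_T([L_{T,i}]_+) = \exp(f_T([L_{T,i}]_+))$: if $L_{T,i}\ge 0$ this is monotonicity, and if $L_{T,i}<0$ it uses that $F_T$ is minimized at $0$. Combining with the aggregate bound gives $\sum_i \pi_i \exp(f_T([L_{T,i}]_+))\le 1$.

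Finally I would close by convex duality between this constraint and $\KL{u}{\pi}$. Put $x_i=[L_{T,i}]_+\ge 0$ and $q_i=\pi_i\exp(f_T(x_i))$, so $\sum_i q_i\le 1$ and $f_T(x_i)=\ln(q_i/\pi_i)$. Because $u_i\ge 0$, $\Regret_T(u)\le \sum_i u_i x_i$; by Jensen and log-convexity of $F_T$ (convexity of $f_T$), $f_T(\sum_i u_i x_i)\le \sum_i u_i f_T(x_i) = \sum_i u_i\ln(q_i/\pi_i)$. The gap $\KL{u}{\pi} - \sum_i u_i\ln(q_i/\pi_i) = \sum_i u_i\ln(u_i/q_i)\ge \sum_i(u_i-q_i) = 1-\sum_i q_i\ge 0$ is a generalized-KL nonnegativity, so $f_T(\sum_i u_i x_i)\le \KL{u}{\pi}$. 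Applying $f_T^{-1}$, which is well defined on $[0,\infty)$ because condition (c) with $x=g=0$ forces $F_T(0)\le F_0(0)=\epsilon=1$ and hence $f_T(0)\le 0$, gives $\Regret_T(u)\le \sum_i u_i x_i\le f_T^{-1}(\KL{u}{\pi})$. The main obstacle is the per-round aggregate inequality of the second paragraph, where the definition of $p_t$ and the clipping must interact exactly right; the concluding convex-analytic step is routine once the constraint $\sum_i\pi_i\Wealth_{T,i}\le 1$ is established.
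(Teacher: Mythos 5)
Your proof is correct and follows essentially the same route as the paper's: the per-round inequality $\sum_{i=1}^N \pi_i \widetilde g_{t,i} w_{t,i} \le 0$, combined with each copy's wealth guarantee to yield $\sum_{i=1}^N \pi_i F_T\bigl(\sum_{t=1}^T \widetilde g_{t,i}\bigr) \le 1$, is exactly the paper's key step. Your closing step differs only cosmetically---you apply Jensen to the convex $f_T$ and use $\sum_i u_i \ln(u_i/q_i) \ge 1 - \sum_i q_i$, working with $[L_{T,i}]_+$, whereas the paper applies Jensen to the concave $f_T^{-1}$ and to $\ln$, working with $|\sum_{t=1}^T \widetilde g_{t,i}|$ via evenness of $F_T$; these are equivalent rearrangements of the same change-of-measure argument.
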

The proof, in Appendix~\ref{section:appendix-expert-reduction}, is based on the
fact that \eqref{eq:phat}--\eqref{eq:gradients_experts_reduction} guarantee
that $\sum_{i=1}^N \pi_i \widetilde g_{t,i} w_{t,i} \le 0$ and on a variation
of the change of measure lemma used in the PAC-Bayes literature,
e.g.~\cite{McAllester-2013}.

\section{Applications of the Krichevsky-Trofimov Estimator to OLO and LEA}
\label{section:kt-estimator}

In the previous sections, we have shown that a coin betting potential with a
guaranteed rapid growth of the wealth will give good regret guarantees for
\ac{OLO} and \ac{LEA}. Here, we show that the KT estimator has associated an
excellent coin betting potential, which we call \emph{KT potential}.  Then, the
optimal wealth guarantee of the KT potentials will translate to optimal
parameter-free regret bounds.

The sequence of excellent coin betting potentials for an initial endowment $\epsilon$
corresponding to the adaptive Kelly betting strategy
$\beta_t$ defined by \eqref{equation:kt-estimator-betting-strategy}
based on the KT estimator are
\begin{equation}
\label{equation:kt-estimator-potential}
F_t(x) = \epsilon \tfrac{2^t \cdot \Gamma \left( \tfrac{t+1}{2} + \frac{x}{2} \right) \cdot \Gamma \left( \tfrac{t+1}{2} - \frac{x}{2} \right)}{\pi \cdot t!}
\qquad \qquad \text{$t \ge 0$, \quad $x \in \left(-t-1, t+1\right)$,}
\end{equation}
where $\Gamma(x) = \int_0^\infty t^{x-1} e^{-t} dt$ is Euler's gamma
function---see Theorem~\ref{theorem:kt-potential} in
Appendix~\ref{section:properties-kt-potential}.  This potential was used to
prove regret bounds for online prediction with the logarithmic
loss~\cite{Krichevsky-Trofimov-1981}\cite[Chapter
9.7]{Cesa-Bianchi-Lugosi-2006}.  Theorem~\ref{theorem:kt-potential} also shows
that the KT betting strategy $\beta_t$ as defined by
\eqref{equation:kt-estimator-betting-strategy} satisfies
\eqref{equation:potential-based-strategy}.

This potential has the nice property that is satisfies the inequality in
(c) of Definition~\ref{definition:potential} with equality when $g_t\in
\{-1,1\}$, i.e. $F_t(x+g_t)=(1+g_t \beta_t) \, F_{t-1}(x)$.

We also generalize the KT potentials to \emph{$\delta$-shifted KT
potentials}, where $\delta\geq0$, defined as
\[
F_t(x) = \tfrac{2^t \cdot \Gamma\left(\delta + 1 \right) \cdot \Gamma\left(\tfrac{t+\delta+1}{2} + \frac{x}{2} \right) \cdot \Gamma\left(\tfrac{t+\delta+1}{2} - \frac{x}{2} \right)}{\Gamma\left(\tfrac{\delta+1}{2} \right)^2 \cdot \Gamma \left(t+\delta+1\right)} \; .
\]
The reason for its name is that, up to a multiplicative constant, $F_t$ is
equal to the KT potential shifted in time by $\delta$.
Theorem~\ref{theorem:kt-potential} also proves that the $\delta$-shifted KT
potentials are excellent coin betting potentials with initial endowment $1$,
and the corresponding betting fraction is $\beta_t = \tfrac{\sum_{j=1}^{t-1}
g_j}{\delta+t}$.

\vspace{-0.1cm}

\subsection{OLO in Hilbert Space}
\label{section:kt-olo}

We apply the KT potential for the construction of an OLO algorithm over a
Hilbert space $\H$. We will use \eqref{equation:hilbert-space-olo}, and we just
need to calculate $\beta_t$. According to Theorem~\ref{theorem:kt-potential} in
Appendix \ref{section:properties-kt-potential}, the formula for $\beta_t$
simplifies to $\beta_t = \frac{\norm{\sum_{i=1}^{t-1} g_i}}{t}$ so that $w_t =
\tfrac{1}{t} \left(\epsilon + \sum_{i=1}^{t-1} \langle g_i, w_i \rangle \right)
\sum_{i=1}^{t-1} g_i$.

\begin{algorithm}[t]
\caption{Algorithm for OLO over Hilbert space $\H$ based on KT potential
\label{algorithm:kt-hilbert-space-olo}}
\begin{algorithmic}[1]
{
\REQUIRE{Initial endowment $\epsilon > 0$}
\FOR{$t=1,2,\dots$}
\STATE{Predict with $w_t \leftarrow \tfrac{1}{t} \left(\epsilon + \sum_{i=1}^{t-1} \langle g_i, w_i \rangle \right) \sum_{i=1}^{t-1} g_i$}
\STATE{Receive reward vector $g_t \in \H$ such that $\norm{g_t} \le 1$}
\ENDFOR
}
\end{algorithmic}
\end{algorithm}

The resulting algorithm is stated as
Algorithm~\ref{algorithm:kt-hilbert-space-olo}.  We derive a regret bound for
it as a very simple corollary of
Theorem~\ref{theorem:hilbert-space-olo-regret-bound} to the KT potential
\eqref{equation:kt-estimator-potential}. The only technical part of the proof,
in Appendix~\ref{section:corollaries_reductions}, is an upper bound on $F_t^*$
since it cannot be expressed as an elementary function.
\begin{corollary}[Regret Bound for Algorithm~\ref{algorithm:kt-hilbert-space-olo}]
\label{corollary:kt-hilbert-space-olo-regret} Let $\epsilon > 0$. Let
$\{g_t\}_{t=1}^\infty$ be any sequence of reward vectors in a Hilbert space
$\H$ such that $\norm{g_t} \le 1$.
Then Algorithm~\ref{algorithm:kt-hilbert-space-olo} satisfies
\[
\forall \, T \ge 0 \quad
\forall u \in \H \qquad \qquad
\Regret_T(u) \le \norm{u} \sqrt{T \ln\left(1 + \tfrac{24 T^2 \norm{u}^2}{\epsilon^2} \right)} + \epsilon \left(1 - \tfrac{1}{e\sqrt{ \pi T}} \right) \;.
\]
\end{corollary}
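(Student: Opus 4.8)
The plan is to derive the corollary directly from Theorem~\ref{theorem:hilbert-space-olo-regret-bound} specialized to the KT potential. That theorem gives $\Regret_T(u) \le F_T^*(\norm{u}) + \epsilon$ for the $F_T$ in~\eqref{equation:kt-estimator-potential}, so the entire task reduces to producing a clean upper bound on the Fenchel conjugate $F_T^*$ evaluated at $\norm{u}$. The obstruction flagged in the text is that $F_T$ is a ratio of Gamma functions with no elementary closed form, and the same is true of its conjugate; hence I would not attempt to compute $F_T^*$ exactly.

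The key device is that conjugation is order-reversing: if $G_T \le F_T$ pointwise, then $F_T^* \le G_T^*$. So I would first find an elementary lower bound $F_T(x) \ge G_T(x) := \frac{\epsilon}{e\sqrt{\pi T}} \exp\left(\frac{x^2}{2T}\right)$ valid for all admissible $x$. This matches the intuition from Section~\ref{section:coin-betting-potentials} that the KT wealth tracks $\epsilon \exp((\sum_i g_i)^2/(2T))/\sqrt{T}$ up to constants. Proving the inequality is a Gamma-function estimate: taking logarithms and exploiting log-convexity of $F_T$ (property (b) of Definition~\ref{definition:potential}) together with Stirling-type bounds on $\Gamma(\tfrac{T+1}{2} \pm \tfrac{x}{2})/T!$, one verifies it at $x = 0$ (where for $T = 2m$ it reduces to $\binom{2m}{m}/4^m \ge \frac{1}{e\sqrt{\pi T}}$) and then controls the quadratic growth in $x$.

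Given the lower bound, I would bound $G_T^*$. Substituting $x = \sqrt{T}\,u$ reduces the computation to the conjugate of $A \exp(u^2/2)$ with $A = \epsilon/(e\sqrt{\pi T})$. Writing the first-order condition $\theta = A\,u\,e^{u^2/2}$ at the maximizer and eliminating the exponential yields a bound of the form $\sup_u(\theta u - A e^{u^2/2}) \le \theta\sqrt{\ln(1 + \theta^2/A^2)} - A$, where the $+1$ inside the logarithm and the $-A$ absorb the small-$\theta$ regime. Undoing the scaling gives $F_T^*(\norm{u}) \le \norm{u}\sqrt{T \ln\left(1 + e^2 \pi\, T^2 \norm{u}^2/\epsilon^2\right)} - \frac{\epsilon}{e\sqrt{\pi T}}$; using $e^2\pi \le 24$ on the logarithm's argument and adding the $+\epsilon$ from Theorem~\ref{theorem:hilbert-space-olo-regret-bound} produces exactly the claimed bound, the leftover $-\epsilon/(e\sqrt{\pi T})$ combining with $\epsilon$ into $\epsilon\left(1 - \tfrac{1}{e\sqrt{\pi T}}\right)$.

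I expect the main obstacle to be the Gamma-function lower bound: one needs constants sharp enough to produce the clean factors $e$ and $\pi$ (and hence the final $24$ and the $\tfrac{1}{e\sqrt{\pi T}}$ term), which calls for careful two-sided estimates rather than crude bounds. The conjugate computation for $A e^{u^2/2}$ is comparatively routine once the transcendental first-order condition is tamed by the $\ln(1 + \cdot)$ device, and it is essentially the same estimate used in the earlier exponential-potential analyses of~\cite{McMahan-Orabona-2014, Orabona-2014}; I would relegate both computations to Appendix~\ref{section:corollaries_reductions}.
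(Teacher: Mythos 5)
Your proposal follows the paper's proof essentially step for step: apply Theorem~\ref{theorem:hilbert-space-olo-regret-bound}, lower bound the KT potential by exactly the same function $H_T(x) = \frac{\epsilon}{e\sqrt{\pi T}}\exp\left(\frac{x^2}{2T}\right)$ (the paper's Lemma~\ref{lemma:lower_bound_gamma} with $\delta=0$), use order reversal of Fenchel conjugation, and bound the conjugate of the scaled Gaussian exponential via the first-order condition tamed by $W(x) \le \ln(1+x)$ (the paper's Lemmas~\ref{lemma:lambert} and~\ref{lemma:dual_exp_square}), with the same constants $e^2\pi \le 24$ and the same bookkeeping of $\epsilon\left(1 - \tfrac{1}{e\sqrt{\pi T}}\right)$. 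The only difference is internal to the Gamma-function lower bound, where the paper uses a Whittaker--Watson inequality plus a Pinsker-type KL estimate (Lemmas~\ref{lemma:approx_gamma_real} and~\ref{lemma:ratio_gamma}) rather than your sketched value-at-zero-plus-curvature argument, a detail-level variation within the same approach.
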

It is worth noting the elegance and extreme simplicity of
Algorithm~\ref{algorithm:kt-hilbert-space-olo} and contrast it with the
algorithms in \cite{Streeter-McMahan-2012, McMahan-Orabona-2014, Orabona-2013,
Orabona-2014}.  Also, the regret bound is
optimal~\cite{Streeter-McMahan-2012,Orabona-2013}.  The parameter $\epsilon$
can be safely set to any constant, e.g. $1$. Its role is equivalent to the
initial guess used in doubling tricks~\cite{Shalev-Shwartz-2011}.

\subsection{Learning with Expert Advice}
\label{section:kt-lea}

We will now construct an algorithm for \ac{LEA} based on the $\delta$-shifted
KT potential. We set $\delta$ to $T/2$, requiring the algorithm to know the
number of rounds $T$ in advance; we will fix this later with the standard
doubling trick.

\begin{algorithm}[t]
\begin{algorithmic}[1]
\caption{Algorithm for Learning with Expert Advice based on $\delta$-shifted KT potential
\label{algorithm:kt-experts}}
{
\REQUIRE{Number of experts $N$, prior distribution $\pi \in \Delta_N$, number of rounds $T$}
\FOR{$t=1,2,\dots,T$}
\STATE{For each $i \in [N]$, set $w_{t,i} \leftarrow \tfrac{\sum_{j=1}^{t-1} \widetilde g_{j,i}}{t+T/2} \left(1 + \sum_{j=1}^{t-1} \widetilde g_{j,i} w_{j,i} \right)$}
\STATE{For each $i \in [N]$, set $\widehat{p}_{t,i} \leftarrow \pi_i [w_{t,i}]_+$}
\STATE{Predict with $p_t \leftarrow
\begin{cases}
\widehat{p}_t/\norm{\widehat{p_t}}_1 & \text{if $\norm{\widehat p_t}_1 > 0$} \\
\pi & \text{if $\norm{\widehat p_t}_1 = 0$}
\end{cases}$}
\STATE{Receive reward vector $g_t \in [0,1]^N$}
\STATE{For each $i \in [N]$, set $\widetilde g_{t,i} \leftarrow \begin{cases}
g_{t,i} - \langle g_t, p_t \rangle & \text{if $w_{t,i} > 0$} \\
[g_{t,i} - \langle g_t, p_t \rangle]_+ & \text{if $w_{t,i} \le 0$}
\end{cases}$}
\ENDFOR
}
\end{algorithmic}
\end{algorithm}

To use the construction in Section~\ref{section:reduction-experts}, we need an
OLO algorithm for the 1-d Hilbert space $\R$.  Using the $\delta$-shifted KT
potentials, the algorithm predicts for any sequence $\{\widetilde
g_t\}_{t=1}^\infty$ of reward
\[
w_t
= \beta_t \Wealth_{t-1}
= \beta_t \left(1 + \sum_{j=1}^{t-1} \widetilde g_j w_j \right)
= \frac{\sum_{i=1}^{t-1} \widetilde g_i}{T/2+t} \left(1 + \sum_{j=1}^{t-1} \widetilde g_j w_j \right) \; .
\]
Then, following the construction in Section~\ref{section:reduction-experts}, we
arrive at the final algorithm, Algorithm~\ref{algorithm:kt-experts}.
We can derive a regret bound for Algorithm~\ref{algorithm:kt-experts} by
applying Theorem~\ref{theorem:regret-bound-experts} to the $\delta$-shifted KT
potential.
\begin{corollary}[Regret Bound for
Algorithm~\ref{algorithm:kt-experts}] \label{corollary:kt-experts-regret} Let
$N \ge 2$ and $T \ge 0$ be integers. Let $\pi \in \Delta_N$ be a prior.
Then Algorithm~\ref{algorithm:kt-experts} with input $N,\pi,T$
for any rewards vectors $g_1, g_2, \dots, g_T \in [0,1]^N$ satisfies
\[
\forall u \in \Delta_N \qquad \qquad \Regret_T(u) \le \sqrt{3T (3 + \KL{u}{\pi})} \; .
\]
\end{corollary}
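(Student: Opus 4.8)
The plan is to invoke Theorem~\ref{theorem:regret-bound-experts} and then invert the resulting bound. By the construction in Section~\ref{section:kt-lea}, Algorithm~\ref{algorithm:kt-experts} is precisely the LEA algorithm of Section~\ref{section:reduction-experts} built on the one-dimensional OLO algorithm arising from the $\delta$-shifted KT potentials with $\delta = T/2$ and initial endowment $1$ (these are excellent coin betting potentials with $\beta_t = \frac{\sum_{j=1}^{t-1}\widetilde g_j}{t+\delta}$). Theorem~\ref{theorem:regret-bound-experts} then yields directly
\[
\Regret_T(u) \le f_T^{-1}\!\left(\KL{u}{\pi}\right), \qquad f_T(x) = \ln F_T(x),
\]
where $F_T$ is the $\delta$-shifted KT potential evaluated at $t = T$. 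Everything now reduces to showing $f_T^{-1}(y) \le \sqrt{3T(3 + y)}$.

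The crux is a clean quadratic lower bound on the log-potential, namely $f_T(x) \ge \frac{x^2}{3T} - 3$ on the relevant range. Granting this, the inversion is a one-liner: $f_T$ is even and strictly increasing on $[0,\infty)$ by part (b) of Definition~\ref{definition:potential}, so taking $z = \sqrt{3T(3 + y)}$ gives $\frac{z^2}{3T} - 3 = y$, whence $f_T(z) \ge y$ and therefore $f_T^{-1}(y) \le z$; substituting $y = \KL{u}{\pi}$ finishes the proof. (If $\KL{u}{\pi}$ is so large that $z$ leaves the domain $(-T-1, T+1)$ of $F_T$, the claimed bound exceeds $T$ and is vacuous, since the regret is trivially at most $T$.)

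To produce the quadratic lower bound I would first relate the $\delta$-shifted potential to the ordinary KT potential run for $t+\delta$ steps: a short computation with Euler's $\Gamma$ shows $F_t(x) = C_\delta \, F^{\mathrm{KT}}_{t+\delta}(x)$, where $F^{\mathrm{KT}}_s$ is the KT potential \eqref{equation:kt-estimator-potential} with $\epsilon = 1$ and, by the Legendre duplication formula, $C_\delta = \sqrt{\pi}\,\Gamma\!\left(\tfrac{\delta+2}{2}\right)\big/\Gamma\!\left(\tfrac{\delta+1}{2}\right)$. Combining this with the wealth guarantee of the KT potential --- the Pinsker-based estimate $F^{\mathrm{KT}}_s(x) \ge \frac{1}{2\sqrt{s}}\exp\!\left(\frac{x^2}{2s}\right)$ underlying \eqref{eq:opt_wealth} and the display following it --- at $s = t + \delta = \tfrac{3T}{2}$ gives
\[
f_T(x) \ge \frac{x^2}{3T} + \ln\frac{C_{T/2}}{2\sqrt{3T/2}} \; .
\]
It then remains to check that the additive constant is at least $-3$. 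Using Gautschi's inequality $\Gamma\!\left(\tfrac{\delta+2}{2}\right)\big/\Gamma\!\left(\tfrac{\delta+1}{2}\right) \ge \sqrt{\delta/2}$ at $\delta = T/2$ gives $C_{T/2} \ge \sqrt{\pi T}/2$, so the constant is at least $\ln\!\big(\sqrt{\pi/24}\big) \approx -1.02 > -3$, uniformly in $T$.

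I expect the only real obstacle to be this constant bookkeeping: matching the $\Gamma$-ratio $C_\delta$ against the $\sqrt{s}$ normalization tightly enough that the slack genuinely stays within $3$ for every $T \ge 1$ (the cases $T = 0,1$ being checked directly). The inversion and the reduction to Theorem~\ref{theorem:regret-bound-experts} are routine; all the delicacy sits in producing a quadratic lower bound on $\ln F_T$ with explicitly controlled constants, which is exactly why the shift $\delta = T/2$ and the resulting $\frac{1}{3T}$ coefficient were chosen.
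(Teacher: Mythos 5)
Your proposal is correct and follows the same skeleton as the paper's proof: invoke Theorem~\ref{theorem:regret-bound-experts}, lower bound the shifted KT potential by a Gaussian-type function $\exp\left(\frac{x^2}{3T} - \mathrm{const}\right)$, and invert. The difference is localized to how that potential lower bound is derived. The paper's Lemma~\ref{lemma:lower_bound_gamma} combines Lemma~\ref{lemma:approx_gamma_real} (a real-argument Willems--Shtarkov--Tjalkens estimate, with constant $\frac{1}{e\sqrt{\pi s}}$) with Lemma~\ref{lemma:ratio_gamma}, which bounds exactly your constant $C_\delta = \frac{\pi\,\Gamma(\delta+1)}{2^\delta\,\Gamma\left(\frac{\delta+1}{2}\right)^2} \ge \sqrt{\delta+1}$ via digamma monotonicity estimates. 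You instead factor $F_t = C_\delta\, F^{\mathrm{KT}}_{t+\delta}$ exactly --- your Legendre-duplication identity $C_\delta = \sqrt{\pi}\,\Gamma\left(\frac{\delta+2}{2}\right)/\Gamma\left(\frac{\delta+1}{2}\right)$ checks out --- and then apply Gautschi's inequality. This is arguably the cleaner route: duplication yields an identity where the paper settles for an inequality proved by a more laborious argument, and it makes transparent the paper's informal remark that the shifted potential is a constant multiple of the KT potential shifted in time. The final inversion step is identical in both proofs (the paper's constant works out to about $-2.12$, yours to about $-1.02$; both sit inside the $-3$ budget).

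One point needs repair. The constant $\frac{1}{2\sqrt{s}}$ you quote in the wealth guarantee $F^{\mathrm{KT}}_s(x) \ge \frac{1}{2\sqrt{s}}\exp\left(\frac{x^2}{2s}\right)$ is established (via the KT log-loss regret bound $\frac{1}{2}\ln T + \ln 2$) for the betting game over integer rounds, whereas here you need a pointwise bound on the potential at real arguments $x = \widetilde G_{T,i}$ (sums of reals in $[-1,1]$) and at a possibly half-integer time $s = \frac{3T}{2}$. The rigorously available real-argument constant is $\frac{1}{e\sqrt{\pi s}}$ from Lemma~\ref{lemma:approx_gamma_real}, which degrades your additive constant from roughly $-1.02$ to $-\ln\left(2e\sqrt{3/2}\right) \approx -1.90$ --- still comfortably above $-3$, so your slack absorbs the repair and the proof closes. (A harmless slip: the domain of the shifted $F_T$ is $\left(-\frac{3T}{2}-1, \frac{3T}{2}+1\right)$, not $(-T-1, T+1)$; this only enlarges the region where your vacuity fallback is unneeded.)
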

Hence, the Algorithm~\ref{algorithm:kt-experts} has \emph{both} the best known
guarantee on worst-case regret and per-round time complexity, see
Table~\ref{table:bounds}. Also, it has the advantage of being very simple.

The proof of the corollary is in the
Appendix~\ref{section:corollaries_reductions}.  The only technical part of the
proof is an upper bound on $f_t^{-1}(x)$, which we conveniently do by lower
bounding $F_t(x)$.

The reason for using the shifted potential comes from the analysis of
$f_t^{-1}(x)$. The unshifted algorithm would have a $O(\sqrt{T (\log T +
\KL{u}{\pi}})$ regret bound; the shifting improves the bound to $O(\sqrt{T (1 +
\KL{u}{\pi}})$.  By changing $T/2$ in Algorithm~\ref{algorithm:kt-experts} to
another constant fraction of $T$, it is possible to trade-off between the two
constants $3$ present in the square root in the regret upper bound.

The requirement of knowing the number of rounds $T$ in advance can be lifted by
the standard doubling trick~\cite[Section 2.3.1]{Shalev-Shwartz-2011},
obtaining an anytime guarantee with a bigger leading constant,
\[
\forall \, T \ge 0 \quad \forall u \in \Delta_N \qquad \qquad
\Regret_T(u) \le \tfrac{\sqrt{2}}{\sqrt{2} - 1} \sqrt{3T (3 + \KL{u}{\pi})} \; .
\]

\section{Discussion of the Results}
\label{section:discussion}

\begin{figure}[t]
\centering
\subfigure{\includegraphics[width=0.32\textwidth]{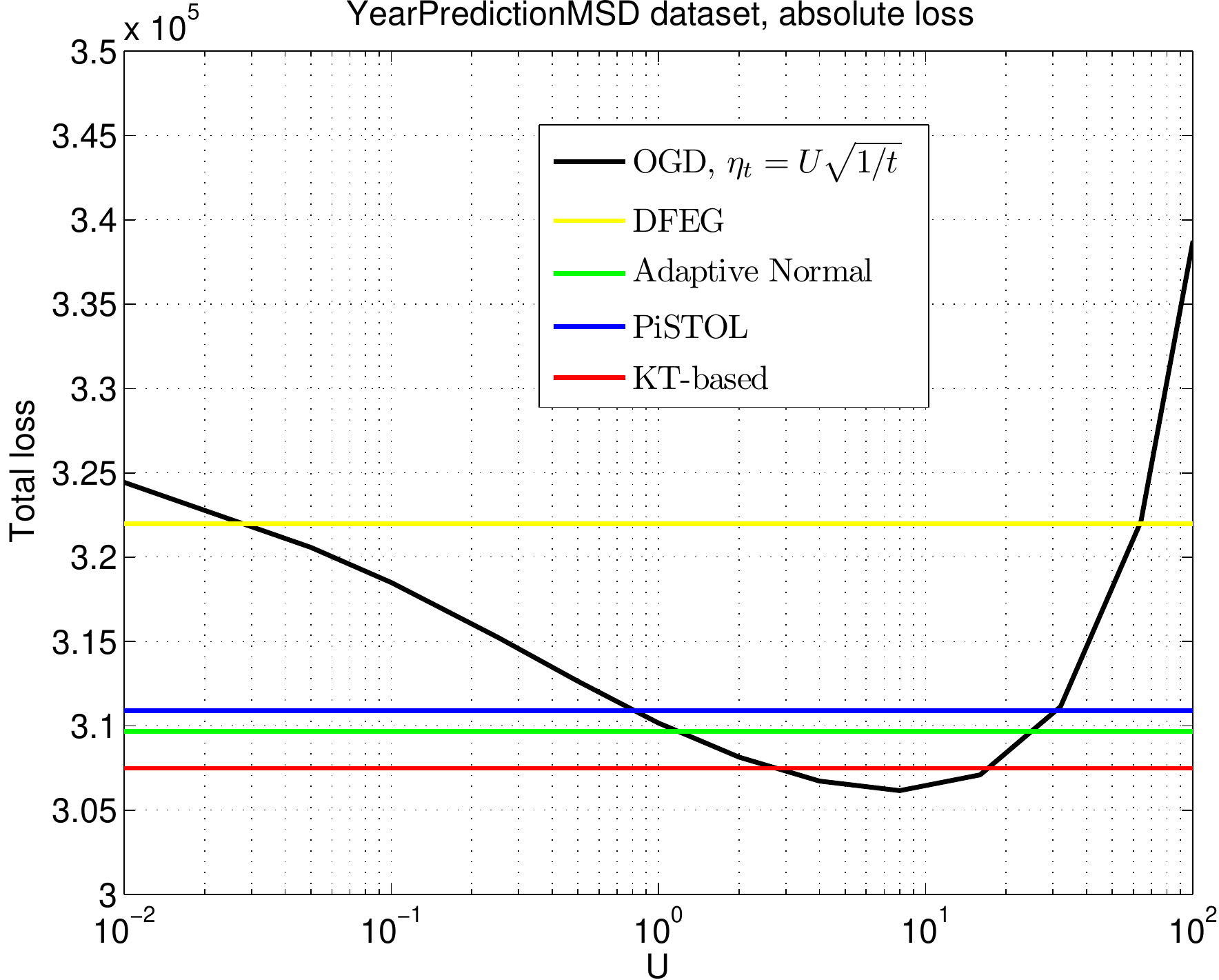}}
\subfigure{\includegraphics[width=0.32\textwidth]{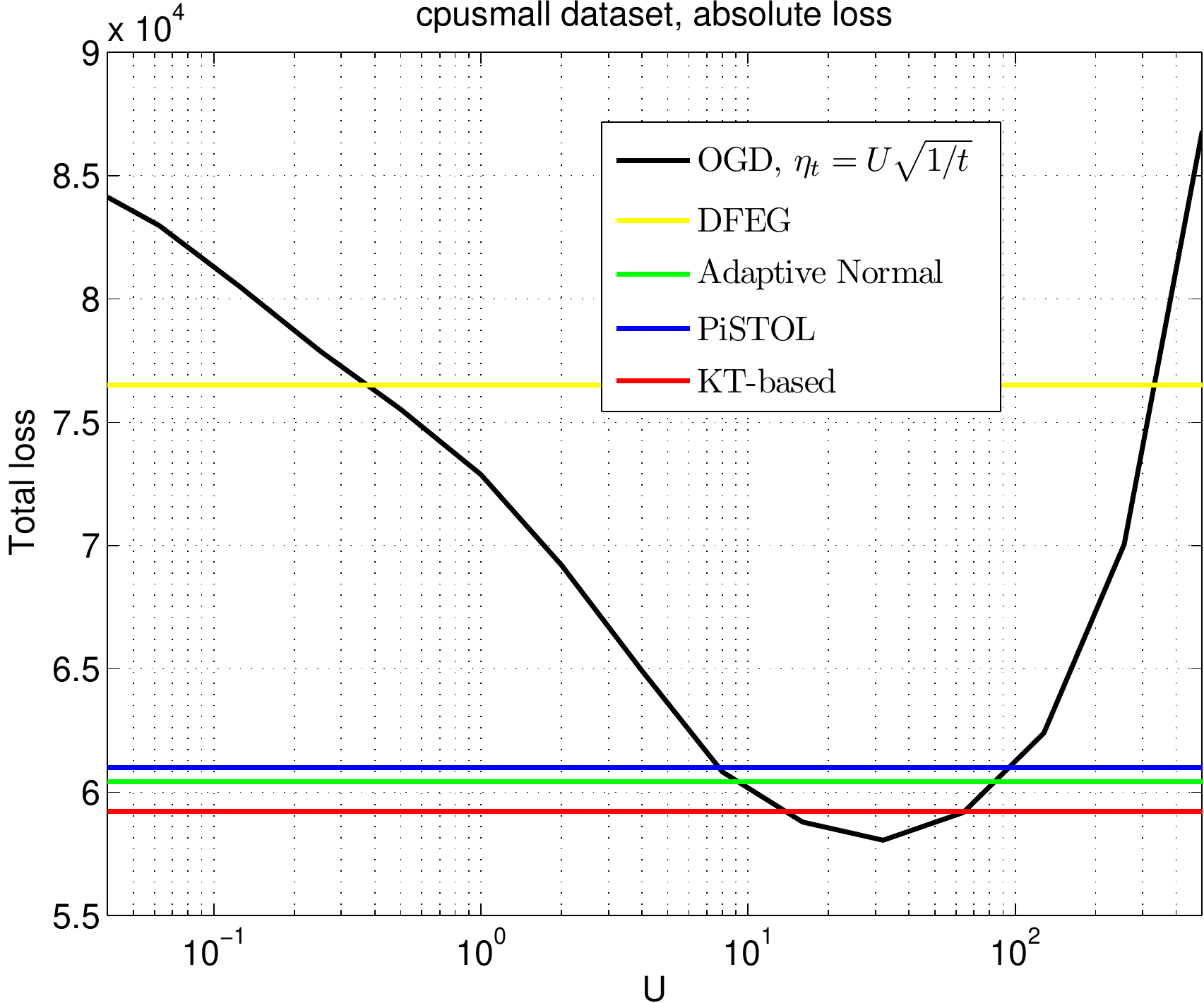}}
\subfigure{\includegraphics[width=0.32\textwidth]{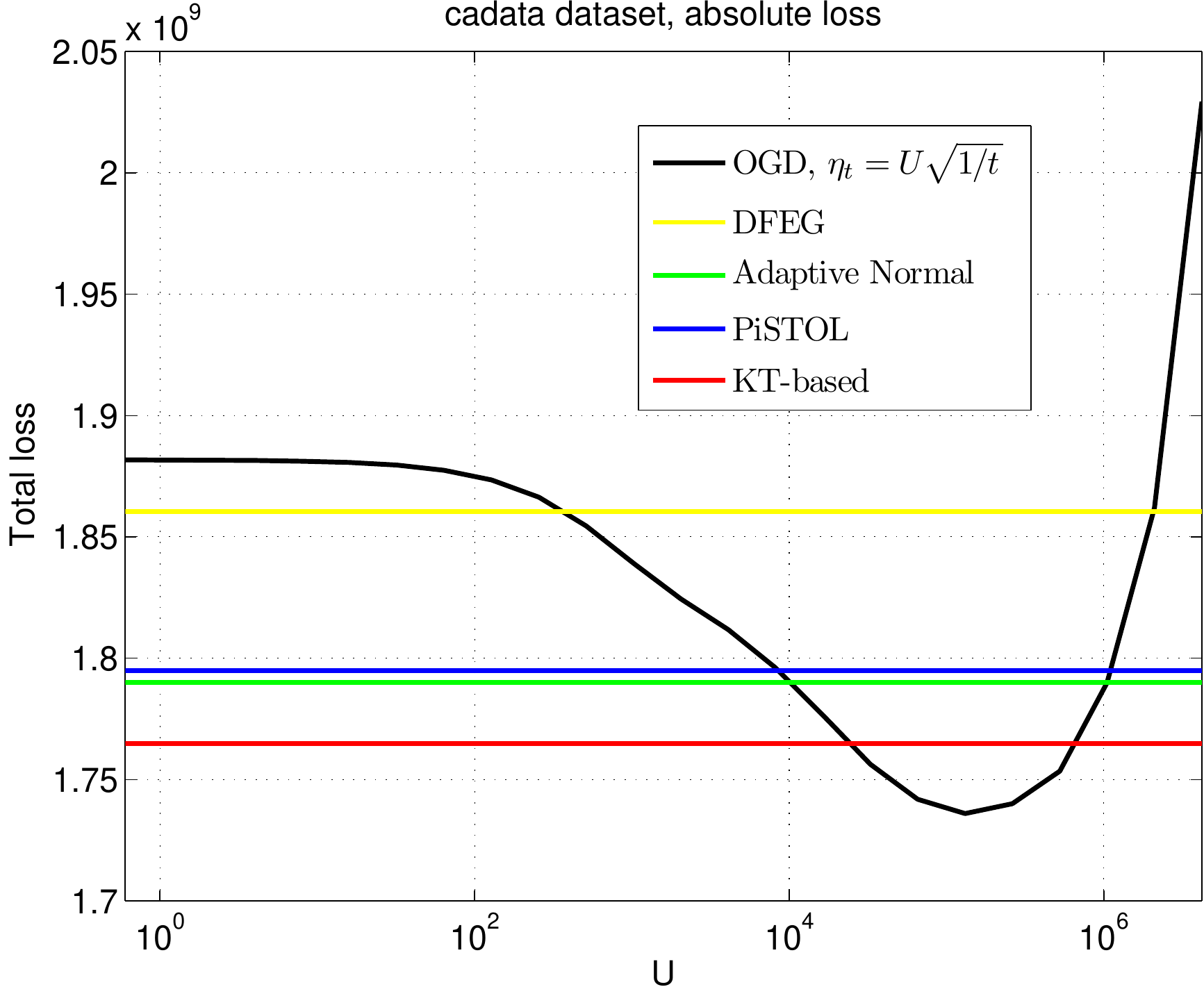}}
\caption{\footnotesize{Total loss versus learning rate parameter of \ac{OGD}
(in log scale), compared with parameter-free algorithms
DFEG~\cite{Orabona-2013}, Adaptive Normal~\cite{McMahan-Orabona-2014},
PiSTOL~\cite{Orabona-2014} and the KT-based
Algorithm~\ref{algorithm:kt-hilbert-space-olo}.}}
\label{fig:exp_olo}
\end{figure}

\begin{figure}[t]
\centering
\subfigure{\includegraphics[width=0.32\textwidth]{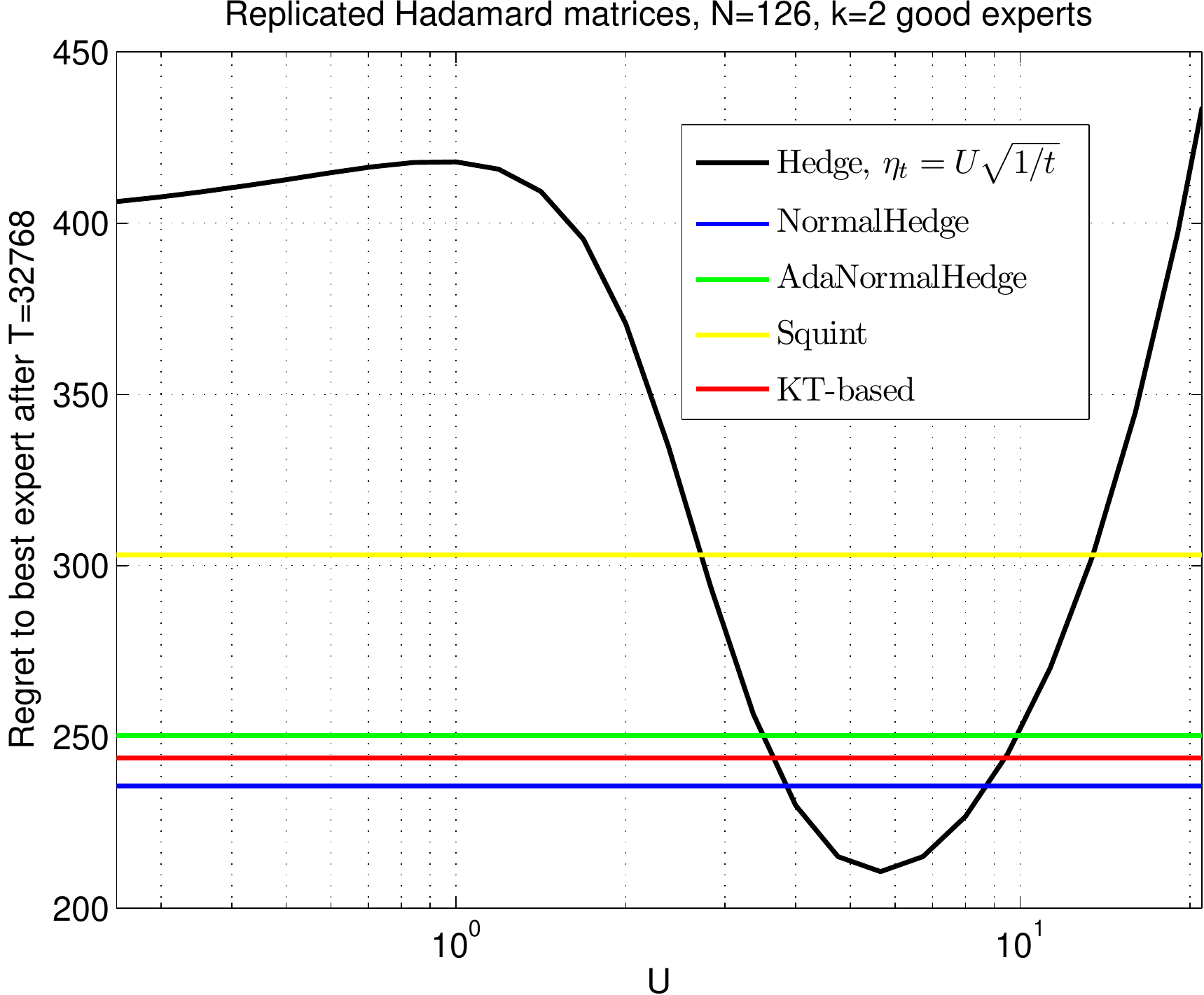}}
\subfigure{\includegraphics[width=0.32\textwidth]{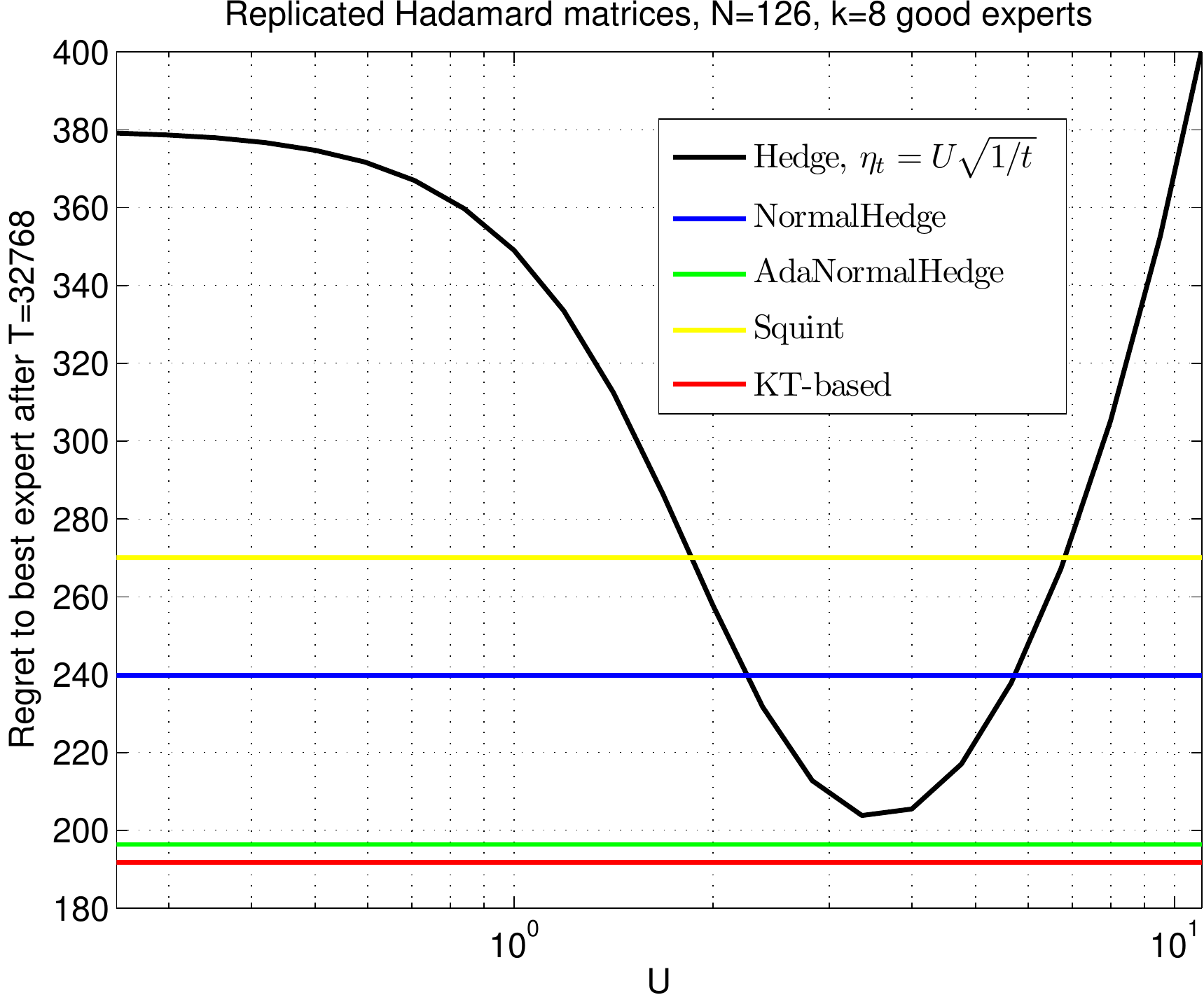}}
\subfigure{\includegraphics[width=0.32\textwidth]{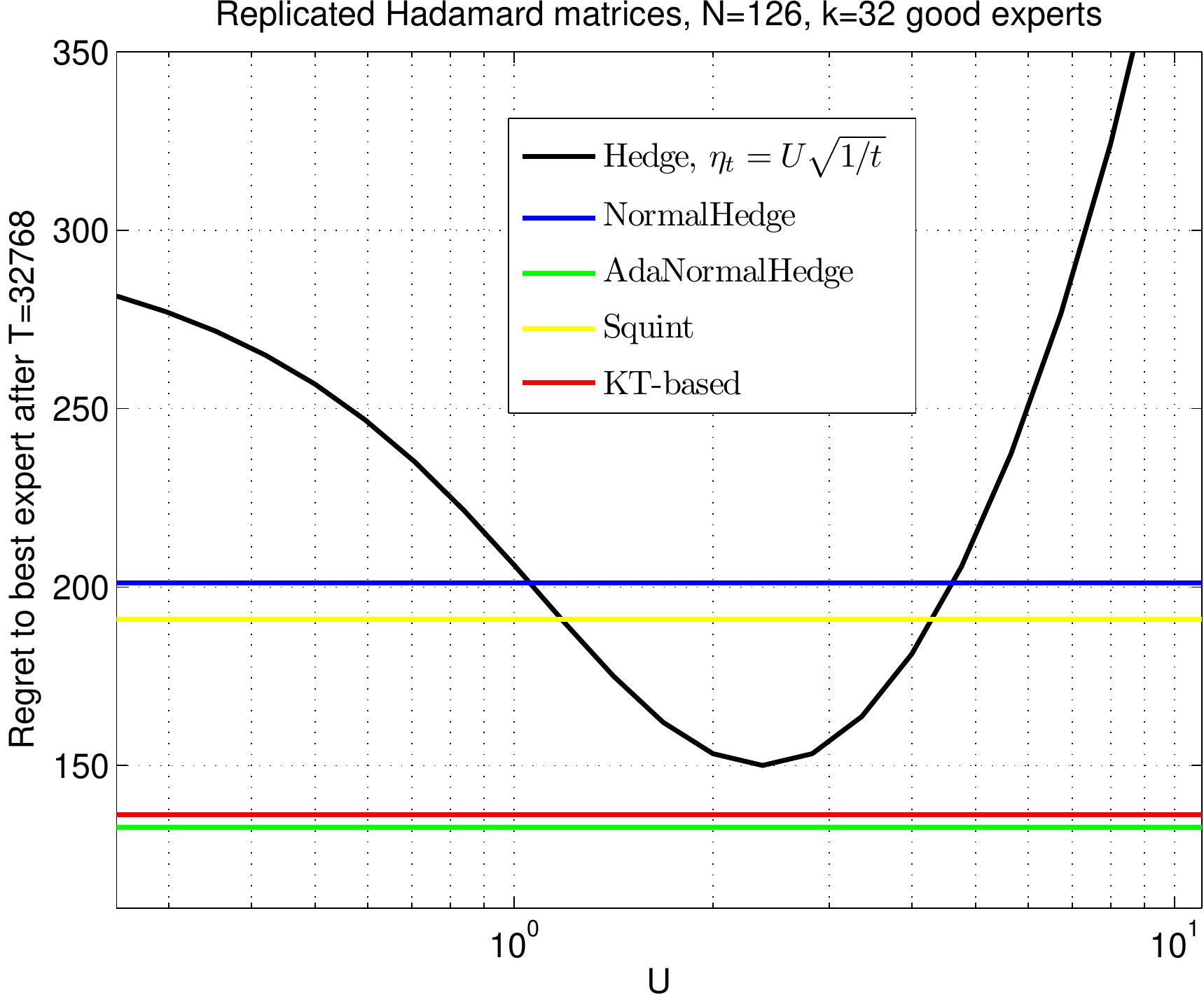}}
\caption{\footnotesize{Regrets to the best expert after $T = 32768$ rounds,
versus learning rate parameter of Hedge (in log scale). The ``good'' experts
are $\epsilon=0.025$ better than the others. The competitor algorithms are
NormalHedge~\cite{Chaudhuri-Freund-Hsu-2009},
AdaNormalHedge~\cite{Luo-Schapire-2015}, Squint~\cite{Koolen-van-Erven-2015},
and the KT-based Algorithm~\ref{algorithm:kt-experts}. $\pi_i=1/N$ for all
algorithms.}}
\label{fig:exp_lea}
\end{figure}

We have presented a new interpretation of parameter-free algorithms as
coin betting algorithms. This interpretation, far from being just a
mathematical gimmick, reveals the \emph{common} hidden structure of previous
parameter-free algorithms for both OLO and LEA and also allows the design of new algorithms. For
example, we show that the characteristic of parameter-freeness is just a
consequence of having an algorithm that guarantees the maximum reward possible.
%In this sense, most of the online learning algorithms requiring parameter
%tuning are just guaranteeing a suboptimal wealth growth.  The concept of
%``learning rate'' becomes questionable in the light of the presented results.
%At the same time, some of the previous ad-hoc choices of the potentials were
%just approximations of the optimal obtainable wealth, see
%Section~\ref{section:coin-betting-potentials}.
The reductions in Sections~\ref{section:reduction_hilbert}
and~\ref{section:reduction-experts} are also novel and they are in a certain
sense optimal. In fact, the obtained
Algorithms~\ref{algorithm:kt-hilbert-space-olo} and~\ref{algorithm:kt-experts}
achieve the optimal worst case upper bounds on the regret, see
\cite{Streeter-McMahan-2012,Orabona-2013} and \cite{Cesa-Bianchi-Lugosi-2006}
respectively.

We have also run an empirical evaluation to show that the theoretical
difference between classic online learning algorithms and parameter-free ones
is real and not just theoretical. In Figure~\ref{fig:exp_olo}, we have used
three regression datasets\footnote{Datasets available at
\url{https://www.csie.ntu.edu.tw/~cjlin/libsvmtools/datasets/}.}, and solved
the \ac{OCO} problem through \ac{OLO}. In all the three cases, we have used the
absolute loss and normalized the input vectors to have L2 norm equal to 1. From
the empirical results, it is clear that the optimal learning rate is completely
data-dependent, yet \emph{parameter-free algorithms have performance very close
to the unknown optimal tuning of the learning rate}. Moreover, the KT-based
Algorithm~\ref{algorithm:kt-hilbert-space-olo} seems to dominate all the other
similar algorithms.

For \ac{LEA}, we have used the synthetic setting
in~\cite{Chaudhuri-Freund-Hsu-2009}. The dataset is composed of Hadamard
matrices of size 64, where the row with constant values is removed, the rows
are duplicated to 126 inverting their signs, $0.025$ is subtracted to $k$ rows,
and the matrix is replicated in order to generate $T=32768$ samples. For more
details, see~\cite{Chaudhuri-Freund-Hsu-2009}. Here, the KT-based algorithm is
the one in Algorithm~\ref{algorithm:kt-experts}, where the term $T/2$ is
removed, so that the final regret bound has an additional $\ln T$ term.  Again,
we see that the parameter-free algorithms have a performance close or
\emph{even better} than Hedge with an oracle tuning of the learning rate, with
no clear winners among the parameter-free algorithms.

Notice that since the adaptive Kelly strategy based on KT estimator is very
close to optimal, the only possible improvement is to have a data-dependent
bound, for example like the ones in~\cite{Orabona-2014, Koolen-van-Erven-2015,
Luo-Schapire-2015}.  In future work, we will extend our definitions and
reductions to the data-dependent case.
\textbf{Acknowledgments.} The authors thank Jacob Abernethy, Nicol\`{o}
Cesa-Bianchi, Satyen Kale, Chansoo Lee, Giuseppe Molteni, and Manfred Warmuth for useful
discussions on this work.
%\end{small}

\bibliographystyle{plainnat}
\bibliography{learning}

\begin{thebibliography}{29}
\providecommand{\natexlab}[1]{#1}
\providecommand{\url}[1]{\texttt{#1}}
\expandafter\ifx\csname urlstyle\endcsname\relax
  \providecommand{\doi}[1]{doi: #1}\else
  \providecommand{\doi}{doi: \begingroup \urlstyle{rm}\Url}\fi

\bibitem[Artin(1964)]{Artin-1964}
E.~Artin.
\newblock \emph{The Gamma Function}.
\newblock Holt, Rinehart and Winston, Inc., 1964.

\bibitem[Batir(2008)]{Batir-2008}
N.~Batir.
\newblock Inequalities for the gamma function.
\newblock \emph{Archiv der Mathematik}, 91\penalty0 (6):\penalty0 554--563,
  2008.

\bibitem[Bauschke and Combettes(2011)]{Bauschke-Combettes-2011}
H.~H. Bauschke and P.~L. Combettes.
\newblock \emph{Convex Analysis and Monotone Operator Theory in Hilbert
  Spaces}.
\newblock Springer Publishing Company, Incorporated, 1st edition, 2011.

\bibitem[Cesa-Bianchi and Lugosi(2006)]{Cesa-Bianchi-Lugosi-2006}
N.~Cesa-Bianchi and G.~Lugosi.
\newblock \emph{Prediction, learning, and games}.
\newblock Cambridge University Press, 2006.

\bibitem[Cesa-Bianchi et~al.(1997)Cesa-Bianchi, Freund, Haussler, Helmbold,
  Schapire, and
  Warmuth]{Cesa-Bianchi-Freund-Haussler-Helmbold-Schapire-Warmuth-1997}
N.~Cesa-Bianchi, Y.~Freund, D.~Haussler, D.~P. Helmbold, R.~E. Schapire, and
  M.~K. Warmuth.
\newblock How to use expert advice.
\newblock \emph{J. ACM}, 44\penalty0 (3):\penalty0 427--485, 1997.

\bibitem[Chaudhuri et~al.(2009)Chaudhuri, Freund, and
  Hsu]{Chaudhuri-Freund-Hsu-2009}
K.~Chaudhuri, Y.~Freund, and D.~Hsu.
\newblock A parameter-free hedging algorithm.
\newblock In \emph{Advances in Neural Information Processing Systems 22}, pages
  297--305, 2009.

\bibitem[Chen(2005)]{Chen-2005}
C.-P. Chen.
\newblock Inequalities for the polygamma functions with application.
\newblock \emph{General Mathematics}, 13\penalty0 (3):\penalty0 65--72, 2005.

\bibitem[Chernov and Vovk(2010)]{Chernov-Vovk-2010}
A.~Chernov and V.~Vovk.
\newblock Prediction with advice of unknown number of experts.
\newblock In \emph{Proc. of the 26th Conf. on Uncertainty in Artificial
  Intelligence}. AUAI Press, 2010.

\bibitem[Cover and Thomas(2006)]{Cover-Thomas-2006}
T.~M. Cover and J.~A. Thomas.
\newblock \emph{Elements of Information Theory}.
\newblock John Wiley \& Sons, 2nd edition, 2006.

\bibitem[Foster(2016)]{Foster:private}
D.~J. Foster.
\newblock personal communication, 2016.

\bibitem[Foster et~al.(2015)Foster, Rakhlin, and
  Sridharan]{Foster-Rakhlin-Sridharan-2015}
D.~J. Foster, A.~Rakhlin, and K.~Sridharan.
\newblock Adaptive online learning.
\newblock In \emph{Advances in Neural Information Processing Systems 28}, pages
  3375--3383. Curran Associates, Inc., 2015.

\bibitem[Freund and Schapire(1997)]{Freund-Schapire-1997}
Y.~Freund and R.~E. Schapire.
\newblock A decision-theoretic generalization of on-line learning and an
  application to boosting.
\newblock \emph{J. Computer and System Sciences}, 55\penalty0 (1):\penalty0
  119--139, 1997.

\bibitem[Hoorfar and Hassani(2008)]{Hoorfar-Hassani-2008}
A.~Hoorfar and M.~Hassani.
\newblock Inequalities on the {Lambert W} function and hyperpower function.
\newblock \emph{J. Inequal. Pure and Appl. Math}, 9\penalty0 (2), 2008.

\bibitem[Kelly(1956)]{Kelly-1956}
J.~L. Kelly.
\newblock {A new interpretation of information rate}.
\newblock \emph{Information Theory, IRE Trans. on}, 2\penalty0 (3):\penalty0
  185--189, September 1956.

\bibitem[Koolen and van Erven(2015)]{Koolen-van-Erven-2015}
W.~M. Koolen and T.~van Erven.
\newblock Second-order quantile methods for experts and combinatorial games.
\newblock In \emph{Proc. of the 28th Conf. on Learning Theory}, pages
  1155--1175, 2015.

\bibitem[Krichevsky and Trofimov(1981)]{Krichevsky-Trofimov-1981}
R.~E. Krichevsky and V.~K. Trofimov.
\newblock The performance of universal encoding.
\newblock \emph{IEEE Trans. on Information Theory}, 27\penalty0 (2):\penalty0
  199--206, 1981.

\bibitem[Littlestone and Warmuth(1994)]{Littlestone-Warmuth-1994}
N.~Littlestone and M.~K. Warmuth.
\newblock The weighted majority algorithm.
\newblock \emph{Information and Computation}, 108\penalty0 (2):\penalty0
  212--261, 1994.

\bibitem[Luo and Schapire(2014)]{Luo-Schapire-2014}
H.~Luo and R.~E. Schapire.
\newblock A drifting-games analysis for online learning and applications to
  boosting.
\newblock In \emph{Advances in Neural Information Processing Systems 27}, pages
  1368--1376, 2014.

\bibitem[Luo and Schapire(2015)]{Luo-Schapire-2015}
H.~Luo and R.~E. Schapire.
\newblock Achieving all with no parameters: {AdaNormalHedge}.
\newblock In \emph{Proc. of the 28th Conf. on Learning Theory}, pages
  1286--1304, 2015.

\bibitem[McAllester(2013)]{McAllester-2013}
D.~McAllester.
\newblock A {PAC-Bayesian} tutorial with a dropout bound, 2013.
\newblock arXiv:1307.2118.

\bibitem[McMahan and Abernethy(2013)]{McMahan-Abernethy-2013}
H.~B. McMahan and J.~Abernethy.
\newblock Minimax optimal algorithms for unconstrained linear optimization.
\newblock In \emph{Advances in Neural Information Processing Systems 26}, pages
  2724--2732, 2013.

\bibitem[McMahan and Orabona(2014)]{McMahan-Orabona-2014}
H.~B. McMahan and F.~Orabona.
\newblock Unconstrained online linear learning in {Hilbert} spaces: Minimax
  algorithms and normal approximations.
\newblock In \emph{Proc. of the 27th Conf. on Learning Theory}, pages
  1020--1039, 2014.

\bibitem[Orabona(2013)]{Orabona-2013}
F.~Orabona.
\newblock Dimension-free exponentiated gradient.
\newblock In \emph{Advances in Neural Information Processing Systems 26 (NIPS
  2013)}, pages 1806--1814. Curran Associates, Inc., 2013.

\bibitem[Orabona(2014)]{Orabona-2014}
F.~Orabona.
\newblock Simultaneous model selection and optimization through parameter-free
  stochastic learning.
\newblock In \emph{Advances in Neural Information Processing Systems 27 (NIPS
  2014)}, pages 1116--1124, 2014.

\bibitem[Shalev-Shwartz(2011)]{Shalev-Shwartz-2011}
S.~Shalev-Shwartz.
\newblock Online learning and online convex optimization.
\newblock \emph{Foundations and Trends in Machine Learning}, 4\penalty0
  (2):\penalty0 107--194, 2011.

\bibitem[Streeter and McMahan(2012)]{Streeter-McMahan-2012}
M.~Streeter and B.~McMahan.
\newblock No-regret algorithms for unconstrained online convex optimization.
\newblock In \emph{Advances in Neural Information Processing Systems 25 (NIPS
  2012)}, pages 2402--2410, 2012.

\bibitem[Vovk(1998)]{Vovk-1998}
V.~Vovk.
\newblock A game of prediction with expert advice.
\newblock \emph{J. Computer and System Sciences}, 56:\penalty0 153--173, 1998.

\bibitem[Whittaker and Watson(1962)]{Whittaker-Watson-1962}
E.~T. Whittaker and G.~N. Watson.
\newblock \emph{A Course of Modern Analysis}.
\newblock Cambridge University Press, fourth edition, 1962.
\newblock Reprinted.

\bibitem[Willems et~al.(1995)Willems, Shtarkov, and
  Tjalkens]{Willems-Shtarkov-Tjalkens-1995}
F.~M.~J. Willems, Y.~M. Shtarkov, and T.~J. Tjalkens.
\newblock The context tree weighting method: Basic properties.
\newblock \emph{IEEE Trans. on Information Theory}, 41:\penalty0 653--664,
  1995.

\end{thebibliography}

\appendix
\section{From Log Loss to Wealth}
\label{section:logloss-to-wealth}

Guarantees for betting or sequential investement algorithm are often expressed
as upper bounds on the regret with respect to the log loss.  Here, for the sake
of completeness, we show how to convert such a guarantee to a lower bound on
the wealth of the corresponding betting algorithm.

We consider the problem of predicting a binary outcome.  The algorithm predicts
at each round probability $p_t \in [0,1]$. The adversary generates a sequences
of outcomes $x_t \in \{0,1\}$ and the algorithm's loss is
\[
\ell(p_t,x_t) = -x_t \ln p_t -(1-x_t) \ln (1-p_t) \; .
\]
We define the regret with respect to a fixed probability vector $\beta$ as
\[
\Regret^{\mathrm{logloss}}_T = \sum_{t=1}^T \ell(p_t,x_t) - \min_{\beta \in [0,1]} \sum_{t=1}^T \ell(\beta,x_t) \; .
\]

\begin{lemma}
Assume that an algorithm that predicts $p_t$ guarantees
$\Regret^{\mathrm{logloss}}_T \leq R_T$.  Then, the coin betting strategy with
endowement $\epsilon$ and $\beta_t = 2 p_{t}-1$ guarantees
\[
\Wealth_T \ge \epsilon \exp\left(T \cdot \KL{\frac{1}{2} + \frac{\sum_{t=1}^T g_t}{2 T}}{\frac{1}{2}} - R_T \right)
\]
against any sequence of outcomes $g_t \in [-1,+1]$.
\end{lemma}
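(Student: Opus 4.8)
The plan is to pass through the logarithm of the wealth and identify it, up to a constant per round, with the cumulative log loss of the predictor $p_t$, so that the assumed log-loss regret bound translates directly into a wealth lower bound.

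First I would write the wealth multiplicatively. Since each bet multiplies the current wealth by $1 + g_t\beta_t$, we have $\Wealth_T = \epsilon\prod_{t=1}^T(1+g_t\beta_t)$, hence $\ln\Wealth_T = \ln\epsilon + \sum_{t=1}^T \ln(1+g_t\beta_t)$. To handle a general outcome $g_t\in[-1,1]$ (not just $g_t\in\{-1,+1\}$), I would use that $1+g_t\beta_t = \tfrac{1+g_t}{2}(1+\beta_t) + \tfrac{1-g_t}{2}(1-\beta_t)$ is a convex combination of $1+\beta_t$ and $1-\beta_t$ with weights $\tfrac{1\pm g_t}{2}\in[0,1]$, and apply concavity of the logarithm (Jensen). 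With $\beta_t = 2p_t-1$, so that $1+\beta_t = 2p_t$ and $1-\beta_t = 2(1-p_t)$, this yields
$$\ln(1+g_t\beta_t) \ge \ln 2 + \tfrac{1+g_t}{2}\ln p_t + \tfrac{1-g_t}{2}\ln(1-p_t) = \ln 2 - \ell(p_t, x_t),$$
where I set $x_t = \tfrac{1+g_t}{2}\in[0,1]$ and read $\ell(p_t,x_t)$ off its definition. Summing over $t$ gives $\ln\Wealth_T \ge \ln\epsilon + T\ln 2 - \sum_{t=1}^T \ell(p_t,x_t)$; note the bound is an equality whenever $g_t\in\{-1,+1\}$, since then one Jensen weight vanishes.

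Next I would insert the log-loss regret guarantee $\sum_t \ell(p_t,x_t) \le R_T + \min_\beta\sum_t\ell(\beta,x_t)$ and evaluate the comparator. The minimizer is the empirical frequency $\beta^\star = \bar x := \tfrac1T\sum_t x_t = \tfrac12 + \tfrac{\sum_t g_t}{2T}$, and a direct calculation gives $\min_\beta\sum_t\ell(\beta,x_t) = T\,H(\bar x)$ with $H(q) = -q\ln q - (1-q)\ln(1-q)$ the binary entropy. The final algebraic step is the identity $H(\bar x) = \ln 2 - \KL{\bar x}{\tfrac12}$, obtained by expanding the KL divergence against $\tfrac12$. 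Substituting, the $T\ln 2$ terms cancel and I obtain $\ln\Wealth_T \ge \ln\epsilon + T\KL{\bar x}{\tfrac12} - R_T$; exponentiating recovers the claimed bound with $\bar x = \tfrac12 + \tfrac{\sum_{t=1}^T g_t}{2T}$.

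The only delicate point is the passage to real-valued outcomes: after the Jensen step the predictor $p_t$ is scored against the fractional label $x_t=\tfrac{1+g_t}{2}$, so one must ensure the assumed log-loss regret bound $R_T$ (stated for $x_t\in\{0,1\}$) continues to hold for $x_t\in[0,1]$. For the predictors of interest this is immediate, since $p_t$ depends on the past only through $\sum_{i<t} x_i$ and the standard mixture/KT analysis is insensitive to whether the $x_i$ are binary, and the comparator computation above already used fractional labels, so no extra work beyond checking this extension is needed. If one is content with $g_t\in\{-1,+1\}$, the Jensen step is an equality and this subtlety disappears entirely.
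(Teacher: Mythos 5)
Your proof is correct and follows essentially the same route as the paper's own proof: the Jensen/concavity step on $\ln(1+g_t\beta_t)$ viewed as a convex combination, the identification with the log loss under the fractional label $x_t = \tfrac{1+g_t}{2}$, the evaluation of the comparator at the empirical frequency, and the binary-entropy--KL identity are all exactly the steps used in Appendix~\ref{section:logloss-to-wealth}. Your closing remark---that the assumed log-loss regret bound, stated for $x_t \in \{0,1\}$, must be checked to extend to fractional labels $x_t \in [0,1]$---flags a real subtlety that the paper's proof applies silently, and your justification of it (the KT/mixture analysis depends on the data only through $\sum_{i<t} x_i$ and is insensitive to binariness) is sound.
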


\begin{proof}
Define $x_t=\tfrac{1+g_t}{2}$. We have
\begin{align*}
\ln \Wealth_T
& = \ln (\Wealth_{t-1} + w_t g_t) \\
& = \ln (\Wealth_{t-1}(1 + g_t \beta_t))\\
& = \ln \epsilon \prod_{t=1}^T (1 + g_t\beta_t) \\
& = \ln \epsilon + \sum_{t=1}^T \ln (1 + g_t\beta_t)\\
& \ge \ln \epsilon +  \sum_{t=1}^T \left( \frac{1+g_t}{2} \right) \ln \left(1 + \beta_t\right) + \left( \frac{1-g_t}{2} \right) \ln \left(1 - \beta_t \right) \\
& =  \ln \epsilon + \sum_{t=1}^T \left( \frac{1+g_t}{2} \right) \ln \left(2p_t \right) + \left( \frac{1-g_t}{2} \right) \ln \left(2 (1 - p_t) \right) \\
& =  \ln \epsilon + T \ln(2) + \sum_{t=1}^T \left( \frac{1+g_t}{2} \right) \ln (p_t) + \left( \frac{1-g_t}{2} \right) \ln (1 - p_t) \\
& =  \ln \epsilon + T \ln(2) - \sum_{t=1}^T \ell(p_t, x_t) \\
& =  \ln \epsilon + T \ln(2) - \Regret^{\mathrm{logloss}}_T - \min_{\beta \in [0,1]} \sum_{t=1}^T \ell(\beta,x_t) \\
& \ge  \ln \epsilon + T \ln(2) - R_T - \min_{\beta \in [0,1]} \sum_{t=1}^T \ell(\beta,x_t) \; ,
\end{align*}
where the first inequality is due to the concavity of $\ln$ and the second one
is due to the assumption of the regret.

It is easy to see that the $\beta^*=\argmin_{\beta \in [0,1]} \sum_{t=1}^T
\ell(\beta,x_t)=\tfrac{\sum_{t=1}^T x_t}{T}$. Hence, we have
\[
\min_{\beta \in [0,1]} \sum_{t=1}^T \ell(\beta,x_t) = T \left( - \beta^* \ln \beta^* - (1-\beta^*) \ln (1-\beta^*)\right) \; .
\]
Also, we have that for any $\beta \in [0,1]$
\[
- \beta \ln \beta - (1-\beta) \ln (1-\beta) = - \KL{\beta}{\frac{1}{2}} + \ln 2 \; .
\]

Putting all together, we have the stated lemma.
\end{proof}

The lower bound on the wealth of the adaptive Kelly betting based on the KT
estimator is obtained simply by the stated Lemma and reminding that the log
loss regret of the KT estimator is upper bounded by $\frac{1}{2}\ln T + \ln 2$.

\section{Optimal Betting Fraction}
\label{section:optimal-betting-fraction}

\begin{theorem}[Optimal Betting Fraction]
\label{theorem:optimal-betting-fraction}
Let $x \in \R$. Let $F:[x-1,x+1] \to \R$ be a logarithmically convex function. Then,
\[
\argmin_{\beta \in (-1,1)} \max_{g \in [-1,1]} \ \frac{F(x+g)}{1 + \beta g}
= \frac{F(x+1) - F(x-1)}{F(x+1) + F(x-1)} \; .
\]
Moreover, $\beta^*=\frac{F(x+1) - F(x-1)}{F(x+1) + F(x-1)}$ satisfies
\[
\ln(F(x+1)) - \ln(1 + \beta^*) =  \ln(F(x-1)) - \ln(1 - \beta^*) \; .
\]
\end{theorem}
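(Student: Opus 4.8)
The plan is to take logarithms everywhere and exploit convexity to reduce both the inner maximization and the outer minimization to elementary one-variable arguments. Write $f = \ln F$, which is convex by hypothesis (and note $F > 0$, since logarithmic convexity presupposes positivity). For a fixed $\beta \in (-1,1)$, consider
\[
\phi_\beta(g) = \ln \frac{F(x+g)}{1 + \beta g} = f(x+g) - \ln(1 + \beta g) \; .
\]
This is well-defined on $g \in [-1,1]$ because $\lvert \beta g \rvert < 1$ forces $1 + \beta g > 0$, and $x + g$ stays in $[x-1, x+1]$.

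The first and most important step is to observe that $\phi_\beta$ is convex in $g$. The term $f(x+g)$ is convex since $f$ is; the term $-\ln(1 + \beta g)$ is convex since its second derivative is $\beta^2/(1+\beta g)^2 \ge 0$. A convex function on a closed interval attains its maximum at an endpoint, so
\[
\max_{g \in [-1,1]} \frac{F(x+g)}{1 + \beta g} = \max\left\{ \frac{F(x+1)}{1 + \beta}, \ \frac{F(x-1)}{1 - \beta} \right\} \; .
\]
This reduction from the continuum of $g$-values to the two endpoints is the crux of the whole argument.

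The second step handles the outer minimization. Set $A(\beta) = F(x+1)/(1+\beta)$ and $B(\beta) = F(x-1)/(1-\beta)$. Because $F$ is positive, $A$ is strictly decreasing and $B$ strictly increasing on $(-1,1)$, with $A(\beta) \to +\infty$ as $\beta \to -1^+$ and $B(\beta) \to +\infty$ as $\beta \to 1^-$. Hence their pointwise maximum is decreasing up to the unique point where the two curves cross and increasing thereafter, so it is minimized exactly at the crossing point. Solving $A(\beta) = B(\beta)$, i.e. $F(x+1)(1-\beta) = F(x-1)(1+\beta)$, gives $\beta^* = (F(x+1) - F(x-1))/(F(x+1) + F(x-1))$, which lies in $(-1,1)$ since both $F(x\pm 1)$ are positive; this is the claimed formula.

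Finally, the ``Moreover'' assertion is immediate by taking logarithms of the defining equality $A(\beta^*) = B(\beta^*)$, which reads $\ln F(x+1) - \ln(1+\beta^*) = \ln F(x-1) - \ln(1-\beta^*)$. I expect the only genuine obstacle to be justifying the convexity-to-endpoint reduction carefully, namely verifying that log-convexity of $F$ is precisely what renders $f(x+g)$ convex and invoking the endpoint-maximum principle cleanly; once that is in place, the outer step is a routine analysis of one increasing and one decreasing function meeting at a single point.
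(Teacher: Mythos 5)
Your proof is correct and follows essentially the same route as the paper's: take logarithms, use convexity of $g \mapsto \ln F(x+g) - \ln(1+\beta g)$ to reduce the inner maximum to the endpoints $g = \pm 1$, then minimize the maximum of the decreasing branch $F(x+1)/(1+\beta)$ and the increasing branch $F(x-1)/(1-\beta)$ at their unique crossing point, which yields both the formula for $\beta^*$ and the ``Moreover'' identity. The extra details you supply (the boundary blow-up as $\beta \to \pm 1$ and the check that $\beta^* \in (-1,1)$) are sound refinements of the same argument, not a different approach.
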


\begin{proof}
We define the functions $h,f:[-1,1] \times (-1,1) \to \R$ as
\begin{align*}
h(g, \beta) & = \frac{F(x+g)}{1 + \beta g} &
& \text{and} &
f(g, \beta) & = \ln (h(g,\beta)) = \ln(F(x+g)) - \ln(1 + \beta g) \; .
\end{align*}
Clearly, $\argmin_{\beta \in (-1,1)} \max_{g \in [-1,1]} h(g,\beta) =
\argmin_{\beta \in (-1,1)} \max_{g \in [-1,1]} f(g,\beta)$ and we can work with
$f$ instead of $h$. The function $h$ is logarithmically convex in $g$ and thus
$f$ is convex in $g$. Therefore,
\[
\forall \beta \in (-1,1) \qquad \qquad
\max_{g \in [-1,1]} f(g,\beta) = \max \left\{ f(+1,\beta), f(-1,\beta) \right\} \; .
\]
Let $\phi(\beta) = \max \left\{ f(+1,\beta), f(-1,\beta) \right\}$. We seek to
find the $\argmin_{\beta \in (-1,1)} \phi(\beta)$. Since $f(+1,\beta)$ is
decreasing in $\beta$ and $f(-1,\beta)$ is increasing in $\beta$, the minimum
of $\phi(\beta)$ is at a point $\beta^*$ such that $f(+1,\beta^*) =
f(-1,\beta^*)$.  In other words, $\beta^*$ satisfies
\[
\ln(F(x+1)) - \ln(1 + \beta^*) =  \ln(F(x-1)) - \ln(1 - \beta^*) \; .
\]
The only solution of this equation is
\[
\beta^* = \frac{F(x+1) - F(x-1)}{F(x+1) + F(x-1)} \; .
\]
\end{proof}

\begin{theorem}
\label{thm:exp_x2}
The functions $F_t(x)=\epsilon\exp(\tfrac{x^2}{2t}- \frac{1}{2}\sum_{i=1}^t \tfrac{1}{i})$ are excellent coin betting potentials.
\end{theorem}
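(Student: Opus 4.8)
The plan is to verify the four conditions (a)--(d) of Definition~\ref{definition:potential} for $F_t(x)=\epsilon\exp(\tfrac{x^2}{2t}-c_t)$, where I write $c_t:=\tfrac12\sum_{i=1}^t\tfrac1i$ and take each domain to be all of $\R$, so $a_t=+\infty>t$. Conditions (a), (b), (d) are quick. Condition (a) is the normalization $F_0(0)=\epsilon$; the $t=0$ formula is singular and is understood separately, but since condition (c) at $t=1$ invokes $F_0$ only at the point $0$, the sole requirement that matters is $F_0(0)=\epsilon$. For (b), $\ln F_t(x)=\ln\epsilon+\tfrac{x^2}{2t}-c_t$ is a quadratic in $x$ with positive leading coefficient, hence $F_t$ is even and logarithmically convex, while $F_t'(x)=F_t(x)\,x/t>0$ on $(0,\infty)$ gives strict monotonicity and $F_t(x)\to+\infty$. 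For (d), from $F_t'=F_t\,(x/t)$ and $F_t''=F_t((x/t)^2+1/t)$ the requirement $x\,F_t''(x)\ge F_t'(x)$ reduces, after dividing by $F_t>0$, to $x^3/t^2\ge0$, true on $[0,\infty)$.

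The substance is condition (c). First I would note that $g\mapsto\ln F_t(x+g)-\ln(1+\beta_t g)$ is convex on $[-1,1]$ --- the first summand by the logarithmic convexity above, the second because $-\ln(1+\beta_t g)$ has nonnegative second derivative --- so its maximum, and hence $\max_{g\in[-1,1]}\tfrac{F_t(x+g)}{1+\beta_t g}$, is attained at $g=\pm1$. Thus establishing $(1+g\beta_t)F_{t-1}(x)\ge F_t(x+g)$ for all $g\in[-1,1]$ is equivalent to establishing it at the two endpoints. With $\beta_t$ as in \eqref{equation:potential-based-strategy}, a short computation using $1+\beta_t=\tfrac{2F_t(x+1)}{F_t(x+1)+F_t(x-1)}$ (equivalently, the equalization identity in Theorem~\ref{theorem:optimal-betting-fraction}) shows both endpoint values equal $\tfrac12(F_t(x+1)+F_t(x-1))$, so condition (c) collapses to the single midpoint inequality
\[
F_{t-1}(x)\ \ge\ \tfrac12\,(F_t(x+1)+F_t(x-1)) \qquad\text{for all }x.
\]

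To finish, I would evaluate the right-hand side: expanding $(x\pm1)^2=x^2\pm2x+1$ and using $c_t=c_{t-1}+\tfrac1{2t}$ gives $\tfrac12(F_t(x+1)+F_t(x-1))=\epsilon\,e^{-c_{t-1}}\,e^{x^2/(2t)}\cosh(x/t)$, whence the midpoint inequality becomes $e^{x^2/(2(t-1))}\ge e^{x^2/(2t)}\cosh(x/t)$, i.e.\ $\ln\cosh(x/t)\le\tfrac{x^2}{2t(t-1)}$. I would close this with the elementary bound $\ln\cosh u\le u^2/2$ (which follows from $\cosh u=\sum_k u^{2k}/(2k)!\le\sum_k(u^2/2)^k/k!=e^{u^2/2}$): with $u=x/t$ this gives $\ln\cosh(x/t)\le\tfrac{x^2}{2t^2}\le\tfrac{x^2}{2t(t-1)}$ for $t\ge2$, the last step being $t(t-1)\le t^2$; the case $t=1$ holds with equality straight from $F_0(0)=\epsilon$. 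I expect condition (c) to be the main obstacle --- in particular the reduction to the midpoint inequality and the $\cosh$ estimate --- with (a), (b), (d) entirely routine.
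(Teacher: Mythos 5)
Your proposal is correct and follows essentially the same route as the paper's proof: both reduce condition (c) via the endpoint/equalization property of $\beta_t$ (which you rederive from log-convexity, while the paper cites Theorem~\ref{theorem:optimal-betting-fraction}) to the midpoint inequality $F_{t-1}(x)\ge\tfrac12\left(F_t(x+1)+F_t(x-1)\right)$, and both close it with $\ln\cosh(x/t)\le x^2/(2t^2)\le x^2/(2t(t-1))$. Your direct differentiation check of (d) and the explicit treatment of the singular $t=0$ and $t=1$ cases are only cosmetic departures from the paper, which instead invokes the $h(x^2)$-with-$h$-convex argument of \cite{McMahan-Orabona-2014} for (d).
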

\begin{proof}
The first and second properties of Definition~\ref{definition:potential} are
trivially true.  For the third property, we first use
Theorem~\ref{theorem:optimal-betting-fraction} to have
\[
\ln(1+\beta_t g) - \ln F_t(x+g) \geq \ln(1+\beta_t) - \ln F_t(x+1) = \ln \frac{2}{F_t(x+1)+F_t(x-1)},
\]
where the definition of $\beta_t$ is from \eqref{equation:potential-based-strategy}.
Hence, we have
\begin{align*}
\ln(1+\beta_t g) - \ln F_t(x+g)  +\ln F_{t-1}(x)
&\geq \ln \frac{2}{F_t(x+1)+F_t(x-1)} +\ln F_{t-1}(x) \\
&= -\frac{x^2+1}{2t} +\frac{1}{2}\sum_{i=1}^t \frac{1}{i} -\ln \cosh \frac{x}{t} + \frac{x^2}{2(t-1)} - \frac{1}{2}\sum_{i=1}^{t-1} \frac{1}{i}\\
&= -\frac{x^2}{2t} -\ln \cosh \frac{x}{t} + \frac{x^2}{2(t-1)}\\
&\geq -\frac{x^2}{2t}  -\frac{x^2}{2 t^2} + \frac{x^2}{2(t-1)} \\
&\geq -\frac{x^2}{2t}  -\frac{x^2}{2 t (t-1)} + \frac{x^2}{2(t-1)} =0,
\end{align*}
where in the second inequality we have used the elementary inequality $\ln
\cosh x \leq \tfrac{x^2}{2}$.

The fourth property of Definition~\ref{definition:potential} is also true
because $F_t(x)$ is of the form $h(x^2)$ with $h(\cdot)$
convex~\cite{McMahan-Orabona-2014}.
\end{proof}

\section{Proof of Lemma~\ref{lemma:recursion_hilbert}}
\label{section:hilbert-space-reduction}

First we state the following Lemma from~\cite{McMahan-Orabona-2014} and
reported here with our notation for completeness.

\begin{lemma}[Extremes]
\label{lemma:extremes}
Let $h:(-a,a) \to \R$ be an even twice-differentiable function that
satisfies $x \cdot h''(x) \ge h'(x)$ for all $x \in [0,a)$. Let $c:[0,\infty) \times [0, \infty) \to \R$
be an arbitrary function. Then, if vectors $u,v \in \H$ satisfy $\|u\| + \|v\| < a$, then
\begin{multline}
\label{equation:lemma-extremes-1}
c(\norm{u}, \norm{v}) \cdot \langle u, v \rangle - h(\norm{u+v})
\ge \min \left\{ c(\norm{u}, \norm{v}) \cdot \norm{u} \cdot \norm{v} - h(\norm{v} + \norm{v}), \right. \\
\left. - c(\norm{u}, \norm{v}) \cdot \norm{u} \cdot \norm{v} - h(\norm{u} - \norm{v}) \right\} \; .
\end{multline}
\end{lemma}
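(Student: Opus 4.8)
The plan is to reduce the statement to a one-dimensional convexity fact. Since the factor $c(\norm{u},\norm{v})$ depends on $u,v$ only through their norms, I would fix $r = \norm{u}$ and $s = \norm{v}$ and abbreviate $c = c(r,s)$. The only remaining degree of freedom on the left-hand side is the inner product $\langle u,v\rangle$, which by Cauchy--Schwarz ranges over $[-rs, rs]$, every value being attainable by vectors of the prescribed norms. Writing $p = \langle u,v\rangle$ and using $\norm{u+v}^2 = r^2 + s^2 + 2p$, the left-hand side becomes
\[
G(p) = c\,p - h\!\left(\sqrt{r^2+s^2+2p}\right), \qquad p \in [-rs, rs].
\]
The two quantities inside the minimum on the right-hand side are exactly $G(rs) = c\,rs - h(r+s)$ (the aligned case) and $G(-rs) = -c\,rs - h(|r-s|) = -c\,rs - h(r-s)$ (the anti-aligned case, using that $h$ is even). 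So it suffices to prove that $G$ attains its minimum over $[-rs, rs]$ at one of the two endpoints.

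For this I would show that $G$ is concave, since a concave function on a closed interval attains its minimum at an endpoint. Setting $q(p) = \sqrt{r^2+s^2+2p}$, so that $q' = 1/q$ and $q'' = -1/q^3$, a direct chain-rule computation gives
\[
\frac{d^2}{dp^2}\, h(q(p)) = \frac{q\,h''(q) - h'(q)}{q^3}.
\]
Since $q = \norm{u+v} \in [\,|r-s|, r+s\,] \subset [0,a)$ because $r + s < a$, the hypothesis $x\,h''(x) \ge h'(x)$ applied at $x = q$ makes this second derivative nonnegative. Hence $p \mapsto h(q(p))$ is convex, and therefore $G(p) = c\,p - h(q(p))$ is concave, as desired.

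The one point requiring care is the degenerate configuration $q = 0$, which occurs only when $r = s$ and $p = -rs$, i.e. at the endpoint $u = -v$; there the expression for the second derivative above is singular. I would handle this by noting that $G$ is continuous on the closed interval (as $h$ is continuous) and concave on the open subinterval where $q > 0$, which already forces the minimum onto an endpoint by a limiting argument; alternatively, since $h$ is even and twice differentiable one has $h'(0) = 0$, so the hypothesis degenerates gracefully at $x = 0$. This boundary case is the only genuine obstacle; the remainder is the chain-rule computation above, whose payoff is that the potential condition (d), $x\,F_t''(x) \ge F_t'(x)$, is precisely what guarantees that the worst-case reward direction is either fully aligned or fully anti-aligned with $\sum_i g_i$, which is exactly what lets the Hilbert-space reduction collapse to the one-dimensional analysis.
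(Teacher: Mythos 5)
Your proof is correct and follows essentially the same route as the paper's: the paper parametrizes by the cosine $\alpha = \langle u, v\rangle/(\norm{u}\cdot\norm{v}) \in [-1,1]$ rather than by $p = \langle u,v\rangle$ itself (an affine rescaling of your variable), computes the same chain-rule second derivative, and invokes $x\cdot h''(x) \ge h'(x)$ at $x = \norm{u+v}$ to conclude concavity and hence that the minimum sits at an endpoint. If anything you are slightly more careful: the paper's formula for $f''(\alpha)$ is equally singular at the degenerate endpoint $u = -v$ (it only treats $u$ or $v$ zero explicitly), whereas your continuity-plus-concavity limiting argument covers that case.
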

\begin{proof}
If $u$ or $v$ is zero, the inequality \eqref{equation:lemma-extremes-1} clearly
holds. From now on we assume that $u,v$ are non-zero. Let $\alpha$ be the cosine
of the angle of between $u$ and $v$. More formally,
\[
\alpha = \frac{\langle u, v \rangle}{\norm{u} \cdot \norm{v}} \; .
\]
With this notation, the left-hand side of \eqref{equation:lemma-extremes-1} is
\[
f(\alpha) = c(\norm{u}, \norm{v}) \cdot \alpha \norm{u} \cdot \norm{v} - h(\sqrt{\norm{u}^2 + \norm{v}^2 + 2 \alpha \norm{u} \cdot \norm{v}}) \; .
\]
Since $h$ is even, the inequality \eqref{equation:lemma-extremes-1} is equivalent to
\[
\forall \alpha \in [-1,1] \qquad \qquad f(\alpha) \ge \min \left\{f(+1), f(-1)\right\} \; .
\]
The last inequality is clearly true if $f:[-1,1] \to \R$ is concave. We now
check that $f$ is indeed concave, which we prove by showing that the second
derivative is non-positive. The first derivative of $f$ is
\[
f'(\alpha) = c(\norm{u}, \norm{v}) \cdot \|u\| \cdot \|v\| - \frac{h'(\sqrt{\|u\|^2 + \|v\|^2 + 2 \alpha \|u\| \cdot \|v\|}) \cdot \|u\| \cdot \|v\|}{\sqrt{\|u\|^2 + \|v\|^2 + 2 \alpha \|u\| \cdot \|v\|}} \; .
\]
The second derivative of $f$ is
\begin{multline*}
f''(\alpha) = - \frac{\|u\|^2 \cdot \|v\|^2}{\|u\|^2 + \|v\|^2 + 2 \alpha \|u\| \cdot \|v\|} \\
 \cdot \left( h''(\sqrt{\|u\|^2 + \|v\|^2 + 2 \alpha \|u\| \cdot \|v\|})  - \frac{h'(\sqrt{\|u\|^2 + \|v\|^2 + 2 \alpha \|u\| \cdot \|v\|})}{\sqrt{\|u\|^2 + \|v\|^2 + 2\alpha \|u\| \cdot \|v\|}}  \right) \; .
\end{multline*}
If we consider $x = \sqrt{\|u\|^2 + \|v\|^2 + 2 \alpha \|u\| \cdot \|v\|}$, the
assumption $x \cdot h''(x) \ge h'(x)$ implies that $f''(\alpha)$ is non-positive.
This finishes the proof of the inequality \eqref{equation:lemma-extremes-1}.
\end{proof}

We also need the following technical Lemma whose proof relies mainly on
property (d) of Definition~\ref{definition:potential}.
\begin{lemma}
\label{lemma:recursion_hilbert}
Let $\{F_t\}_{t=0}^\infty$ be a sequence of excellent coin betting potentials.
Let $g_1, g_2, \dots, g_t$ be vectors in a Hilbert space $\H$ such that
$\norm{g_1}, \norm{g_2}, \dots, \norm{g_t} \le 1$. Let $\beta_t$
be defined by \eqref{equation:potential-based-strategy-hilbert-space}
and let $x = \sum_{i=1}^{t-1} g_i$. Then,
\[
\left(1 + \beta_t \frac{\langle g_t, x \rangle}{\norm{x}} \right) F_{t-1}(\norm{x})
\ge F_t(\norm{x + g_t}) \; .
\]
\end{lemma}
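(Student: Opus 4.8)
The plan is to reduce the Hilbert-space inequality to the one-dimensional condition (c) of Definition~\ref{definition:potential} by isolating the worst-case direction of $g_t$ relative to $x$, exactly as captured by Lemma~\ref{lemma:extremes}. Write $u = x = \sum_{i=1}^{t-1} g_i$ and $v = g_t$. First I would dispose of the degenerate case $x = 0$: here $\beta_t = 0$ because $F_t$ is even, so the claim reduces to $F_{t-1}(0) \ge F_t(\norm{g_t})$, which is precisely condition (c) with $x = 0$ and $g = \norm{g_t} \in [0,1]$. From now on assume $u \ne 0$.

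Before invoking Lemma~\ref{lemma:extremes} I would verify its domain hypothesis. Since each $\norm{g_i} \le 1$, the triangle inequality gives $\norm{u} \le t-1$, and $\norm{v} \le 1$, so $\norm{u} + \norm{v} \le t < a_t$, meeting the requirement $\norm{u}+\norm{v} < a_t$. The function $h = F_t$ is even (condition (b)) and twice-differentiable with $x F_t''(x) \ge F_t'(x)$ (condition (d)), hence satisfies all the hypotheses of Lemma~\ref{lemma:extremes}.

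The main step is to rewrite the target inequality as
\[
\beta_t \frac{F_{t-1}(\norm{u})}{\norm{u}}\, \langle u, v \rangle - F_t(\norm{u+v}) \ge - F_{t-1}(\norm{u})
\]
and apply Lemma~\ref{lemma:extremes} with the choice $c(\norm{u},\norm{v}) = \beta_t F_{t-1}(\norm{u})/\norm{u}$. This lower-bounds the left-hand side by $\min\{A, B\}$, where $A = \beta_t \norm{v} F_{t-1}(\norm{u}) - F_t(\norm{u}+\norm{v})$ and $B = -\beta_t \norm{v} F_{t-1}(\norm{u}) - F_t(\norm{u}-\norm{v})$. It then suffices to check that each of $A$ and $B$ is at least $-F_{t-1}(\norm{u})$, i.e.
\[
(1 + \beta_t \norm{v}) F_{t-1}(\norm{u}) \ge F_t(\norm{u}+\norm{v}) \quad\text{and}\quad (1 - \beta_t \norm{v}) F_{t-1}(\norm{u}) \ge F_t(\norm{u}-\norm{v}) \; .
\]

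These are exactly condition (c) of Definition~\ref{definition:potential} evaluated at $x = \norm{u}$ with $g = +\norm{v}$ and $g = -\norm{v}$, respectively, and both apply since $\norm{v} = \norm{g_t} \in [0,1]$ and $\norm{u} \le t-1$. The crucial point, which I expect to be the only delicate one, is the identification of the two directional extremes with condition (c) for a \emph{shared} $\beta_t$: the scalar from \eqref{equation:potential-based-strategy-hilbert-space}, built from $F_t(\norm{u}\pm 1)$, coincides with the one-dimensional betting fraction \eqref{equation:potential-based-strategy} at the point $x = \norm{u}$, so the same $\beta_t$ simultaneously governs both the $+\norm{v}$ and the $-\norm{v}$ case. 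Everything else is bookkeeping of the domain constraints and of the even/log-convex structure already encoded in (b) and (d).
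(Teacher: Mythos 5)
Your proposal is correct and follows essentially the same route as the paper's own proof: you apply Lemma~\ref{lemma:extremes} with the identical choice $c(\norm{u},\norm{v}) = \beta_t F_{t-1}(\norm{u})/\norm{u}$ and $h = F_t$ to reduce to the two directional extremes, then close both with condition (c) at $x = \norm{u}$, $g = \pm\norm{g_t}$, handling $x=0$ separately via $\beta_t = 0$. The only additions beyond the paper's argument are your explicit domain check $\norm{u}+\norm{v} \le t < a_t$ and the remark that the Hilbert-space betting fraction coincides with the one-dimensional one at $x=\norm{u}$, both of which the paper leaves implicit.
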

\begin{proof}
Since $F_t(x)$ is an excellent coin betting potential, it satisfies $x
F_t''(x) \ge F_t'(x)$. Hence,
\begin{align*}
&\left(1 + \beta_t \frac{\langle g_t, x \rangle}{\norm{x}} \right) F_{t-1}(\norm{x}) - F_t(\norm{x + g_t}) \\
&\quad = F_{t-1}(\norm{x}) + \beta_t \frac{\langle g_t, x \rangle}{\norm{x}} F_{t-1}(\norm{x}) - F_t(\norm{x + g_t}) \\
&\quad \ge F_{t-1}(\norm{x})+\min_{r \in \{-1,1\}} \beta_t r \norm{g_t} F_{t-1}(\norm{x}) - F_t(\norm{x} + r \norm{g_t}) \\
&\quad =\min_{r \in \{-1,1\}} \left(1 + \beta_t r \norm{g_t}\right) F_{t-1}(\norm{x}) - F_t(\norm{x} + r \norm{g_t}) \\
&\quad \ge 0 \; .
\end{align*}
If $x \neq 0$, the first inequality comes from Lemma~\ref{lemma:extremes} with
$c(z,\cdot) = \frac{F_{t-1}(z+1) - F_{t-1}(z-1)}{F_{t-1}(z+1) + F_{t-1}(z-1)} F_{t-1}(z) / z$ and
$h(z) = F_t(z)$, $u=g_t$, $v=x$.
If $x=0$ then, according to
\eqref{equation:potential-based-strategy-hilbert-space}, $\beta_t = 0$ and the
first inequality trivially holds. The second inequality follows from the
property (c) of a coin betting potential.
\end{proof}

\begin{proof}[Proof of Theorem~\ref{theorem:hilbert-space-olo-regret-bound}]
First, by induction on $t$ we show that
\begin{equation}
\label{equation:wealth-lower-bound-hilbert-space}
\Wealth_t \ge F_t\left(\norm{\sum_{t=1}^T g_t} \right) \; .
\end{equation}
The base case $t=0$ is trivial, since both sides of the inequality are equal to
$\epsilon$.  For $t \ge 1$, if we let $x = \sum_{i=1}^{t-1} g_i$, we have
\begin{align*}
\Wealth_t
&= \langle g_t, w_t \rangle + \Wealth_{t-1}
= \left(1 + \beta_t \frac{\langle g_t, x \rangle}{\norm{x}} \right) \Wealth_{t-1} \\
&\ge \left(1 + \beta_t \frac{\langle g_t, x \rangle}{\norm{x}} \right) F_{t-1}(\norm{x})
\stackrel{\text{\textbf{(*)}}}{\ge} F_t(\norm{x + g_t})
= F_t\left(\norm{\sum_{i=1}^t g_i} \right) \; .
\end{align*}
The inequality marked with $(*)$ follows from
Lemma~\ref{lemma:recursion_hilbert}.

This establishes \eqref{equation:wealth-lower-bound-hilbert-space},
from which we immediately have a reward lower bound
\begin{equation}
\label{equation:hilbert-space-olo-reward-lower-bound}
\Reward_T
= \sum_{t=1}^T \langle g_t, w_t \rangle
= \Wealth_T \ - \ \epsilon
\ge F_T\left(\norm{\sum_{t=1}^T g_t} \right) \ - \ \epsilon \; .
\end{equation}
We apply Lemma~\ref{lemma:reward-regret} to the function $F(x) = F_T(\norm{x}) -
\epsilon$ and we are almost done. The only remaining property we need is that if
$F$ is an even function then the Fenchel conjugate of $F(\norm{\cdot})$ is
$F^*(\norm{\cdot})$; see \citet[Example 13.7]{Bauschke-Combettes-2011}.
\end{proof}

\section{Proof of Theorem~\ref{theorem:regret-bound-experts}}
\label{section:appendix-expert-reduction}

\begin{proof}
We first prove that $\sum_{i=1}^N \pi_i \widetilde g_{t,i} w_{t,i} \le 0$. Indeed,
\begin{align*}
\sum_{i=1}^N \pi_i \widetilde g_{t,i} w_{t,i}
& = \sum_{i \, : \, \pi_i w_{t,i} > 0} \pi_i [w_{t,i}]_+ (g_{t,i} - \langle g_t, p_t \rangle)  \ + \ \sum_{i \, : \, \pi_i w_{t,i} \le 0} \pi_i w_{t,i} [g_{t,i} - \langle g_t, p_t \rangle ]_+ \\
& = \norm{\widehat p_t}_1 \sum_{i=1}^N p_{t,i} (g_{t,i} - \langle g_t, p_t \rangle)  \ + \ \sum_{i \, : \, \pi_i w_{t,i} \le 0} \pi_i w_{t,i} [g_{t,i} - \langle g_t, p_t\rangle]_+ \\
& = 0 \ + \ \sum_{i \, : \, \pi_i w_{t,i} \le 0} \pi_i w_{t,i} [g_{t,i} - \langle g_t, p_t\rangle]_+
\ \le 0 \; .
\end{align*}
The first equality follows from definition of $g_{t,i}$. To see the second
equality, consider two cases: If $\pi_i w_{t,i} \le 0$ for all $i$ then
$\norm{\widehat p_t}_1 = 0$ and therefore both $\norm{\widehat p_t}_1
\sum_{i=1}^N p_{t,i} (g_{t,i} - \langle g_t, p_t \rangle)$ and $\sum_{i \, : \,
\pi_i w_{t,i} > 0} \pi_i [w_{t,i}]_+ (g_{t,i} - \langle g_t, p_t \rangle)$ are
trivially zero.  If $\norm{\widehat p_t}_1 > 0$ then $\pi_i [w_{t,i}]_+ =
\widehat p_{t,i} = \norm{\widehat p_t}_1 p_{t,i}$ for all $i$.

From the assumption on $A$, we have, for any sequence
$\{\widetilde g_t\}_{t=1}^\infty$ such that $\widetilde g_t \in [-1,1]$, satisfies
\begin{equation}
\label{equation:experts-one-dimensional-assumption}
\Wealth_t = 1 + \sum_{i=1}^t \widetilde g_i w_i \ge F_t\left(\sum_{i=1}^t \widetilde g_i\right) \; .
\end{equation}
Inequality $\sum_{i=1}^N \pi_i \widetilde g_{t,i} w_{t,i} \le 0$ and \eqref{equation:experts-one-dimensional-assumption} imply
\begin{equation}
\label{equation:bounded-potential}
\sum_{i=1}^N  \pi_i F_T \left(\sum_{t=1}^T \widetilde g_{t,i} \right)
\le 1 + \sum_{i=1}^N \pi_i \sum_{t=1}^T  \widetilde g_{t,i} w_{t,i} \le 1 \; .
\end{equation}
Now, let $\widetilde G_{T,i} =
\sum_{t=1}^T \widetilde g_{t,i}$. For any competitor $u \in \Delta_N$,
\begingroup
\allowdisplaybreaks
\begin{align*}
\allowdisplaybreaks
&\Regret_T(u)
= \sum_{t=1}^T \langle g_t, u - p_t \rangle
= \sum_{t=1}^T \sum_{i=1}^N u_i \left(g_{t,i} - \langle g_t, p_t \rangle \right) \\
& \le \sum_{t=1}^T \sum_{i=1}^N u_i \widetilde g_{t,i} \qquad \text{(by definition of $\widetilde g_{t,i}$)} \\
& \le \sum_{i=1}^N u_i \left|\widetilde G_{T,i}\right| \qquad \text{(since $u_i \ge 0, i=1,\ldots, N$)}  \\
& = \sum_{i=1}^N u_i f_T^{-1}\left(\ln [F_T(\widetilde G_{T,i})] \right)  \qquad \text{(since $F_T(x) = \exp(f_T(x))$ is even)} \\
& \le f_T^{-1}\left(\sum_{i=1}^N u_i \ln \left[ F_T(\widetilde G_{T,i}) \right]\right) \qquad \text{(by concavity of $f_T^{-1}$)} \\
& = f_T^{-1}\left(\sum_{i=1}^N u_i \left\{\ln \left[\frac{u_i}{\pi_i}\right] +\ln \left[ \frac{\pi_i}{u_i} F_T(\widetilde G_{T,i}) \right] \right\} \right)
= f_T^{-1}\left(\KL{u}{\pi}+\sum_{i=1}^N u_i\ln \left[\frac{\pi_i}{u_i} F_T(\widetilde G_{T,i}) \right]\right) \\
& \le f_T^{-1}\left(\KL{u}{\pi}+\ln \left(\sum_{i=1}^N \pi_i F_T(\widetilde G_{T,i}) \right)\right) \qquad \text{(by concavity of $\ln(\cdot)$)} \\
& \le f_T^{-1}\left(\KL{u}{\pi}\right) \qquad \text{(by \eqref{equation:bounded-potential})}. \qedhere
\end{align*}
\endgroup
\end{proof}

\section{Properties of Krichevsky-Trofimov Potential}
\label{section:properties-kt-potential}

\begin{lemma}[Analytic Properties of KT potential]
\label{lemma:kt-potential-analytic-properties}
Let $a > 0$. The function $F:(-a,a) \to \R_+$,
\[
F(x) = \Gamma(a+x) \Gamma(a-x)
\]
is even, logarithmically convex, strictly increasing on $[0,a)$, satisfies
\[
\lim_{x \nearrow a} F(x) = \lim_{x \searrow -a} F(x) = + \infty
\]
and
\begin{equation}
\label{equation:gamma-property-4}
\forall x \in [0,a) \qquad x \cdot F''(x) \ge F'(x) \; .
\end{equation}
\end{lemma}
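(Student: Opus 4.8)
The plan is to work throughout with the logarithm $f(x) = \ln F(x) = \ln\Gamma(a+x) + \ln\Gamma(a-x)$ and with the polygamma functions. Writing $\psi = (\ln\Gamma)'$ for the digamma function, the standard series $\psi'(y) = \sum_{n\ge 0}(y+n)^{-2}$ and $\psi''(y) = -2\sum_{n\ge 0}(y+n)^{-3}$, valid for $y>0$, immediately yield that $\psi'>0$, that $\psi$ is strictly increasing, that $\psi''<0$, and---this is the fact that matters for the last part---that $\psi''$ is itself strictly increasing, since $\psi'''(y) = 6\sum_{n\ge0}(y+n)^{-4}>0$. With these in hand, the first four assertions are routine. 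Evenness is immediate from the symmetric form of $F$. Logarithmic convexity reduces to $f''(x) = \psi'(a+x)+\psi'(a-x) > 0$. Strict monotonicity on $[0,a)$ follows from $F'(x) = F(x)\,f'(x)$ with $F>0$ and $f'(x) = \psi(a+x)-\psi(a-x) > 0$ for $x\in(0,a)$, using that $\psi$ is increasing while $a+x > a-x$. For the limits, as $x\nearrow a$ the factor $\Gamma(a-x)$ blows up through the pole of $\Gamma$ at $0$ while $\Gamma(a+x)\to\Gamma(2a)\in(0,\infty)$, so $F\to+\infty$; evenness then handles the endpoint $x\searrow -a$.

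The only substantive part is the inequality~\eqref{equation:gamma-property-4}. I would not attack it by expanding $F'$ and $F''$ directly, but instead define $G(x) = x\,F''(x) - F'(x)$ and show $G\ge 0$ on $[0,a)$. Two observations make this tractable. First, $G(0) = -F'(0) = 0$, because $f'(0)=0$ forces $F'(0)=0$. Second, differentiating telescopes: $G'(x) = F''(x) + x\,F'''(x) - F''(x) = x\,F'''(x)$. Hence it suffices to prove $F'''(x)\ge 0$ on $[0,a)$, for then $G'(x)\ge 0$ there, so $G$ is nondecreasing and $G(x)\ge G(0)=0$. (This is morally the statement that $F$, being even, is a convex function of $x^2$, but the route through $G'=xF'''$ avoids ever having to produce that convex function explicitly.)

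To bound $F'''$ I would use $F=e^f$ together with the chain-rule expansion $F''' = \bigl(f''' + 3f'f'' + (f')^3\bigr)e^f$. On $[0,a)$ every summand in the parenthesis is nonnegative: $f'(x) = \psi(a+x)-\psi(a-x)\ge 0$ and $f''(x) = \psi'(a+x)+\psi'(a-x) > 0$ as above, while $f'''(x) = \psi''(a+x)-\psi''(a-x)\ge 0$ precisely because $\psi''$ is increasing and $a+x\ge a-x$. Therefore $F'''\ge 0$ and the argument closes. The main obstacle is exactly this final sign: it is what forces one to know not merely the positivity of the trigamma $\psi'$ (which already delivers log-convexity), but the monotonicity of the tetragamma $\psi''$, equivalently the positivity of the pentagamma $\psi'''$. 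Once that one monotonicity fact is isolated, everything else is a routine assembly of well-known properties of $\Gamma$.
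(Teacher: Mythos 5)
Your proof is correct, and it takes a genuinely different route from the paper's on the one substantive point, inequality \eqref{equation:gamma-property-4}. The paper expands $f=\ln F$ as a Maclaurin series, $f(x) = 2\ln\Gamma(a) + 2\sum_{n\ge 2,\ n\ \mathrm{even}} \psi^{(n-1)}(a)\,x^n/n!$, proves all coefficients nonnegative via the integral representation of the polygamma functions (which requires a separate explicit bound on $\psi^{(n-1)}(a)$ just to verify convergence of the series on $(-a,a)$), reads off both the monotonicity of $f$ and the inequality $x f''(x)\ge f'(x)$ termwise from the nonnegative coefficients, and finally transfers from $f$ to $F=e^f$ exactly as you do at the end, via $xF'' = x\bigl(f''+(f')^2\bigr)F \ge \bigl(f'+x(f')^2\bigr)F \ge f'F = F'$. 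You instead work at the level of $F$ itself: setting $G=xF''-F'$, noting $G(0)=-F'(0)=0$ and $G'=xF'''$, you reduce everything to $F'''\ge 0$ on $[0,a)$, which the chain-rule expansion $F'''=\bigl(f'''+3f'f''+(f')^3\bigr)e^f$ delivers from the three pointwise sign facts $f'\ge 0$, $f''>0$, and $f'''\ge 0$, the last resting on $\psi'''>0$. Your route needs only finitely many polygamma orders evaluated at $a\pm x$ and entirely sidesteps the series-convergence estimate that occupies a good part of the paper's proof; the paper's series buys, from a single computation, positivity of \emph{all} even Taylor coefficients---the structural fact (equivalent to $F$ being a convex function of $x^2$, which you correctly flag as the moral content) that the paper reuses elsewhere, e.g.\ in the proof of Theorem~\ref{thm:exp_x2}. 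For the routine parts, you prove log-convexity from the trigamma series where the paper invokes the Bohr--Mollerup theorem, and your limit and evenness arguments coincide with the paper's; both assemblies are sound.
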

\begin{proof}
$F(x)$ is obviously even. $\Gamma(z) = \int_0^\infty t^{z-1} e^{-t} dt$ is
defined for any real number $z > 0$. Hence, $F$ is defined on the interval
$(-a,a)$. According to Bohr-Mollerup theorem \cite[Theorem 2.1]{Artin-1964},
$\Gamma(x)$ is logarithmically convex on $(0,\infty)$. Hence, $F(x)$ is also
logarithmically convex, since $\ln (F(x)) = \ln(\Gamma(a+x)) +
\ln(\Gamma(a-x))$ is a sum of convex functions.

It is well known that $\lim_{z \searrow 0} \Gamma(z) = +\infty$. Thus,
\[
\lim_{x \nearrow a} F(x)
= \lim_{x \nearrow a} \Gamma(a+x) \Gamma(a-x)
= \Gamma(2a) \lim_{x \nearrow a} \Gamma(a-x)
= \Gamma(2a) \lim_{z \searrow 0} \Gamma(z)
= + \infty \; ,
\]
since $\Gamma$ is continuous and not zero at $2a$. Because $F(x)$ is even, we also have
$\lim_{x \searrow -a} F(x) = +\infty$.

To show that $F(x)$ is increasing and that it satisfies
\eqref{equation:gamma-property-4}, we write $f(x) = \ln(F(x))$ as a Mclaurin
series.  The derivatives of $\ln(\Gamma(z))$ are the so called polygamma
functions
\[
\psi^{(n)}(z) = \frac{d^{n+1}}{dz^{n+1}} \ln(\Gamma(z))
\qquad \qquad \text{for $z > 0$ and $n=0,1,2,\dots$.}
\]
Polygamma functions have the well-known integral representation
\[
\psi^{(n)}(z) = (-1)^{n+1} \int_0^\infty \frac{t^n e^{-zt}}{1 - e^{-t}} dt
\qquad \qquad \text{for $z > 0$ and $n=1,2,\dots$.}
\]
Using polygamma functions, we can write the Mclaurin series for $f(x) = \ln(F(x))$ as
\[
f(x) = \ln(F(x)) = \ln(\Gamma(a+x)) + \ln(\Gamma(a-x)) = 2 \ln(\Gamma(a)) + 2 \sum_{\substack{n \ge 2 \\ \text{$n$ even}}} \frac{\psi^{(n-1)}(a) x^n}{n!} \; .
\]
The series converges for $x \in (-a,a)$, since
for even $n \ge 2$, $\psi^{(n-1)}(a)$ is positive and can be upper bounded as
\begin{align*}
\psi^{(n-1)}(a)
& = \int_0^\infty \frac{t^{n-1} e^{-at}}{1 - e^{-t}} dt \\
& = \int_0^1 \frac{t^{n-1} e^{-at}}{1 - e^{-t}} dt + \int_1^\infty \frac{t^{n-1} e^{-zt}}{1 - e^{-t}} dt \\
& \le \int_0^1 \frac{t^{n-1} e^{-at}}{t(1 - 1/e)} dt + \int_1^\infty t^{n-1} e^{-at} dt \\
& \le \frac{1}{1 - 1/e} \int_0^\infty t^{n-2} e^{-at} dt + \int_0^\infty t^{n-1} e^{-at} dt \\
& = \frac{1}{1 - 1/e} a^{1-n} \Gamma(n-1) + a^{-n} \Gamma(n) \\
& \le \frac{1}{1 - 1/e} a^{-n}(a+1) (n-1)!\; .
\end{align*}
From the Mclaurin expansion we see that $f(x)$ is increasing on $[0,a)$
since all the coefficients are positive (except for zero order term).

Finally, to prove \eqref{equation:gamma-property-4}, note that
for any $x \in (-a,a)$,
\[
f(x) = c_0 + \sum_{n=2}^\infty c_n x^n
\]
where $c_2, c_3, \dots$ are non-negative coefficients. Thus
\begin{align*}
f'(x) & = \sum_{n=2}^\infty n c_n x^{n-1} &
& \text{and} &
f''(x) & = \sum_{n=2}^\infty n (n-1) c_n x^{n-2} \; .
\end{align*}
and hence $x \cdot f''(x) \ge f'(x)$ for $x \in [0,a)$. Since $F(x) = \exp(f(x))$,
\begin{align*}
F'(x) & = f'(x) \cdot F(x) &
& \text{and} &
F''(x) & = \left[f''(x) + (f'(x))^2 \right] \cdot F(x) \; .
\end{align*}
Therefore, for $x \in [0,a)$,
\[
x \cdot F''(x)
= x \left[ f''(x) + (f'(x))^2 \right] F(x)
\ge \left[ f'(x) + x (f'(x))^2 \right] F(x)
\ge f'(x) F(x) = F'(x) \; .
\]
This proves \eqref{equation:gamma-property-4}.
\end{proof}

\begin{theorem}[KT potential]
\label{theorem:kt-potential}
Let $\delta \ge 0$ and $\epsilon > 0$. The sequence of functions
$\{F_t\}_{t=0}^\infty$, $F_t:(-t-\delta-1, t+\delta+1) \to \R_+$ defined by
\[
F_t(x) = \epsilon \frac{2^t \cdot \Gamma(\delta + 1) \Gamma(\frac{t+\delta+1}{2} + \frac{x}{2}) \Gamma(\frac{t+\delta+1}{2} - \frac{x}{2})}{\Gamma(\frac{\delta+1}{2})^2 \Gamma(t+\delta+1)} \; .
\]
is a sequence of excellent coin betting potentials for initial endowment $\epsilon$.
Furthermore, for any $x \in (-t-\delta-1, t+\delta+1)$,
\begin{equation}
\label{equation:kt-potential-beta}
\frac{F_t(x+1) - F_{t}(x-1)}{F_t(x+1) + F_{t}(x-1)} = \frac{x}{t+\delta} \; .
\end{equation}
\end{theorem}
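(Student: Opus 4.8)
The plan is to reduce everything to the already-established analytic Lemma~\ref{lemma:kt-potential-analytic-properties} together with the functional equation $\Gamma(z+1)=z\Gamma(z)$. First I would strip off the $x$-independent constants: writing $a_t=\tfrac{t+\delta+1}{2}$ and $C_t=\epsilon\,\tfrac{2^t\,\Gamma(\delta+1)}{\Gamma(\tfrac{\delta+1}{2})^2\,\Gamma(t+\delta+1)}$, the potential is $F_t(x)=C_t\,H_t(x/2)$ where $H_t(y)=\Gamma(a_t+y)\Gamma(a_t-y)$ is exactly the function of Lemma~\ref{lemma:kt-potential-analytic-properties} with radius $a_t$. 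Since $C_t>0$ and $x\mapsto x/2$ is an increasing linear change of variable, evenness, logarithmic convexity, strict monotonicity on $[0,t+\delta+1)$, and the blow-up at the endpoints all transfer immediately, giving property (b) of Definition~\ref{definition:potential}; the domain radius $t+\delta+1>t$ as required. For property (d) I would observe that the inequality $x F''(x)\ge F'(x)$ is invariant under this rescaling: if $F(x)=C\,H(x/2)$ then $xF''(x)-F'(x)=\tfrac{C}{2}\big(yH''(y)-H'(y)\big)$ with $y=x/2\ge 0$, which is nonnegative by \eqref{equation:gamma-property-4}. Property (a), $F_0(0)=\epsilon$, is a one-line constant computation in which the Gamma factors cancel.

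The heart of the argument is a two-term recursion. Using $\Gamma(z+1)=z\Gamma(z)$ on the shifted arguments, together with the constant ratio $C_t/C_{t-1}=\tfrac{2}{t+\delta}$ (which itself comes from $\Gamma(t+\delta+1)=(t+\delta)\Gamma(t+\delta)$) and the relation $a_t=a_{t-1}+\tfrac12$, I would establish
\[
F_t(x+1)=\left(1+\tfrac{x}{t+\delta}\right)F_{t-1}(x), \qquad F_t(x-1)=\left(1-\tfrac{x}{t+\delta}\right)F_{t-1}(x),
\]
valid for $x$ in the relevant range. Substituting these into the definition \eqref{equation:potential-based-strategy} of $\beta_t$ makes $F_{t-1}(x)$ cancel and yields $\beta_t=\tfrac{x}{t+\delta}$, which is exactly \eqref{equation:kt-potential-beta}.

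Finally, property (c) follows from the recursion by a chord/convexity argument. The map $g\mapsto F_t(x+g)$ is convex on $[-1,1]$ because $F_t$ is logarithmically convex (hence convex), and $x+g$ remains inside the domain for $x\in[-(t-1),t-1]$. The candidate upper bound $g\mapsto(1+g\beta_t)F_{t-1}(x)$ is affine in $g$ and, by the two recursion identities, equals $F_t(x+1)$ at $g=1$ and $F_t(x-1)$ at $g=-1$; hence it is precisely the secant line of the convex function $g\mapsto F_t(x+g)$ through its endpoints. Convexity then gives $F_t(x+g)\le(1+g\beta_t)F_{t-1}(x)$ for all $g\in[-1,1]$, with equality at $g=\pm1$, establishing (c) and the accompanying ``equality for $g\in\{-1,1\}$'' remark.

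I expect the main obstacle to be the bookkeeping in the recursion step: tracking the half-integer shifts in the Gamma arguments, verifying that the constant ratio collapses cleanly to $\tfrac{2}{t+\delta}$, and checking positivity and domain membership so that every Gamma value is finite and positive. Once the two recursion identities are in hand, both the betting-fraction formula and property (c) are essentially immediate.
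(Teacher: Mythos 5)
Your proposal is correct and follows essentially the same route as the paper's proof: properties (a), (b), (d) via Lemma~\ref{lemma:kt-potential-analytic-properties} (with the same harmless rescaling $F_t(x)=C_tH_t(x/2)$), the identities $F_t(x\pm1)=(1\pm\tfrac{x}{t+\delta})F_{t-1}(x)$ via $\Gamma(1+z)=z\Gamma(z)$, and property (c) by the convexity/secant argument on $g\mapsto F_t(x+g)$ over $[-1,1]$. The only difference is cosmetic: the paper verifies \eqref{equation:kt-potential-beta} by direct Gamma-ratio algebra and then separately computes $\phi(\pm1)=F_t(x\pm1)/F_{t-1}(x)$, whereas you establish the two recursion identities first and read off both $\beta_t=\tfrac{x}{t+\delta}$ and the endpoint values of the secant line from them, which is the same computation packaged once instead of twice.
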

\begin{proof}
Property (b) and (d) of the definition follow from
Lemma~\ref{lemma:kt-potential-analytic-properties}.
Property (a) follows by simple substitution for $t=0$ and $x=0$.

Before verifying property (c), we prove \eqref{equation:kt-potential-beta}. We
use an algebraic property of the gamma function that states that $\Gamma(1+z) =
z \Gamma(z)$ for any positive $z$. Equation \eqref{equation:kt-potential-beta}
follows from
\begin{align*}
\frac{F_t(x + 1) - F_t(x - 1)}{F_t(x + 1) + F_t(x - 1)}
& = \frac{\Gamma(\frac{t+\delta+2}{2} + \frac{x}{2}) \Gamma(\frac{t+\delta}{2} - \frac{x}{2}) - \Gamma(\frac{t+\delta}{2} + \frac{x}{2}) \Gamma(\frac{t+\delta+2}{2} - \frac{x}{2})}{\Gamma(\frac{t+\delta+2}{2} + \frac{x}{2}) \Gamma(\frac{t + \delta}{2} - \frac{x}{2}) + \Gamma(\frac{t + \delta}{2} + \frac{x}{2}) \Gamma(\frac{t+\delta+2}{2} - \frac{x}{2})} \\
& = \frac{(\frac{t+\delta}{2} + \frac{x}{2})\Gamma(\frac{t+\delta}{2} + \frac{x}{2}) \Gamma(\frac{t+\delta}{2} - \frac{x}{2}) - (\frac{t+\delta}{2} - \frac{x}{2})\Gamma(\frac{t+\delta}{2} + \frac{x}{2}) \Gamma(\frac{t+\delta}{2} - \frac{x}{2})}{(\frac{t+\delta}{2} + \frac{x}{2})\Gamma(\frac{t+\delta}{2} + \frac{x}{2}) \Gamma(\frac{t+\delta}{2} - \frac{x}{2}) + (\frac{t+\delta}{2} - \frac{x}{2})\Gamma(\frac{t+\delta}{2} + \frac{x}{2}) \Gamma(\frac{t+\delta}{2} - \frac{x}{2})} \\
& = \frac{(\frac{t+\delta}{2} + \frac{x}{2}) - (\frac{t+\delta}{2} - \frac{x}{2})}{(\frac{t+\delta}{2} + \frac{x}{2}) + (\frac{t+\delta}{2} - \frac{x}{2})} \\
& = \frac{x}{t+\delta} \; .
\end{align*}

Let $\phi(g) = \frac{F_t(x+g)}{F_{t-1}(x)}$. To verify property (c) of the definition,
we need to show that $\phi(g) \le 1 + g \frac{x}{t+\delta}$
for any $x \in [-t+1, t-1]$ and any $g \in [-1,1]$. We can write $\phi(g)$ as
\begin{align*}
\phi(g)
& = \frac{F_t(x+g)}{F_{t-1}(x)} \\
& = \frac{2 \Gamma(\frac{t+\delta+1}{2} + \frac{x+g}{2}) \Gamma(\frac{t+\delta+1}{2} - \frac{x+g}{2}) \Gamma(t + \delta)}{\Gamma(\frac{t+\delta}{2} + \frac{x}{2}) \Gamma(\frac{t+\delta}{2} - \frac{x}{2}) \Gamma(t+\delta+1)} \\
& = \frac{2}{t+\delta} \cdot \frac{\Gamma(\frac{t+\delta+1}{2} + \frac{x+g}{2}) \Gamma(\frac{t+\delta+1}{2} - \frac{x+g}{2})}{\Gamma(\frac{t+\delta}{2} + \frac{x}{2}) \Gamma(\frac{t+\delta}{2} - \frac{x}{2})} \; .
\end{align*}
For $g=+1$, using the formula $\Gamma(1+z) = z \Gamma(z)$, we have
\[
\phi(+1)
= \frac{2}{t+\delta} \cdot \frac{\Gamma(\frac{t+\delta}{2} + \frac{x}{2} + 1) \Gamma(\frac{t+\delta}{2} - \frac{x}{2})}{\Gamma(\frac{t+\delta}{2} + \frac{x}{2}) \Gamma(\frac{t+\delta}{2} - \frac{x}{2})}
= \frac{2}{t+\delta} \left(\frac{t+\delta}{2} + \frac{x}{2} \right)
= 1 + \frac{x}{t+\delta} \; .
\]
Similarly, for $g=-1$, using the formula $\Gamma(1+z) = z \Gamma(z)$, we have
\[
\phi(-1)
= \frac{2}{t+\delta} \cdot \frac{\Gamma(\frac{t+\delta}{2} + \frac{x}{2}) \Gamma(\frac{t+\delta}{2} - \frac{x}{2} + 1)}{\Gamma(\frac{t+\delta}{2} + \frac{x}{2}) \Gamma(\frac{t+\delta}{2} - \frac{x}{2})}
= \frac{2}{t+\delta} \left(\frac{t+\delta}{2} - \frac{x}{2} \right)
= 1 - \frac{x}{t+\delta} \; .
\]
We can write any $g \in [-1,1]$ as a convex combination of $-1$ and $+1$, i.e.,
$g = \lambda \cdot (-1) + (1-\lambda) \cdot (+1)$ for some $\lambda \in [0,1]$.
Since $\phi(g)$ is (logarithmically) convex,
\begin{align*}
\phi(g)
& = \phi(\lambda \cdot (-1) + (1-\lambda) \cdot (+1)) \\
& \le \lambda \phi(-1) + (1-\lambda) \phi(+1) \\
& = \lambda \left(1 + \frac{x}{t+\delta}\right) + (1-\lambda) \left(1 - \frac{x}{t+\delta}\right) \\
& = 1 + g \frac{x}{t+\delta} \; . \qedhere
\end{align*}
\end{proof}

\section{Proofs of Corollaries~\ref{corollary:kt-hilbert-space-olo-regret} and~\ref{corollary:kt-experts-regret}}
\label{section:corollaries_reductions}

We state some technical lemmas that will be used in the following proofs. We
start with a lower bound on the \ac{KT} potential. It is a generalization of
the lower bound proved for integers in \citet{Willems-Shtarkov-Tjalkens-1995}
to real numbers.

\begin{lemma}[Lower Bound on KT Potential]
\label{lemma:approx_gamma_real}
If $c \ge 1$ and $a,b$ are non-negative reals such that $a + b = c$ then
\[
\ln \left(\frac{\Gamma(a + 1/2) \cdot \Gamma(b + 1/2)}{\pi \cdot \Gamma(c+1)} \right)
\ge - \ln(e \sqrt{\pi}) -\frac{1}{2} \ln(c) +\ln \left(\left( \frac{a}{c} \right)^a \left( \frac{b}{c} \right)^b\right) \; .
\]
\end{lemma}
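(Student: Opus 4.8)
The plan is to reduce the claim, via a precise Stirling (Binet) expansion of the three gamma functions, to an elementary one-variable estimate. Since $a+b=c$ and $-\ln(e\sqrt\pi)=-1-\tfrac12\ln\pi$, the stated inequality is equivalent to showing that
\[
\Phi := \ln\Gamma(a+\tfrac12)+\ln\Gamma(b+\tfrac12)-\tfrac12\ln\pi-\ln\Gamma(c+1) +1+\tfrac12\ln c - a\ln\tfrac ac - b\ln\tfrac bc \ \ge\ 0 .
\]
I would control each gamma by Binet's form of Stirling's formula, $\ln\Gamma(z)=(z-\tfrac12)\ln z - z+\tfrac12\ln(2\pi)+\mu(z)$ with $0<\mu(z)<\tfrac{1}{12z}$, which holds for every real $z>0$ and so applies to $z=a+\tfrac12,\ b+\tfrac12,\ c+1$, including the boundary cases $a=0$ or $b=0$ (where $\Gamma(\tfrac12)=\sqrt\pi$).

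After substituting and cancelling the leading terms (this is the routine bookkeeping), the expression collapses to
\[
\Phi = u(a)+u(b) - \left(c+\tfrac12\right)\ln\!\left(1+\tfrac1c\right) + \tfrac12\ln 2 + 1 + D ,
\]
where $u(x):=x\ln(1+\tfrac1{2x})$ (with $u(0)=0$) and $D:=\mu(a+\tfrac12)+\mu(b+\tfrac12)-\mu(c+1)$; since the $\mu$'s are positive I keep only the one-sided bound $D>-\mu(c+1)>-\tfrac{1}{12(c+1)}$. The key step is to remove the two-variable dependence. A direct computation gives $u''(x)=-\tfrac{1}{x(2x+1)^2}<0$, so $u$ is concave on $[0,\infty)$; hence $a\mapsto u(a)+u(c-a)$ is concave on $[0,c]$ and is minimized at an endpoint, which yields $u(a)+u(b)\ge u(c)=c\ln(1+\tfrac1{2c})$.

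Plugging this lower bound in reduces everything to the single-variable inequality
\[
\psi(c):=c\ln\!\left(1+\tfrac1{2c}\right)-\left(c+\tfrac12\right)\ln\!\left(1+\tfrac1c\right)+\tfrac12\ln2+1 \ \ge\ \tfrac{1}{12(c+1)} \qquad (c\ge1),
\]
which I would settle by elementary calculus: $\psi$ runs between $\psi(1)=1-\ln2+\ln\tfrac32\approx0.71$ and $\psi(\infty)=\tfrac12+\tfrac12\ln2\approx0.85$, staying far above the right-hand side (which never exceeds $\tfrac1{24}$), so a crude monotonicity/derivative check finishes it.

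The main obstacle is not any single hard estimate but keeping the Stirling remainders $\mu(\cdot)$ tracked with the correct sign while the dominant terms nearly cancel: the bound is tight up to the factor $1/e$, so the naive simplification $u(a),u(b)\ge0$ already fails at $c=1$. The concavity argument is exactly what rescues the small-$c$ regime, and it is essential to \emph{include} the endpoints $a=0$ and $b=0$ in the domain of the optimization, since that is precisely where $u(a)+u(b)$ is smallest and where the boundary gamma value $\Gamma(\tfrac12)=\sqrt\pi$ must be handled by the limiting convention $u(0)=0$.
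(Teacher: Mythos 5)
Your proof is correct, and it takes a genuinely different route from the paper's. The paper quotes the inequality
\[
\frac{\Gamma(a+\tfrac12)\,\Gamma(b+\tfrac12)}{\Gamma(a+b+1)} \ \ge\ \sqrt{2\pi}\,\frac{(a+\tfrac12)^a\,(b+\tfrac12)^b}{(a+b+1)^{a+b+1/2}}
\]
from Whittaker and Watson, reduces the lemma to $\frac{(1+\frac{1}{2a})^a (1+\frac{1}{2b})^b}{(1+\frac{1}{a+b})^{a+b+1/2}} > \frac{1}{e\sqrt{2}}$, and settles that with nothing more than $1 \le (1+1/x)^x < e$. You instead expand all three gamma factors directly via Binet's form of Stirling with the remainder bound $0 < \mu(z) < \frac{1}{12z}$. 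I checked your bookkeeping and it is exact: the terms do collapse to $\Phi = u(a)+u(b)-(c+\tfrac12)\ln(1+\tfrac1c)+\tfrac12\ln 2+1+D$ with $D > -\frac{1}{12(c+1)}$ (a numerical spot-check at $a=b=\tfrac12$, $c=1$ confirms it), the concavity computation $u''(x) = -\frac{1}{x(2x+1)^2}$ and the endpoint reduction $u(a)+u(b)\ge u(c)$ are right, and the final one-variable inequality holds with large margin: $\psi'(c)>0$ on $[1,\infty)$ follows from $\ln(1+x)>\frac{x}{1+x}$ and the trapezoid overestimate $\ln\frac{c+1}{c} < \tfrac12\left(\tfrac1c+\tfrac1{c+1}\right)$, so $\psi(c)\ge\psi(1)=1-\ln 2+\ln\tfrac32 \approx 0.71 \gg \tfrac{1}{24}$. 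What each approach buys: yours is self-contained modulo a classical remainder bound and identifies where the inequality is nearly tight (the endpoints $a=0$ or $b=0$), while the paper's is shorter but outsources the real content to a cited exercise that is itself proved by Stirling-type estimates of the kind you carry out explicitly.

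One factual correction to a side remark: your claim that the naive simplification $u(a),u(b)\ge 0$ ``already fails at $c=1$'' is wrong. With $D$ tracked as you do, dropping $u(a)+u(b)$ entirely leaves $\Phi \ge 1+\tfrac12\ln 2 - (c+\tfrac12)\ln(1+\tfrac1c) - \frac{1}{12(c+1)}$; since $(c+\tfrac12)\ln(1+\tfrac1c)$ is decreasing on $[1,\infty)$ (same trapezoid estimate), the worst case is $c=1$, where the bound equals $1-\ln 2-\tfrac{1}{24}\approx 0.27>0$. Indeed, the paper's own final step is precisely this naive bound, in the guise of $(1+\frac{1}{2a})^a \ge 1$; the $e$ inside $-\ln(e\sqrt\pi)$ supplies the needed slack everywhere. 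So your concavity lemma is a valid strengthening, not a necessity---the proof stands either way.
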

\begin{proof}
From \cite{Whittaker-Watson-1962}[p. 263 Ex. 45], we have
\[
\frac{\Gamma(a+1/2)\Gamma(b+1/2)}{\Gamma(a+b+1)} \ge \sqrt{2\pi} \frac{(a+1/2)^a (b+1/2)^b}{(a+b+1)^{a+b+1/2}} \; .
\]
It remains to show that
\[
\sqrt{2\pi} \frac{(a+1/2)^a (b+1/2)^b}{(a+b+1)^{a+b+1/2}} > \frac{\sqrt{\pi}}{e} \frac{1}{\sqrt{a+b}} \left( \frac{a}{a+b} \right)^a \left( \frac{b}{a+b} \right)^b \; ,
\]
which is equivalent to
\[
\frac{(1+\frac{1}{2a})^a (1+\frac{1}{2b})^b}{(1+\frac{1}{a+b})^{a+b+1/2}} > \frac{1}{e\sqrt{2}} \; .
\]
From the inequality $1 \le (1+1/x)^x < e$ valid for any $x \ge 0$, it follows
that $1 \le (1+\frac{1}{2a})^a < \sqrt{e}$ and $1 \le (1+\frac{1}{2b})^b < \sqrt{e}$
and $1 \le (1+1/(a+b))^{a+b} < e$. Hence,
\[
\frac{(1+\frac{1}{2a})^a (1+\frac{1}{2b})^b}{(1+\frac{1}{a+b})^{a+b+1/2}}
> \frac{1}{e \sqrt{1 + \frac{1}{a+b}}}
\ge \frac{1}{e \sqrt{2}} \; . \qedhere
\]
\end{proof}

\begin{lemma}
\label{lemma:ratio_gamma}
Let $\delta \geq 0$. Then
\[
\frac{\Gamma(\delta+1)}{2^\delta \Gamma(\frac{\delta+1}{2})^2}
\geq \frac{\sqrt{\delta+1}}{\pi}~.
\]
\end{lemma}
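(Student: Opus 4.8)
The plan is to reduce the claim to a one-variable monotonicity statement and then to settle the sign of a derivative by concavity of the digamma function. First I would apply the Legendre duplication formula $\Gamma(z)\Gamma(z+\tfrac12)=2^{1-2z}\sqrt{\pi}\,\Gamma(2z)$ with $z=\tfrac{\delta+1}{2}$, which rewrites the left-hand side as
\[
\frac{\Gamma(\delta+1)}{2^\delta\,\Gamma(\tfrac{\delta+1}{2})^2}=\frac{1}{\sqrt{\pi}}\cdot\frac{\Gamma(\tfrac{\delta+2}{2})}{\Gamma(\tfrac{\delta+1}{2})} \; .
\]
Thus the claimed inequality is equivalent to $\Gamma(\tfrac{\delta+2}{2})/\Gamma(\tfrac{\delta+1}{2})\ge\sqrt{(\delta+1)/\pi}$, and it suffices to prove this latter form for all $\delta\ge0$.

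Next I would study the logarithm of the defect, $\phi(\delta)=\ln\Gamma(\tfrac{\delta+2}{2})-\ln\Gamma(\tfrac{\delta+1}{2})-\tfrac12\ln(\delta+1)+\tfrac12\ln\pi$, and establish $\phi(\delta)\ge0$ by showing that $\phi(0)=0$ and that $\phi$ is non-decreasing on $[0,\infty)$. The value $\phi(0)=0$ follows from $\Gamma(1)=1$ and $\Gamma(\tfrac12)=\sqrt\pi$; this is precisely the equality case $\delta=0$, which signals that the estimate is tight and that a monotonicity argument, rather than a crude one-sided bound, is needed. Writing $a=\tfrac{\delta+1}{2}$, so that $\tfrac{\delta+2}{2}=a+\tfrac12$ and $\delta+1=2a$, differentiation gives
\[
\phi'(\delta)=\tfrac12\,\psi\!\left(a+\tfrac12\right)-\tfrac12\,\psi(a)-\tfrac{1}{4a} \; ,
\]
where $\psi=\Gamma'/\Gamma$ denotes the digamma function.

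The key step is to recognize this as a midpoint-concavity expression. Using the functional equation $\Gamma(a+1)=a\,\Gamma(a)$ in the form $\tfrac1a=\psi(a+1)-\psi(a)$, I would rewrite $\tfrac{1}{4a}=\tfrac14\psi(a+1)-\tfrac14\psi(a)$, so that
\[
\phi'(\delta)=\tfrac12\Big(\psi\!\left(a+\tfrac12\right)-\tfrac12\psi(a)-\tfrac12\psi(a+1)\Big) \; .
\]
Since $\psi''(x)=-2\sum_{n\ge0}(x+n)^{-3}<0$ for $x>0$, the digamma function is concave on $(0,\infty)$, whence $\psi(a+\tfrac12)=\psi\!\big(\tfrac{a+(a+1)}{2}\big)\ge\tfrac12\psi(a)+\tfrac12\psi(a+1)$. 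Therefore $\phi'(\delta)\ge0$ for every $\delta\ge0$, and combined with $\phi(0)=0$ this yields $\phi(\delta)\ge0$, which is the desired inequality.

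The main obstacle is the algebraic massaging of $\phi'$: one must convert the rational term $\tfrac1{4a}$ into digamma values via the functional equation so that the derivative becomes visibly the Jensen gap of the concave function $\psi$ at the midpoint of $a$ and $a+1$. Once this is in place everything is routine, and it is worth noting that the naive alternative of invoking log-convexity of $\Gamma$ directly fails, since it produces the upper bound $\Gamma(a+\tfrac12)/\Gamma(a)\le\sqrt{a}$ rather than the lower bound we require; this is exactly why the argument must pass through the concavity of $\psi$ at the level of the derivative.
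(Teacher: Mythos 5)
Your proof is correct, and while it shares the paper's overall strategy---verify equality at $\delta=0$ and show the logarithmic defect has nonnegative derivative---the way you settle the sign of the derivative is genuinely different and more elegant. The paper lower-bounds its derivative, $\psi(\delta+1)-\tfrac{1}{2(\delta+1)}-\ln 2-\psi(\tfrac{\delta+1}{2})$, by importing quantitative digamma estimates from the literature (an upper bound of Chen and a lower bound of Batir), then grinding through the elementary inequality $\ln(1-x)\ge -x-(4\ln 2-2)x^2$ and a final comparison of numerical constants. You instead use the Legendre duplication formula to collapse the ratio to $\Gamma(\tfrac{\delta+2}{2})/\bigl(\sqrt{\pi}\,\Gamma(\tfrac{\delta+1}{2})\bigr)$---and note that by the digamma duplication identity $\psi(2z)=\tfrac12\psi(z)+\tfrac12\psi(z+\tfrac12)+\ln 2$, your $\phi'$ is literally the same expression as the paper's derivative---then use the functional equation $\tfrac1a=\psi(a+1)-\psi(a)$ to rewrite $\phi'$ as half the midpoint Jensen gap $\psi(a+\tfrac12)-\tfrac12\psi(a)-\tfrac12\psi(a+1)$, which is nonnegative because $\psi''(x)=-2\sum_{n\ge 0}(x+n)^{-3}<0$ makes $\psi$ concave. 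Your argument is self-contained (no citations to external digamma bounds, no numerical verification), and it explains \emph{why} the derivative is nonnegative rather than merely checking it; it also handles the tight equality case at $\delta=0$ with no slack to spare, which is exactly where crude one-sided bounds---as you correctly observe with the log-convexity estimate $\Gamma(a+\tfrac12)/\Gamma(a)\le\sqrt{a}$---point in the wrong direction. The paper's more computational route buys nothing extra here; yours is the cleaner proof of the same monotonicity statement.
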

\begin{proof}
We will prove the equivalent statement that
\[
\ln \frac{\Gamma(\delta+1) \pi}{2^\delta \Gamma(\frac{\delta+1}{2})^2 \sqrt{\delta+1}} \geq 0~.
\]
The inequality holds with equality in $\delta=0$, so it is enough to prove that
the derivative of the left-hand side is positive for $\delta > 0$. The
derivative of the left-hand side is equal to
\[
\Psi(\delta+1) - \frac{1}{2(\delta+1)} -\ln(2) - \Psi\left(\frac{\delta+1}{2} \right) \; ,
\]
where $\Psi(x)$ is the digamma function.

We will use the upper~\citep{Chen-2005} and lower bound~\citep{Batir-2008}
to the digamma function, which state that for any $x>0$,
\begin{align*}
\Psi(x) &< \ln(x) -\frac{1}{2x} -\frac{1}{12 x^2} +\frac{1}{120 x^4} \\
\Psi(x+1) &> \ln \left(x+\frac{1}{2} \right) \; .
\end{align*}
Using these bounds we have
\begin{align*}
&\Psi(\delta+1) - \frac{1}{2(\delta+1)} - \ln(2) - \Psi \left(\frac{\delta+1}{2} \right) \\
&\quad \ge \ln \left(\delta+\frac{1}{2} \right) - \frac{1}{2(\delta+1)} -\ln(2) -\ln \left(\frac{\delta+1}{2}\right) + \frac{1}{\delta+1} +\frac{1}{3 (\delta+1)^2} -\frac{2}{15 (\delta+1)^4}\\
&\quad = \ln \left(1-\frac{1}{2(\delta+1)} \right)+ \frac{1}{2(\delta+1)}+\frac{1}{3 (\delta+1)^2} -\frac{2}{15 (\delta+1)^4}\\
&\quad \ge - \frac{(4\ln(2)-2)}{4(\delta+1)^2}+\frac{1}{3 (\delta+1)^2}-\frac{2}{15 (\delta+1)^4}\\
&\quad = \frac{[15(1/2-\ln(2)))+5](\delta+1)^2-2}{15 (\delta+1)^4}\\
&\quad \ge \frac{[15(1/2-\ln(2)))+5]-2}{15 (\delta+1)^4}\geq0\\
\end{align*}
where in the second inequality we used the elementary inequality $\ln(1-x) \geq -x - (4\ln(2)-2)x^2$ valid for $x \in [0,.5]$.
\end{proof}

\begin{lemma}[Lower Bound on Shifted KT Potential]
\label{lemma:lower_bound_gamma}
Let $T \ge 1$, $\delta \ge 0$, and $x \in [-T,T]$. Then
\begin{align*}
\frac{2^T \cdot \Gamma(\delta+1) \Gamma \left(\frac{T+\delta+1}{2} + \frac{x}{2} \right) \cdot \Gamma \left(\frac{T+\delta+1}{2} - \frac{x}{2} \right)}{ \Gamma(\frac{\delta+1}{2})^2 \Gamma(T+\delta+1)}
&\geq\exp\left(\frac{x^2}{2(T+\delta)} +\frac{1}{2} \ln \left(\frac{1+\delta}{T+\delta}\right) - \ln(e \sqrt{\pi})\right) \;.
\end{align*}
\end{lemma}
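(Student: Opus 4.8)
The plan is to prove the inequality by taking logarithms and observing that the left-hand side factors naturally into a part depending only on $\delta$ and a part depending on $T$ and $x$, each of which is controlled by one of the two preceding technical lemmas. Writing $c = T+\delta$, I would match the $T$-dependent gamma factors to Lemma~\ref{lemma:approx_gamma_real} via the substitution $a = \tfrac{T+\delta+x}{2}$ and $b = \tfrac{T+\delta-x}{2}$, so that $a+\tfrac12 = \tfrac{T+\delta+1}{2}+\tfrac{x}{2}$, $b+\tfrac12 = \tfrac{T+\delta+1}{2}-\tfrac{x}{2}$, and $a+b = c = T+\delta$. First I would check the hypotheses: since $T\ge 1$ and $\delta\ge 0$ we have $c\ge 1$, and since $x\in[-T,T]$ we have $a,b \ge \tfrac{\delta}{2}\ge 0$, so the lemma applies.

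Applying Lemma~\ref{lemma:approx_gamma_real} to the factor $\Gamma(\tfrac{T+\delta+1}{2}+\tfrac{x}{2})\,\Gamma(\tfrac{T+\delta+1}{2}-\tfrac{x}{2})/(\pi\,\Gamma(T+\delta+1))$ gives a lower bound in terms of $-\ln(e\sqrt{\pi}) - \tfrac12\ln(T+\delta) + \ln\big((a/c)^a (b/c)^b\big)$. The next step is to rewrite the entropy-like term as a Kullback--Leibler divergence: with $p = a/c = \tfrac12 + \tfrac{x}{2(T+\delta)}$ one has $\ln\big((a/c)^a(b/c)^b\big) = (T+\delta)\,\big(\KL{p}{\tfrac12} - \ln 2\big)$, and then Pinsker's inequality (exactly as used for~\eqref{eq:opt_wealth}) yields $(T+\delta)\,\KL{p}{\tfrac12} \ge \tfrac{x^2}{2(T+\delta)}$. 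This produces the desired quadratic term $\tfrac{x^2}{2(T+\delta)}$ together with a leftover $-(T+\delta)\ln 2$.

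Finally, I would collect the remaining $\delta$-only terms. The factor $T\ln 2$ from the leading $2^T$ combines with the $-(T+\delta)\ln2$ above to give $-\delta\ln 2$, which together with $\ln\Gamma(\delta+1) - 2\ln\Gamma(\tfrac{\delta+1}{2})$ is bounded below by $\tfrac12\ln(\delta+1) - \ln\pi$ using Lemma~\ref{lemma:ratio_gamma}. Adding up the $\tfrac12\ln\pi$ coming from restoring the $\pi$ in the denominator of Lemma~\ref{lemma:approx_gamma_real}, all the constants assemble into $\tfrac12\ln\!\big(\tfrac{1+\delta}{T+\delta}\big) - \ln(e\sqrt{\pi})$, matching the claimed right-hand side exactly. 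The calculation is essentially a bookkeeping exercise once the two lemmas and Pinsker are in hand; the one nonobvious step — and the main thing to get right — is the clean split of the expression so that the $(T,x)$-dependence is routed entirely through Lemma~\ref{lemma:approx_gamma_real} and Pinsker while the $\delta$-dependence is routed entirely through Lemma~\ref{lemma:ratio_gamma}, which is what makes the constants cancel precisely rather than leaving slack.
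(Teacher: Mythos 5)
Your proposal is correct and follows essentially the same route as the paper's own proof: both arguments channel the $(T,x)$-dependence through Lemma~\ref{lemma:approx_gamma_real} with $a=\tfrac{T+\delta+x}{2}$, $b=\tfrac{T+\delta-x}{2}$, $c=T+\delta$, rewrite the entropy term as $(T+\delta)\,\KL{\tfrac{1}{2}+\tfrac{x}{2(T+\delta)}}{\tfrac{1}{2}}$ and apply Pinsker, and channel the $\delta$-dependence through Lemma~\ref{lemma:ratio_gamma}. The only difference is the order of bookkeeping (the paper applies Lemma~\ref{lemma:ratio_gamma} before Lemma~\ref{lemma:approx_gamma_real} rather than collecting the $\delta$-terms at the end), and your constant accounting, including the $\tfrac{1}{2}\ln\pi$ and $-\delta\ln 2$ cancellations and the hypothesis check $a,b\ge\tfrac{\delta}{2}\ge 0$, $c\ge 1$, is exactly right.
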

\begin{proof}
Using Lemma~\ref{lemma:approx_gamma_real}, we have
\begin{align*}
&\ln \frac{2^T \cdot \Gamma(\delta+1) \Gamma \left(\frac{T+\delta+1}{2} + \frac{x}{2} \right) \cdot \Gamma \left(\frac{T+\delta+1}{2} - \frac{x}{2} \right)}{ \Gamma(\frac{\delta+1}{2})^2 \Gamma(T+\delta+1)} \\
&\quad \geq \ln \frac{2^{T+\delta} \sqrt{\delta+1} \cdot \Gamma \left(\frac{T+\delta+1}{2} + \frac{x}{2} \right) \cdot \Gamma \left(\frac{T+\delta+1}{2} - \frac{x}{2} \right)}{ \pi \Gamma(T+\delta+1)} \\
&\quad \geq -\ln(e \sqrt{\pi}) +\frac{1}{2} \ln\left(\frac{1+\delta}{T+\delta}\right) +\ln \left(\left( 1+\frac{x}{T+\delta} \right)^\frac{T+\delta+x}{2} \left( 1+\frac{x}{T+\delta} \right)^\frac{T+\delta-x}{2}\right) \\
&\quad = -\ln(e \sqrt{\pi}) +\frac{1}{2} \ln\left(\frac{1+\delta}{T+\delta}\right)+ (T+\delta) \, \KL{\frac{1}{2}+\frac{x}{2(T+\delta)}}{\frac{1}{2}} \\
&\quad \geq -\ln(e \sqrt{\pi}) +\frac{1}{2} \ln\left(\frac{1+\delta}{T+\delta}\right)+ \frac{x^2}{2(T+\delta)} ,
\end{align*}
where in the first inequality we used Lemma~\ref{lemma:ratio_gamma}, in the second one Lemma~\ref{lemma:approx_gamma_real}, and in third one the known lower bound to the divergence $\KL{\frac{1}{2}+\frac{x}{2}}{\frac{1}{2}} \geq \frac{x^2}{2}$. Exponentiating and overapproximating, we get the stated bound.
\end{proof}

\subsection{Proof of Corollary~\ref{corollary:kt-hilbert-space-olo-regret}}

The Lambert function $W(x):[0,\infty) \to [0,\infty)$ is defined by the equality
\begin{equation}
\label{eq:lambert}
x=W(x) \exp \left(W(x)\right) \qquad \qquad \text{for $x \ge 0$}.
\end{equation}
The following lemma provides bounds on $W(x)$.

\begin{lemma}
\label{lemma:lambert}
The Lambert function satisfies $0.6321 \log(x+1) \leq W(x) \leq \log(x+1)$ for $x \ge 0$.
\end{lemma}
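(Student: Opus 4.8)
The plan is to eliminate $x$ in favour of the variable $y := W(x)$. Since $x = w e^w$ is strictly increasing on $w \ge 0$ with value $0$ at $w = 0$ and limit $+\infty$, its inverse $W$ maps $[0,\infty)$ bijectively onto $[0,\infty)$, is strictly increasing, and has $W(0) = 0$; thus as $x$ runs over $[0,\infty)$ the quantity $y$ runs over $[0,\infty)$, and \eqref{eq:lambert} becomes simply $x = y e^y$. Because both $\log(1+x)$ and $W(x)$ are increasing, each claimed inequality is equivalent, after this substitution, to a statement in the single real variable $y \ge 0$, which I would then settle by elementary calculus.

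For the upper bound $W(x) \le \log(1+x)$, substituting $x = y e^y$ and exponentiating turns the claim into $e^y \le 1 + y e^y$, i.e. $e^y(1-y) \le 1$. Writing $g(y) = e^y(1-y)$ we have $g(0) = 1$ and $g'(y) = -y e^y \le 0$ for $y \ge 0$, so $g$ is nonincreasing and $g(y) \le g(0) = 1$; this proves the upper bound.

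For the lower bound it suffices to prove the slightly stronger inequality with the cleaner constant $1 - 1/e > 0.6321$ in place of $0.6321$ (legitimate since $\log(1+x) \ge 0$). After the same substitution and exponentiation the target becomes $1 + y e^y \le \exp(\tfrac{e}{e-1}\,y)$ for $y \ge 0$. I would prove this by showing that $H(y) := \exp(\tfrac{e}{e-1}\,y) - 1 - y e^y$, which vanishes at $y = 0$, is nondecreasing. Dividing the inequality $H'(y) \ge 0$ through by $e^y$ and using $\tfrac{e}{e-1}y - y = \tfrac{y}{e-1}$ reduces it to $K(y) := \tfrac{e}{e-1}\exp(\tfrac{y}{e-1}) - 1 - y \ge 0$. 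Now $K$ is convex, so it suffices to check its value at the unique stationary point $y_0$, determined by $\exp(\tfrac{y_0}{e-1}) = \tfrac{(e-1)^2}{e}$; there $\tfrac{e}{e-1}\exp(\tfrac{y_0}{e-1}) = e-1$, whence $K(y_0) = (e-1) - 1 - y_0 = e - 2 - (e-1)(2\log(e-1) - 1) \approx 0.576 > 0$. Hence $K \ge 0$, so $H' \ge 0$, so $H \ge 0$, giving the lower bound.

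The only delicate part is the lower bound, where the reduction passes through two nested monotonicity arguments ($H$ via $K$); the positivity of $\min K = K(y_0)$ is exactly what pins down the admissible constant, and the small gap between $0.6321$ and $1 - 1/e$ reflects the slack $K(y_0) \approx 0.576 > 0$. The upper bound and the two one-variable calculus verifications are otherwise routine.
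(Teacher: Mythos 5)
Your proof is correct, and it takes a genuinely different route from the paper's. The paper proves the lower bound by self-bounding $W$ through the identity $W(x)=\log(x/W(x))$ together with the elementary inequality $\ln x \le \frac{a}{e}x^{1/a}$, which yields a parametric family $W(x) \ge \frac{1}{1+a}\log(aex)$; it then splits the domain at a numerically determined threshold $b=1.71825\ldots$, uses $a=1/x$ on one side (giving $W(x)\ge x/(x+1)$) and $a=\frac{x+1}{ex}$ on the other, and balances the two regimes to extract the constant $0.6321$, while the upper bound is imported from an external result of Hoorfar and Hassani with $C=1$. You instead substitute $y=W(x)$ (valid, since $W$ is an increasing bijection of $[0,\infty)$ onto itself), turning both bounds into one-variable inequalities: the upper bound becomes the trivial monotonicity of $e^y(1-y)$, and the lower bound---which you prove with the marginally stronger constant $1-1/e \approx 0.63212 > 0.6321$, legitimately since $\log(1+x)\ge 0$---reduces via $H' \ge 0$ and division by $e^y$ to the convex function $K(y)=\tfrac{e}{e-1}\exp(\tfrac{y}{e-1})-1-y$, whose minimum value $K(y_0)=e-2-(e-1)\bigl(2\log(e-1)-1\bigr)\approx 0.576$ is an explicit closed form whose positivity I verified. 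What your approach buys: it is fully self-contained (no citation for the upper bound), avoids the paper's case split and the numerical root-finding that defines $b$, and identifies the natural constant $1-1/e$ rather than its truncation; the only concession to numerics is the sign of one explicit constant. What the paper's approach buys: the intermediate family of bounds $W(x)\ge\frac{1}{1+a}\log(aex)$, tunable in $a$, which is potentially reusable, and the pointwise bound $W(x)\ge x/(x+1)$ for small $x$. Neither argument is tight (the true infimum of $W(x)/\log(1+x)$ is roughly $0.72$), and your constant suffices for every use of the lemma in the paper.
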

\begin{proof}
The inequalities are satisfied for $x=0$, hence we in the following we assume $x>0$.
We first prove the lower bound. From \eqref{eq:lambert} we have
\begin{equation}
W(x) = \log\left(\frac{x}{W(x)}\right)~. \label{eq:lm_lambert_1}
%&= \log\left(\frac{x}{\log(x/W(x))}\right). \label{eq:lm_lambert_1b}
\end{equation}
From the first equality, using the elementary inequality $\ln (x) \leq \frac{a}{e} x^\frac{1}{a}$ for any $a>0$, we get
\[
W(x) \leq \frac{1}{a\, e}\left(\frac{x}{W(x)}\right)^a  \ \ \forall a>0,
\]
that is
\begin{equation}
\label{eq:lm_lambert_2}
W(x) \leq \left(\frac{1}{a\, e}\right)^\frac{1}{1+a} x^\frac{a}{1+a} \ \ \forall a>0.
\end{equation}
Using \eqref{eq:lm_lambert_2} in \eqref{eq:lm_lambert_1}, we have
\begin{align*}
W(x)
\geq \log\left(\frac{x}{\left(\frac{1}{a\, e}\right)^\frac{1}{1+a} x^\frac{a}{1+a}}\right)
= \frac{1}{1+a}\log\left(a \, e\, x\right) \ \ \forall a>0~.
\end{align*}
Consider now the function $g(x)=\frac{x}{x+1} - \frac{b}{\log(1+b) (b+1)}
\log(x+1), x\geq b$. This function has a maximum in $x^*=(1+\frac{1}{b})
\log(1+b)-1$, the derivative is positive in $[0,x^*]$ and negative in
$[x^*,b]$. Hence the minimum is in $x=0$ and in $x=b$, where it is equal to
$0$.  Using the property just proved on $g$, setting $a=\frac{1}{x}$, we have
\begin{align*}
W(x)
\geq \frac{x}{x+1} \geq \frac{b}{\log(1+b) (b+1)} \log(x+1) \ \  \forall x\leq b~.
\end{align*}
For $x>b$, setting $a=\frac{x+1}{e x}$, we have
\begin{align}
W(x)
&\geq \frac{e\,x}{(e+1) x + 1} \log(x+1) \geq \frac{e\,b}{(e+1) b + 1} \log(x+1)
\end{align}
Hence, we set $b$ such that
\[
\frac{e\, b}{(e+1)b + 1} = \frac{b}{\log(1+b) (b+1)}
\]
Numerically, $b=1.71825...$, so
\[
W(x) \geq 0.6321 \log(x+1)~.
\]

For the upper bound, we use Theorem~2.3 in \cite{Hoorfar-Hassani-2008}, that says that
\[
W(x) \leq \log\frac{x+C}{1+\log(C)}, \quad \forall x> -\frac{1}{e}, \ C>\frac{1}{e}.
\]
Setting $C=1$, we obtain the stated bound.
\end{proof}

\begin{lemma}
\label{lemma:dual_exp_square}
Define $f(x)= \beta \exp\frac{x^2}{2 \alpha}$, for $\alpha,\beta>0$, $x\geq0$. Then
\[
f^*(y)=y \sqrt{\alpha W\left(\frac{\alpha y^2}{\beta^2}\right)} - \beta \exp\left(\frac{W\left(\frac{\alpha y^2}{\beta^2}\right)}{2}\right).
\]
Moreover
\[
f^*(y) \leq y \sqrt{\alpha \log \left(\frac{\alpha y^2}{\beta^2} +1 \right)} - \beta.
\]
\end{lemma}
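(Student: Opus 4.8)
The plan is to compute the conjugate directly from its definition $f^*(y) = \sup_{x \ge 0}\left(xy - f(x)\right)$ and then bound it using Lemma~\ref{lemma:lambert}. Since $f(x) = \beta \exp\frac{x^2}{2\alpha}$ is prescribed only for $x \ge 0$, I would read the supremum as ranging over $x \ge 0$, and I would focus on $y \ge 0$, which is the case relevant for the regret bounds where $y$ plays the role of $\norm{u}$.

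First I would locate the maximizer. Writing $g(x) = xy - \beta\exp\frac{x^2}{2\alpha}$, its derivative is $g'(x) = y - \frac{\beta x}{\alpha}\exp\frac{x^2}{2\alpha}$. The subtracted term vanishes at $x=0$, is strictly increasing on $[0,\infty)$, and tends to $+\infty$; hence $g'$ is nonnegative at $x=0$ and changes sign exactly once, so the unique stationary point is the global maximizer $x^*$ over the restricted domain.

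Next I would solve $g'(x^*)=0$, i.e. $y\alpha = \beta x^*\exp\frac{(x^*)^2}{2\alpha}$, by a substitution that produces the Lambert form. Squaring and setting $u = \frac{(x^*)^2}{\alpha}$ turns the equation into $\frac{\alpha y^2}{\beta^2} = u\, e^{u}$, so by the defining relation \eqref{eq:lambert} of $W$ we obtain $u = W\!\left(\frac{\alpha y^2}{\beta^2}\right)$, that is $x^* = \sqrt{\alpha W\!\left(\frac{\alpha y^2}{\beta^2}\right)}$. Substituting this back, using $\frac{(x^*)^2}{2\alpha} = \tfrac{1}{2}W\!\left(\frac{\alpha y^2}{\beta^2}\right)$, into $g(x^*)$ yields exactly the claimed closed form for $f^*(y)$.

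For the stated upper bound I would abbreviate $W = W\!\left(\frac{\alpha y^2}{\beta^2}\right)$ and estimate the two terms separately. Since $y \ge 0$, the upper bound $W \le \log\!\left(\frac{\alpha y^2}{\beta^2}+1\right)$ from Lemma~\ref{lemma:lambert} gives $y\sqrt{\alpha W} \le y\sqrt{\alpha \log\!\left(\frac{\alpha y^2}{\beta^2}+1\right)}$; and since the argument of $W$ is nonnegative we have $W \ge 0$, whence $\beta\exp\frac{W}{2} \ge \beta$. Combining these yields $f^*(y) \le y\sqrt{\alpha \log\!\left(\frac{\alpha y^2}{\beta^2}+1\right)} - \beta$. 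The only mildly delicate point is confirming that the stationary point is a genuine global maximum over $x \ge 0$ (and that the squaring introduces no spurious root, which is immediate since all quantities are nonnegative); everything else is the bookkeeping of the Lambert substitution.
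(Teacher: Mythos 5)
Your proof is correct and takes essentially the same route as the paper's: both solve the first-order condition $y = f'(x^*)$ via the Lambert substitution $x^* = \sqrt{\alpha W\left(\frac{\alpha y^2}{\beta^2}\right)}$ and then invoke Lemma~\ref{lemma:lambert} to get the logarithmic upper bound. If anything you are slightly more complete, since you derive the exact closed form for $f^*(y)$ and verify that the stationary point is the global maximizer over $x \ge 0$, while the paper's terse proof jumps directly to the inequality $f^*(y) \le x^* y - \beta$ (implicitly using $f(x^*) \ge \beta$) and leaves those steps unstated.
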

\begin{proof}
From the definition of Fenchel dual, we have
\begin{align*}
f^*(y)= \max_{x} \  x\, y - f(x) = \max_{x} \  x\, y - \beta \exp\frac{x^2}{2 \alpha} \leq x^*\,y -\beta
\end{align*}
where $x^*= \argmax_{x} x\, y - f(x)$. We now use the fact that $x^*$ satisfies $y = f'(x^*)$, to have
\begin{align*}
x^*=\sqrt{\alpha W\left(\frac{\alpha y^2}{\beta^2}\right)},
\end{align*}
where $W(\cdot)$ is the Lambert function.
Using Lemma~\ref{lemma:lambert}, we obtain the stated bound.
\end{proof}

\begin{proof}[Proof of Corollary~\ref{corollary:kt-hilbert-space-olo-regret}]
Notice that the KT potential can be written as
\[
F_t(x) = \epsilon \cdot \frac{2^t \cdot \Gamma(1) \Gamma \left(\frac{t+1}{2} + \frac{x}{2} \right) \cdot \Gamma \left(\frac{t+1}{2} - \frac{x}{2} \right)}{ \Gamma(\frac{1}{2})^2 \Gamma(t+1)} \; .
\]
Using Lemma~\ref{lemma:lower_bound_gamma} with $\delta = 0$ we can lower bound $F_t(x)$ with
\[
H_t(x) = \epsilon \cdot \exp\left(\frac{x^2}{2t} + \frac{1}{2} \ln \left(\frac{1}{t} \right) - \ln (e \sqrt{\pi}) \right) \; .
\]
Since $H_t(x) \le F_t(x)$, we have $F^*_t(x) \le H_t^*(x)$. Using Lemma~\ref{lemma:dual_exp_square}, we have
\[
\forall u \in \H \qquad \qquad
F^*_T\left(\norm{u}\right)
\le H^*_T\left(\norm{u}\right)
\le \sqrt{T \log \left(\frac{24 T^2 \norm{u}^2}{\epsilon^2} +1 \right)}+\epsilon\left(1-\frac{1}{e \sqrt{ \pi T}}\right).
\]
An application of Theorem~\ref{theorem:hilbert-space-olo-regret-bound} completes the proof.
\end{proof}

\subsection{Proof of Corollary~\ref{corollary:kt-experts-regret}}

\begin{proof}
Let
\begin{align*}
F_t(x) & = \frac{2^t \cdot \Gamma(\delta + 1) \Gamma(\frac{t+\delta+1}{2} + \frac{x}{2}) \Gamma(\frac{t+\delta+1}{2} - \frac{x}{2})}{\Gamma(\frac{\delta+1}{2})^2 \Gamma(t+\delta+1)} \; , \\
H_t(x) & = \exp\left(\frac{x^2}{2(t+\delta)} +\frac{1}{2} \ln \left(\frac{1+\delta}{t+\delta}\right) - \ln(e \sqrt{\pi})\right) \; .
\end{align*}
Let $f_t(x) = \ln(F_t(x))$ and $h_t(x)=\ln(H_t(x))$. By Lemma~\ref{lemma:lower_bound_gamma}, $H_t(x) \le F_t(x)$
and therefore $f^{-1}_t(x) \leq h^{-1}_t(x)$ for all $x \ge 0$.
Theorem~\ref{theorem:regret-bound-experts} implies that
\[
\forall u \in \Delta_t \qquad \qquad
\Regret_t(u) \le f_t^{-1}(\KL{u}{\pi}) \le h_t^{-1}(\KL{u}{\pi}) \; .
\]
Setting $t=T$ and $\delta = T/2$, and overapproximating $h_t^{-1}(\KL{u}{\pi})$ we get the stated bound.
\end{proof}

\end{document}